\newacronym{rhs}{RHS}{right-hand side}
\newacronym{lhs}{LHS}{left-hand side}
\newacronym{ssc}{SSC}{sparse subspace clustering}
\newacronym{pca}{PCA}{principal component analysis}
\newacronym{gpca}{GPCA}{generalized principal component analysis}
\newacronym{rip}{RIP}{restricted isometry property}
\newacronym{svd}{SVD}{singular value decomposition}
\newacronym{jl}{JL}{Johnson-Lindenstrauss}
\newacronym{cs}{CS}{compressive sensing}
\newacronym{rp}{RP}{random projections}
\newacronym{rsc}{RSC}{robust subspace clustering}
\newacronym{omp}{OMP}{orthogonal matching pursuit}
\newacronym{mp}{MP}{matching pursuit}
\newacronym{tsc}{TSC}{thre\-shol\-ding-based subspace clustering}
\newacronym{bp}{BP}{basis pursuit}
\newacronym{ce}{CE}{clustering error}
\newacronym{admm}{ADMM}{alternating direction method of multipliers}
\newacronym{fde}{FDE}{feature detection error}
\newacronym{tp}{TP}{true positives}
\newacronym{fp}{FP}{false positives}
\newacronym{tpr}{TPR}{true positive rate}
\newacronym{fpr}{FPR}{false positive rate}
\newacronym{nfc}{NFC}{no false connections}
\newacronym{nsn}{NSN}{nearest subspace neighbor}
\newacronym{dd}{DD}{data-dependent}
\newacronym{di}{DI}{data-independent}
\newacronym{sc}{SC}{spectral clustering}
\pgfplotsset{width=7cm,compat=1.3}
\newcommand{\rev}[1]{#1}
\newcommand{\revb}[1]{#1}
\newtheorem{theorem}{Theorem}
\newtheorem{lemma}{Lemma}
\newtheorem{definition}{Definition}
 \newcommand{\mc}[0]{\mathcal }
 \newcommand{\mb}[0]{\mathbb }
\newcommand\norm[2][\Tnorm]{\ensuremath{{\left\Vert #2 \right\Vert}_{#1}}}
\newcommand\Tinnerprod{}
\newcommand{\innerprod}[3][\Tinnerprod]{\ifthenelse{\equal{#1}{}}{\ensuremath{\left<#2,#3\right>}}{\ensuremath{\left<#2,#3\right>_{#1}}}}
\newcommand\vect[1]{\mathbf #1}
\definecolor{DarkBlue}{rgb}{0.1,0.1,0.5}
\definecolor{BrickRed}{RGB}{203,65,84}
\newcommand{\US}[1]{\mathbb S^{#1-1}} 
\newcommand\PR[1]{\ensuremath{ {\mathrm{P}}\!\left[#1\right]}}
\newcommand\Tex{}
\newcommand\EX[2][\Tex]{
\ifthenelse{\equal{#1}{}}{{\mathbb E}\!\left[#2\right]}{\ensuremath{{\mathbb E}_{#1}\left[ #2\right]}}}
\newcommand\Var[2][\Tex]{
\ifthenelse{\equal{#1}{}}{{\mathrm{Var} }[#2]}{\ensuremath{\mathrm{Var}_{#1}\left[ #2\right]}}}
\newcommand\ignore[1]{}
\newcommand\defeq{\coloneqq}
\newcommand{\reals}{\mathbb R} 
\newcommand\comp[1]{ \overline{#1}}
\newcommand{\pinv}[1]{  {#1}^{ \dagger } } 
\newcommand{\inv}[1]{  {#1}^{ -1 } } 
\newcommand{\maxsingv}[1]{\sigma_{\max} (#1) }
\newcommand{\minsingv}[1]{\sigma_{\min} (#1) }
\newcommand{\maxsingvb}[1]{\sigma_{\max}\! \left( #1 \right) }
\newcommand{\minsingvb}[1]{\sigma_{\min}\! \left( #1 \right) }
\newcommand{\herm}[1]{{#1}^T} 
\newcommand{\transp}[1]{{#1}^T} 
\newcommand{\range}{\mc R}
\renewcommand{\d}{d} 
\newcommand{\aff}{\mathrm{aff}}
\newcommand{\cS}{S} 
\renewcommand{\d}{d} 
\newcommand{\va}{\vect{a}}  
\newcommand{\vb}{\vect{b}}
\newcommand{\vg}{\vect{g}}
\newcommand{\vq}{\vect{q}}
\newcommand{\vr}{\vect{r}}
\newcommand{\vv}{\vect{v}}  
\newcommand{\vx}{\vect{x}}  
\newcommand{\vy}{\vect{y}}  
\newcommand{\vz}{\vect{z}}
\newcommand{\mA}{\vect{A}}  
\newcommand{\mB}{\vect{B}} 
\newcommand{\mC}{\vect{C}} 
\newcommand{\mD}{\vect{D}}
\newcommand{\mE}{\vect{E}}
\newcommand{\mG}{\vect{G}}
\newcommand{\mI}{\vect{I}}
\newcommand{\mL}{\vect{L}}
\newcommand{\mP}{\vect{P}}
\newcommand{\mU}{\vect{U}}
\newcommand{\mV}{\vect{V}}
\newcommand{\mW}{\vect{W}}
\newcommand{\mX}{\vect{X}}
\newcommand{\mY}{\vect{Y}}
\newcommand{\mZ}{\vect{Z}}
\renewcommand{\S}{\mathcal T}
\newcommand{\q}{q} 
\newcommand{\s}{s}
\renewcommand{\l}{{\ell}} 
\newcommand{\subsind}[2]{{#1}^{( #2)}}
\newcommand{\Ul}{\subsind{\mU}{\l}}
\newcommand{\Ult}{\subsind{\tilde \mU}{\l}}
\newcommand{\Al}{\subsind{\mA}{\l}}
\newcommand{\Ak}{\subsind{\mA}{k}}
\newcommand{\Atl}{\subsind{\tilde \mA}{\l}}
\newcommand{\Xl}{\subsind{\mX}{\l}}
\newcommand{\Yl}{\subsind{\mY}{\l}}
\newcommand{\Zl}{\subsind{\mZ}{\l}}
\newcommand{\al}{\subsind{\va}{\l}}
\newcommand{\atl}{\subsind{\tilde \va}{\l}}
\newcommand{\xl}{\subsind{\vx}{\l}}
\newcommand{\yl}{\subsind{\vy}{\l}}
\newcommand{\zl}{\subsind{\vz}{\l}}
\newcommand{\Uk}{\subsind{\mU}{k}}
\newcommand{\xk}{\subsind{\vx}{k}}
\newcommand{\yk}{\subsind{\vy}{k}}
\newcommand{\zk}{\subsind{\vz}{k}}
\newcommand{\Pp}{\mP_\lVert}
\newcommand{\Po}{\mP_\perp}
\newcommand{\pp}{\lVert}
\newcommand{\po}{\perp}
\newcommand{\itr}{s}
\newcommand{\send}{{\itr_\mathrm{a}}}
\newcommand{\rps}{\vr_{\itr \pp}}
\newcommand{\ros}{\vr_{\itr \po}}
\newcommand{\rpsl}{\subsind{\vr}{\l}_{\itr \pp}}
\newcommand{\rosl}{\subsind{\vr}{\l}_{\itr \po}}
\newcommand{\rs}{\vr_\itr}
\newcommand{\rspr}{\vr_{\itr'}}
\newcommand{\rsl}{\subsind{\vr}{\l}_\itr}
\newcommand{\smaxparlb}{{\bar s}}
\newcommand{\qpsl}{\subsind{\vq}{\l}_{\itr \pp}}
\newcommand{\qosl}{\subsind{\vq}{\l}_{\itr \po}}
\newcommand{\qps}{\vq_{\itr \pp}}
\newcommand{\qs}{\vq_\itr}
\newcommand{\qsl}{\subsind{\vq}{\l}_\itr}
\newcommand{\qspr}{\vq_{\itr'}}
\newcommand{\qsprlm}{\subsind{\vq}{\l}_{\itr'-1}}
\renewcommand{\q}[1]{\vq_{#1}}
\newcommand{\cY}{\mc Y}
\renewcommand{\cS}{\mc S}
\newcommand{\abs}[1]{\left\lvert #1 \right\rvert}
\renewcommand{\innerprod}[2]{\left\langle #1, #2 \right\rangle}
\newcommand{\prob}[1]{\mathrm{P} \! \left[ #1 \right]}
\newcommand{\bsy}{\boldsymbol}
\newcommand{\event}[2]{\mc E_{#1}^{#2}}
\newcommand{\Ea}{\event{1}{(\l,i,\itr)}}
\newcommand{\Eb}{\event{2}{(\l,i,\itr)}}
\newcommand{\Ec}{\event{3}{(\l,i,\itr)}}
\newcommand{\Ed}{\event{4}{}}
\newcommand{\Ee}{\event{5}{(\l,i)}}
\newcommand{\Ef}{\event{6}{(\l,i)}}
\newcommand{\Eft}{{\tilde  {\mc E}}_{6}^{(\l,i)}}
\newcommand{\Eg}{\event{7}{(\l,i)}}
\newcommand{\Egt}{{\tilde  {\mc E}}_{7}^{(\l,i)}}
\newcommand{\Estar}{\mc E^\star}
\newcommand{\evcomp}[2]{\comp{\mc E}_{#1}^{(#2)}}
\newcommand{\Fg}{\mc F_7^{(\l,i)}}
\newcommand{\mpspar}{p_{\max}}
\newcommand{\tp}{\#\textsf{TP}}
\newcommand{\fp}{\#\textsf{FP}}
\newcommand{\tpr}{\textsf{TPR}}
\newcommand{\fpr}{\textsf{FPR}}
\newcommand{\vast}{\bBigg@{3}}
\newcommand{\Vast}{\bBigg@{4}}
\begin{document}

\title{Noisy Subspace Clustering via Matching Pursuits}

\author{Michael Tschannen and Helmut B{\"o}lcskei \thanks{The authors are with the Department of Information Technology and Electrical Engineering, ETH Zurich, Switzerland (e-mail: michaelt@nari.ee.ethz.ch; boelcskei@nari.ee.ethz.ch).}}

\maketitle

\vspace{-1cm}
\begin{abstract}
Sparsity-based subspace clustering algorithms have attracted significant attention thanks to their excellent performance in practical applications.
A prominent example is the \ac{ssc} algorithm by Elhamifar and Vidal, which performs spectral clustering based on an adjacency matrix obtained by sparsely representing each data point in terms of all the other data points via the Lasso. 
When the number of data points is large or the dimension of the ambient space is high, the computational complexity of \ac{ssc} quickly becomes prohibitive. Dyer et al. observed that \ac{ssc}-\acs{omp} obtained by replacing the Lasso 
by the greedy \ac{omp} algorithm results in significantly lower computational complexity, while often yielding comparable performance. 
The central goal of this paper is an analytical performance characterization of \ac{ssc}-\ac{omp} for noisy data. Moreover, we introduce and analyze the \ac{ssc}-\acs{mp} algorithm, which employs \ac{mp} in lieu of \ac{omp}.
Both \ac{ssc}-\ac{omp} and \ac{ssc}-\ac{mp} are proven to succeed even when the subspaces intersect and when the data points are contaminated by severe noise. The clustering conditions we obtain for \ac{ssc}-\ac{omp} and \ac{ssc}-\ac{mp} are similar to those for \ac{ssc} and for the \ac{tsc} algorithm due to Heckel and B{\"o}lcskei. 
Analytical results in combination with numerical results indicate that both \ac{ssc}-\ac{omp} and \ac{ssc}-\ac{mp} with a data-dependent stopping criterion automatically detect the dimensions of the subspaces underlying the data. Experiments on synthetic and on real data show that \ac{ssc}-\ac{mp} often matches or exceeds the performance of the computationally more expensive \ac{ssc}-\ac{omp} algorithm. Moreover, \ac{ssc}-\ac{mp} compares very favorably to \ac{ssc}, \ac{tsc}, and the \acl{nsn} algorithm, both in terms of clustering performance and running time. In addition, we find that, in contrast to \ac{ssc}-\ac{omp}, the performance of \ac{ssc}-\ac{mp} is very robust with respect to the choice of parameters in the stopping criteria. 
\end{abstract}
\vspace{-0.25cm}
\begin{IEEEkeywords}
Subspace clustering, matching pursuit algorithms, sparse signal representations, unions of subspaces, spectral clustering, noisy data.
\end{IEEEkeywords}

\section{Introduction}
\glsresetall[\acronymtype]

Extracting structural information from large high-dimensional data sets in a computationally efficient manner is a major challenge in many modern 
machine learning tasks. 
A structure widely encountered in practical applications is that of unions of (low-dimensional) subspaces. 
The problem of extracting the assignments of the data points in a given data set to the subspaces without prior knowledge of the number of subspaces, their orientations and dimensions 
is referred to as subspace clustering and has found applications in, e.g., image representation and segmentation \cite{hong_multiscale_2006}, face clustering \cite{ho_clustering_2003}, motion segmentation \cite{costeira1998multibody}, system identification \cite{vidal2003algebraic}, and genomic inference \cite{jiang2004cluster}. 
More formally, given a set $\cY = \cY_1 \cup \ldots \cup \cY_L$ of $N$ data points in $\reals^m$, where the points in $\cY_\l$ lie in or near the $d_\l$-dimensional linear subspace $\cS_\l \subset \reals^m$, we want to find the association of the points in $\cY$ to the $\cY_\l$, without prior knowledge on the $\cS_\l$.

The subspace clustering problem has been studied for more than two decades with a correspondingly sizeable body of literature. 
The algorithms available to date can roughly be categorized as algebraic, statistical, and spectral clustering-based; we refer to \cite{vidal_subspace_2011} for a review of the most prominent representatives of each class. While many subspace clustering algorithms exhibit good performance in practice, corresponding analytical results under non-restrictive conditions on the relative orientations of the subspaces are available only for a small set of algorithms. Specifically, during the past few years a number of new  
algorithms, which rely on sparse representations (of each data point in terms of all the other data points) followed by spectral clustering \cite{luxburg_tutorial_2007}, were proposed and mathematically analyzed \cite{elhamifar_sparse_2013, soltanolkotabi2012geometric, soltanolkotabi2014robust, wang2013noisy, heckel_robust_2013, dyer_greedy_2013, park2014greedy, you_sparse_2015}. These algorithms exhibit good empirical performance and succeed provably under quite generous conditions on the relative orientations of the subspaces. 
Almost all analytical performance results available to date apply, however, to the noiseless case, where the data points lie exactly in the union of the $\cS_\l$. 
A notable exception is the \ac{ssc} algorithm by Elhamifar and Vidal \cite{elhamifar_sparse_2013}, which was shown by Soltanolkotabi et al. \cite{soltanolkotabi2014robust} and Wang and Xu \cite{wang2013noisy} to succeed for noisy data even when the subspaces intersect. \ac{ssc} employs the Lasso\footnote{We note that the \ac{ssc} formulation in \cite{elhamifar_sparse_2013} adds a term to the Lasso objective function to account for sparse corruptions of the data points. 
The performance guarantees in \cite{soltanolkotabi2014robust, wang2013noisy} apply, however, to the ``pure'' Lasso version of \ac{ssc}. Throughout this paper, unless explicitly stated otherwise, \ac{ssc} will refer to the ``pure'' Lasso version.\label{fn:lassossc}} (or $\ell_1$-minimization in the noiseless case) to find a sparse 
representation (or, more precisely, approximation) of each data point in terms of all the other data points, then constructs an affinity graph based on the so-obtained sparse representations, and finally determines subspace assignments through spectral clustering of the affinity graph. 
To understand the intuition behind this approach, first note that in the noiseless case every data point $\vy_j $ in $\cS_\l$ can be represented by (at most $d_\l$) other data points in $\cS_\l$ provided that the points in $\cY_\l$ are non-degenerate.
In the noisy case, the hope is now that the sparse representation of $\vy_j \in \cY_\l$ in terms of $\cY \backslash \{ \vy_j \}$ delivered by \ac{ssc} 
involves mostly points belonging to $\cY_\l$ thanks to the sparsity-promoting nature of the Lasso.  
Of course, 
this will happen only if 
the subspaces $\cS_\l$ underlying the $\cY_\l$ are sufficiently far apart. 
The analytical performance results in \cite{soltanolkotabi2012geometric, soltanolkotabi2014robust, wang2013noisy} quantify the impact of subspace dimensions and relative orientations, noise variance, and the number of data points on the performance of \ac{ssc}.

When the data is high-dimensional or the number of data points is large, solving the $N$ Lasso problems (each in $N-1$ variables) in \ac{ssc} can be computationally challenging. Greedy 
algorithms for computing sparse representations of the data points (in terms of all the other data points) are therefore an interesting alternative. Three such alternatives were proposed in the literature, namely the \ac{ssc}-\acl{omp} \glsunset{omp}(\ac{ssc}-\ac{omp}) algorithm by Dyer et al. \cite{dyer_greedy_2013}, the \ac{tsc} algorithm by Heckel and B\"olcskei \cite{heckel_robust_2013}, and the \ac{nsn} algorithm by Park et al. \cite{park2014greedy}. \ac{ssc}-\ac{omp} employs \ac{omp} instead of the Lasso to compute sparse representations of the data points. \ac{tsc} relies on the nearest neighbors---in spherical distance---of each data point to construct the affinity graph, and \ac{nsn} greedily assigns to each data point a subset of the other data points by iteratively selecting the data point closest (in  Euclidean distance) to the subspace spanned by the previously selected data points.

To the best of our knowledge, besides \ac{ssc}, \ac{tsc} is the only subspace clustering algorithm that was proven to succeed under noise. The performance guarantees available for \ac{ssc}-\ac{omp} \cite{dyer_greedy_2013, you_sparse_2015, heckel2015dimensionality} all apply to the noiseless case.

\paragraph*{Contributions} The main contributions of this paper are an analytical performance characterization of \ac{ssc}-\ac{omp} in the noisy case, 
and of a new algorithm, termed \ac{ssc}-\acl{mp} \glsunset{mp}(\ac{ssc}-\ac{mp}), 
which is obtained by replacing \ac{omp} in \ac{ssc}-\ac{omp} by the \ac{mp} algorithm \cite{friedman1981projection, mallat1993matching}. 
Matching pursuit algorithms per se have been studied extensively in the sparse signal representation literature \cite{blumensath2012greedy} and the approximation theory literature \cite{temlyakov2003nonlinear}. Replacing \ac{omp} by \ac{mp} 
is attractive as the per-iteration complexity of \ac{mp} 
is smaller than that of \ac{omp}  
thanks to the absence of the orthogonalization step. 
On the other hand, the representation error (in $\ell_2$-norm) of \ac{mp} may decay slower---as a function of the number of iterations---than that of \ac{omp} \cite{temlyakov2003nonlinear}. We shall see, however, that in the context of subspace clustering, 
in practice, the lower per-iteration cost of \ac{mp} usually translates into lower overall running time, while delivering essentially the same clustering performance as \ac{omp}.

Our main results are sufficient conditions for \ac{ssc}-\ac{omp} and \ac{ssc}-\ac{mp} to succeed in terms of the \acrlong{nfc} property (see Definition \ref{def:nfc}), 
a widely used \cite{soltanolkotabi2012geometric, soltanolkotabi2014robust, dyer_greedy_2013, heckel_robust_2013, wang2013noisy, heckel2015dimensionality, dyer_greedy_2013, you_sparse_2015, park2014greedy} subspace clustering performance measure. 
Specifically, we find 
that both algorithms succeed even when the subspaces intersect and when the signal to noise ratio is as low as $0$dB. 
Furthermore, the sufficient conditions we obtain point at an intuitively appealing tradeoff between the affinity of the subspaces (a similarity measure for pairs of subspaces defined later), the noise variance, and the number of points in the data set corresponding to each subspace. This ``clustering condition'' is structurally similar to those for \ac{ssc} in \cite[Thm. 3.1]{soltanolkotabi2014robust}, \revb{\cite[Thm. 10]{wang2013noisy}} and for \ac{tsc} in \cite[Thm.~3]{heckel_robust_2013}. Moreover, numerical results indicate that our clustering condition is order-wise optimal. 
The main technical challenge in proving our results stems from the need to handle statistical dependencies between quantities computed in different iterations of the \ac{omp} and \ac{mp} algorithms. 

\ac{omp} and \ac{mp} are commonly stopped either after a prescribed maximum number of iterations, which we henceforth call \ac{di}-stopping, or when the representation error 
falls below a threshold value, referred to as \ac{dd}-stopping. 
For a given data point to be represented, \ac{omp} is guaranteed to select a new data point in every iteration and the sparsity level of the resulting representation therefore equals the number of \ac{omp} iterations performed. \ac{mp}, on the other hand, may select individual data points to participate repeatedly in the sparse representation of a given data point. The sparsity level of the representation computed by \ac{mp} may therefore be smaller than the number of iterations performed. 
As it is important for subspace clustering purposes to be able to control the sparsity level, we propose a new hybrid stopping criterion for \ac{mp} terminating the algorithm either when a given maximum number of iterations was performed or when a given target sparsity level is attained. 
We consider \ac{ssc}-\ac{omp} and \ac{ssc}-\ac{mp} both with \ac{di}- and \ac{dd}-stopping. 
For \ac{di}-stopping, we present numerical results which indicate that performing (order-wise) more than $d_\l$ \ac{omp} iterations can severely compromise the performance of \ac{ssc}-\ac{omp}. \ac{ssc}-\ac{omp} with \ac{di}-stopping therefore requires fairly accurate knowledge of the subspace dimensions. 
\ac{ssc}-\ac{mp}, on the other hand, exhibits a much more robust behavior in this regard. 
For \ac{dd}-stopping, we prove that taking the threshold value on the representation error to be linear in the noise standard deviation ensures that both \ac{omp} and \ac{mp} select order-wise at least  
$d_\l$ points from $\cY_\l \backslash \{ \vy_j \}$ to represent $\vy_j \in \cY_\l$, provided that the noise variance is sufficiently small.  
Numerical results further indicate that both algorithms, indeed, select order-wise no more than $d_\l$ points from $\cY_\l \backslash \{ \vy_j \}$ and essentially no points from $\cY \backslash \cY_\l$. 
This means that \ac{ssc}-\ac{omp} and \ac{ssc}-\ac{mp} with \ac{dd}-stopping implicitly estimate (again order-wise) the subspace dimensions $d_\l$.  
This can---in principle---also be accomplished by \ac{ssc} with a selection procedure for the Lasso parameter that is based on solving an auxiliary (constrained Lasso) optimization problem for each data point \cite{soltanolkotabi2014robust}. This procedure imposes, however, significant computational burden; in contrast \ac{dd}-stopping as performed here comes at essentially zero computational cost.

Finally, we present extensive numerical results comparing the performance of \ac{ssc}-\ac{omp}, \ac{ssc}-\ac{mp}, \ac{ssc}, \ac{tsc}, and \ac{nsn} for synthetic and real data. In particular, we find that \ac{ssc}-\ac{mp} outperforms \ac{ssc} in the reference problem of face clustering on the Extended Yale B data set \cite{georghiades_illumination_2001,lee_acquiring_2005} and does so at drastically lower running time.

\paragraph*{Notation} We use lowercase boldface letters to denote (column) vectors and uppercase boldface letters to designate matrices. The superscript $\herm{}$ stands for transposition. For the vector $\vv$, $[\vv]_i$ denotes its $i$th element, $\norm[0]{\vv}$ is the number of non-zero entries, and $\norm[\infty]{\vv} \defeq \max_i \abs{[\vv]_i}$. For the matrix $\mA$, $\mA_{-i}$ stands for the matrix obtained by removing the $i$th column from $\mA$, $\mA_\S$ is the submatrix of $\mA$ consisting of the columns with index in the set $\S$, $\range(\mA)$ is its range space, $\norm[2\to 2]{\mA} \defeq\;$ $\max_{\norm[2]{\vv} = 1  } \norm[2]{\mA \vv}$ its spectral norm, $\norm[F]{\mA} \defeq (\sum_{i,j} |\mA_{ij}|^2 )^{1/2}$ its Frobenius norm, and $\minsingv{\mA}$ and $\maxsingv{\mA}$ refer to its minimum and maximum singular value, respectively. 
For a matrix $\mA \in \reals^{m \times n}$, $m \geq n$, of full column rank, we denote its pseudoinverse by $\pinv{\mA} \defeq \inv{(\transp{\mA} \mA)} \transp{\mA}$. The identity matrix is $\mI$. $\mathcal N( \boldsymbol{\mu},\boldsymbol{\Sigma})$ stands for the distribution of a Gaussian random vector with mean $\boldsymbol{\mu}$ and covariance matrix $\boldsymbol{\Sigma}$. 
The expectation of the random variable $X$ is written as $\mb E [X]$. For random variables $X$ and $Y$, we indicate their equivalence in distribution by $X \sim Y$. 
The set $\{1, \ldots,N\}$ is denoted by $[N]$.  The cardinality of the set $\S$ is $|\S|$ and its complement is $\comp{\S}$. The unit sphere in $\reals^m$ is $\US{m} \defeq \{ \vx \in \reals^m \colon \norm[2]{\vx} = 1 \}$. 
$\log(\cdot)$ refers to the natural logarithm.

We say that a subgraph $H$ of a graph $G$ is connected if every pair of nodes in $H$ can be joined by a path with nodes exclusively in $H$. A connected subgraph $H$ of $G$ is called a connected component of $G$ if there are no edges between $H$ and the remaining nodes in $G$.

\section{Subspace clustering via matching pursuits} \label{sec:algos}

\ac{omp} and \ac{mp} per se were introduced in \cite{chen1989orthogonal} and \cite{friedman1981projection}, respectively, and have been studied extensively in the  
sparse signal representation literature,  
see, e.g., \cite{blumensath2012greedy}, \cite[Chap.~3]{foucart_mathematical_2013}. 
In the context of subspace clustering the premise is that the (sparse) representations of each data point in terms of all the other data points delivered by \ac{omp} and \ac{mp} contain predominantly data points that lie in the same subspace as the data point under consideration. 
We refer to \cite{elhamifar_sparse_2013}, \cite[Sec.~2.B]{heckel_robust_2013} for a detailed discussion on the relation between sparse signal representation theory and subspace clustering. 

\subsection{The algorithms}

We first briefly review the \ac{ssc}-\ac{omp} algorithm, introduced in \cite{dyer_greedy_2013}, and then present the novel \ac{ssc}-\ac{mp} algorithm. The ensuing formulations of \ac{ssc}-\ac{omp} and \ac{ssc}-\ac{mp} assume that the data points are of comparable $\ell_2$-norm. This assumption is relevant in Step 1 in both algorithms, but is not restrictive as the data points can always be normalized prior to processing. Further, an estimate $\hat L$ of the number of subspaces $L$ is assumed to be available. The estimation of $L$ from the data set under consideration is discussed below. 

\vspace{0.25cm}
{\bf The \ac{ssc}-\ac{omp} algorithm \cite{dyer_greedy_2013}.} 
Given a set of $N$ data points $\cY$ in $\reals^m$, an estimate of the number of subspaces $\hat L$, and
\begin{itemize}
\item a maximum number of iterations $s_{\max} \leq \min\{m, N-1\}$ for \ac{di}-stopping,
\item a threshold $\tau$ on the representation error 
for \ac{dd}-stopping,
\end{itemize}
perform the following steps: 

{\it Step 1:} For every $\vy_j \in \cY$, find a representation of $\vy_j$ in terms of $\cY \backslash \{\vy_j\}$ using OMP as follows: Initialize the iteration counter $\itr = 0$, the residual $\vr_0 = \vy_j$, and the set of selected indices $\Lambda_0 = \emptyset$. Denote the data matrix containing the points in $\cY$ by $\mY \in \reals^{m \times N}$. For $\itr = 1, 2, \dots$, perform the updates
\begin{align}
\lambda_\itr &= \underset{i \in [N] \backslash (\Lambda_{\itr-1}\cup \{j\})}{\arg \max}   \left| \innerprod{\vy_i}{\vr_{s-1}} \right| \label{eq:OMPSelRule} \\
\Lambda_\itr &= \Lambda_{\itr-1} \cup \lambda_s \nonumber \\
\rs &= \left(\mI - \mY_{\Lambda_\itr} \pinv{(\mY_{\Lambda_\itr})} \right) \vy_j\label{eq:OMPResFormula} \\ 
&= \left(\mI -  \frac{\tilde \vy_{\lambda_\itr}\transp{(\tilde \vy_{\lambda_\itr})}}{\norm[2]{\tilde \vy_{\lambda_\itr}}^2} \right) \vr_{\itr-1} ,\nonumber
\end{align}
where $\tilde \vy_{\lambda_\itr} = (\mI - \mY_{\Lambda_{\itr-1}} \pinv{(\mY_{\Lambda_{\itr-1}})} ) \vy_{\lambda_\itr}$, until at least one of the following criteria is met
\begin{itemize}
\item for \ac{di}-stopping: \\
$\itr = \itr_{\max}$, $\max_{i \in [N] \backslash (\Lambda_{\itr}\cup \{j\})}   \left| \innerprod{\vy_i}{\vr_\itr} \right| = 0$.\footnote{Throughout the paper, we use the convention of maximization over the empty set evaluating to $0$.}
\item for \ac{dd}-stopping: \\
$\norm[2]{\rs} \leq \tau$, $\max_{i \in [N] \backslash (\Lambda_{\itr}\cup \{j\})}   \left| \innerprod{\vy_i}{\vr_\itr} \right| = 0$.
\end{itemize}
Ties in the maximization \eqref{eq:OMPSelRule} are broken arbitrarily.

{\it Step 2:} With the number of OMP iterations actually performed denoted by $\send$, compute the representation coefficient vectors $\vb_j \in \reals^N$, $j \in [N]$, according to $(\vb_j)_{\Lambda_{\send}} = \pinv{(\mY_{\Lambda_{\send}})} \vy_j$, $(\vb_j)_{\comp{\Lambda}_{\send}} = \vect{0}$, and construct the adjacency matrix $\mA = \mB + \herm{\mB}$, where $\mB = \mathrm{abs} ([\vb_1\, \dots \,\vb_N])$ with $\mathrm{abs}(\cdot)$ denoting absolute values taken element-wise.

{\it Step 3:} Apply normalized spectral clustering \cite{ng_spectral_2001,luxburg_tutorial_2007} to $(\mA, \hat L)$. 
\vspace{0.25cm}

{\bf The \ac{ssc}-\ac{mp} algorithm.} 
Given a set of $N$ data points $\cY$ in $\reals^m$, an estimate of the number of subspaces $\hat L$, and
\begin{itemize}
\item a maximum number of iterations $s_{\max}$ and a target sparsity level $\mpspar$ for \ac{di}-stopping,
\item  a threshold $\tau$ on the representation error  
for \ac{dd}-stopping,
\end{itemize}
perform the following steps: 

{\it Step 1:} For every $\vy_j \in \cY$, find a representation of $\vy_j$ in terms of $\cY \backslash \{\vy_j\}$ using MP as follows: Initialize the iteration counter $\itr = 0$, the residual $\vq_0 = \vy_j$, and the coefficient vector $\vb_j \in \reals^N$ as $\vb_j = \mathbf 0$. For $\itr = 1, 2, \dots$, perform the updates
\begin{align}
\omega_\itr &= \underset{i \in [N]\backslash \{j\}}{\arg \max}   \left| \innerprod{\vy_i}{\vq_{\itr-1}} \right| \label{eq:MPSelRule} \\
[\vb_j]_{\omega_\itr} &\gets [\vb_j]_{\omega_\itr} + \frac{\innerprod{\vy_{\omega_\itr}}{\vq_{\itr-1}}}{\norm[2]{\vy_{\omega_\itr}}^2} \label{eq:MPcoeffcomp} \\
\qs &= \left(\mI - \frac{\vy_{\omega_\itr} \transp{(\vy_{\omega_\itr})}}{\norm[2]{\vy_{\omega_\itr}}^2} \right) \vq_{\itr-1}
\label{eq:MPresorth}
\end{align}
until at least one of the following criteria is met
\begin{itemize}
\item for \ac{di}-stopping: \\
$\itr = \itr_{\max}$, $\norm[0]{\vb_j} = \mpspar$, $\max_{i \in [N] \backslash \{j\}} \left| \innerprod{\vy_i}{\vq_{\itr}} \right| = 0$.
\item for \ac{dd}-stopping: \\
$\norm[2]{\vq_{\itr}} \leq \tau$, $\max_{i \in [N] \backslash \{j\}}   \left| \innerprod{\vy_i}{\vq_{\itr}} \right| = 0$. 
\end{itemize}
Ties in the maximization \eqref{eq:MPSelRule} are broken arbitrarily.

{\it Step 2:} Construct the adjacency matrix $\mA = \mB + \herm{\mB}$, where $\mB = \mathrm{abs} ([\vb_1\, \dots \,\vb_N])$. 

{\it Step 3:} Apply normalized spectral clustering \cite{ng_spectral_2001,luxburg_tutorial_2007} to $(\mA, \hat L)$.  
\vspace{0.25cm}

\paragraph*{Stopping criteria} We emphasize that \ac{omp}, thanks to the orthogonalization \eqref{eq:OMPResFormula} of the residual $\vr_{\itr-1}$ w.r.t. all data points selected previously,  
is guaranteed to select a new data point in every iteration and hence the sparsity level of $\vb_j$ equals the number of \ac{omp} iterations performed. In contrast, \ac{mp} orthogonalizes (see \eqref{eq:MPresorth}) the residual $\vq_{\itr-1}$ w.r.t. the data point $\vy_{\omega_\itr}$ selected in the current iteration $\itr$ only and may therefore select the same data point to participate repeatedly in the representation of $\vy_j$. The sparsity level of $\vb_j$ may hence be smaller than the number of \ac{mp} iterations performed, 
which is why the \ac{di}-stopping criterion for \ac{mp} incorporates termination when a given target sparsity level, namely $\mpspar$, is attained. Choosing $\itr_{\max}$ large enough, stopping will, indeed, be activated by $\norm[0]{\vb_j} = \mpspar$. 
Having control over the sparsity level of the coefficient vectors $\vb_j$ can be important to achieve good clustering performance as discussed below. Setting $\mpspar = N$, on the other hand, guarantees that stopping is activated through $\itr = \itr_{\max}$ and thereby allows to control the maximum number of \ac{mp} iterations through choice of $\itr_{\max}$. This hybrid stopping criterion does not seem to have been considered before in the literature.

For \ac{dd}-stopping, 
\ac{omp} is guaranteed to 
stop as soon as a basis for the subspace $\vy_j$ lies in has been found or, in case $\vy_j$ does not lie in the span of $\cY \backslash \{\vy_j\}$, the best representation---in the least-squares sense---of $\vy_j$ in terms of the points in $\cY \backslash \{\vy_j\}$. On the other hand, \ac{mp} is, in general, not guaranteed to terminate after a finite number of iterations as it may fail to activate either of the conditions $\norm[2]{\vq_{\itr}} \leq \tau$ and $\max_{i \in [N] \backslash \{j\}}   \left| \innerprod{\vy_i}{\vq_{\itr}} \right| = 0$ if $\tau$ is chosen too small \cite{mallat1993matching}. For most data sets encountered in practice this is not an issue. 
It can, however, become a problem when the data set contains outliers 
that cannot be represented sparsely by the other data points.  
In such cases it is advisable to employ the \ac{di}-stopping criterion which guarantees that  
at most $\itr_{\max}$ iterations are performed. 

\paragraph*{Implementation aspects} As \ac{mp} requires the computation of inner products only, whereas \ac{omp} contains a (least-squares) orthogonalization step (typically carried out by QR decomposition or Cholesky factorization \cite{blumensath2012greedy}), the per-iteration computational cost of \ac{ssc}-\ac{mp} is lower than that of \ac{ssc}-\ac{omp}. The numerical results in Section \ref{sec:faceclustering} indicate that this typically also translates into a lower overall running time for \ac{ssc}-\ac{mp} at fixed performance.

\paragraph*{Weak selection rules} For very large data sets one can often speed up \ac{omp} and \ac{mp} by relaxing the selection rules \eqref{eq:OMPSelRule} and \eqref{eq:MPSelRule} to so-called weak selection rules \cite{blumensath2012greedy} as follows. Instead of \eqref{eq:OMPSelRule}, one determines $\lambda_\itr$ such that $\left| \innerprod{\vy_{\lambda_s}}{\vr_{s-1}} \right| \geq \alpha \max_{i \in [N] \backslash (\Lambda_{\itr-1}\cup \{j\})}  \left| \innerprod{\vy_i}{\vr_{s-1}} \right|$ and instead of \eqref{eq:MPSelRule}, one finds $\omega_\itr$ according to $\left| \innerprod{\vy_{\omega_s}}{\vq_{s-1}} \right| \geq \alpha \max_{i \in [N] \backslash \{j\}}  \left| \innerprod{\vy_i}{\vq_{s-1}} \right|$, in both cases for a fixed relaxation parameter $\alpha \in (0,1]$. These weakened selection rules can be implemented efficiently using, e.g., locality-sensitive hashing \cite{jain2011orthogonal,vitaladevuni2011efficient}. For conciseness, we shall not analyze weak selection rules here, but only note that our main results presented in Section \ref{sec:mainres} extend to weak selection rules with minor modifications.

\subsection{Parameter selection}

In both algorithms, $\hat L$ may be estimated in Step 2 based on the adjacency matrix $\mA$ using the \emph{eigengap heuristic} \cite{luxburg_tutorial_2007} (note that $L$ is needed only in Step 3), which relies on the fact that the number of zero eigenvalues of the normalized Laplacian of the graph $G$ with adjacency matrix $\mA$ 
corresponds to the number of connected components of $G$.

The spectral clustering step (Step 3 in both algorithms) recovers the oracle segmentation $\{\cY_1, \ldots, \cY_L\}$ of $\cY$ if $\hat L = L$ and if each connected component in $G$ corresponds to exactly one of the $\cY_\l$ \cite[Prop. 4; Sec. 7]{luxburg_tutorial_2007}. Choosing the parameters $\itr_{\max}$ and $\mpspar$ in the case of \ac{di}-stopping, and $\tau$ for \ac{dd}-stopping, appropriately is therefore crucial. Indeed, as $\itr_{\max}$, $\mpspar$, and $\tau$ determine the sparsity level of the representation coefficient vectors $\vb_j$, they control the number of edges in $G$ and hence the connectivity properties of $G$. Specifically, taking $\itr_{\max}$, $\mpspar$ too small or $\tau$ too large results in 
weak connectivity and may hence cause the subgraphs of $G$ corresponding to individual $\cY_\l$ to split up into multiple connected components.  
Spectral clustering would then assign these components to different clusters, which means that a given set $\cY_\l$ is divided up into multiple (disjoint) sets. 
On the other hand, choosing $\itr_{\max}$, $\mpspar$ too big or $\tau$ too small will result in 
strong connectivity and hence potentially in ``false connections'', i.e., in edges in $G$ between points that correspond to different $\cY_\l$. Spectral clustering exhibits, however, a certain amount of robustness vis-\`a-vis false connections with small corresponding weights in $G$. 
From what was just said it follows that ideally $\itr_{\max}$, $\mpspar$, and $\tau$ should be chosen such that the sparsity levels of the $\vb_j$ are on the order of the subspace dimensions. This can be seen as follows. 
Assume that the points in $\cY_\l$ are well spread out on the subspace $\cS_\l$, $\l \in [L]$, and perturbed by additive isotropic Gaussian noise. 
If the noise variance is not too large the noisy data points $\vy_j \in \cY_\l$ will remain close to $\cS_\l$. In this case, roughly $d_\l$ points from $\cY_\l \backslash \{\vy_j\}$ will suffice to represent $\vy_j \in \cY_\l$ with small representation error. 
Hence, if the subspaces $\cS_\l$ are sufficiently far apart, imposing a sparsity level of $\approx d_\l$ for $\vy_j \in \cY_\l$ through suitable choice of $\itr_{\max}$, $\mpspar$, $\tau$ will force \ac{omp} and \ac{mp} to select points predominantly from $\cY_\l \backslash \{\vy_j\}$. If, however, the subspaces are too close to each other, \ac{omp} and \ac{mp} are likely to select points from $\cY \backslash \cY_\l$ (i.e., false connections) as well. 

We next discuss the selection of the parameters $\itr_{\max}$, $\mpspar$ for \ac{di}- and $\tau$ for \ac{dd}-stopping in the light of what was just said. 
For simplicity of exposition, in the noisy case we assume that the points in $\cY$ are in general position, i.e., every subset of $m$ or fewer points in $\cY$ is linearly independent. Real-world data sets usually have this property and the statistical data model our analysis is based on (see Section \ref{sec:mainres}) conforms with this assumption as well. Note that if the points in $\cY$ are in general position, \ac{omp} under \ac{di}-stopping is guaranteed to perform exactly $\itr_{\max}$ iterations, whereas \ac{mp} will perform at least $\min\{\itr_{\max},\mpspar, m, N-1\}$ iterations under \ac{di}-stopping. 

\paragraph*{\ac{di}-stopping}
We first consider \ac{ssc}-\ac{omp}. In the noiseless case, if the subspaces are sufficiently far apart, 
\ac{omp} under \ac{di}-stopping automatically stops after at most $d_\l$ iterations for all $\vy_j \in \cY_\l$, $\l \in [L]$ \rev{\cite{dyer_greedy_2013, you_sparse_2015}}. Therefore, choosing $\itr_{\max} \geq \max_{\l \in [L]} d_\l$ guarantees that \ac{omp} automatically detects the dimensions of the subspaces the points $\vy_j$ reside in. 
In contrast, in the noisy case, performing more than $\approx d_\l$ iterations ``forces'' \ac{omp} to select points from $\cY \backslash  \cY_\l$ (corresponding to false connections) for the representation of $\vy_j \in \cY_\l$ as after $\approx d_\l$ iterations $\vr_\itr$ will roughly be orthogonal to $\cS_\l$ (see Figure \ref{fig:OMPvsMPillustr}). The choice of $\itr_{\max}$, in practice, therefore requires knowledge of the subspace dimensions. In certain applications such as, e.g., face clustering \cite{basri_lambertian_2003}, 
information on the subspace dimensions may, indeed, be available a priori. 
When the subspace dimensions $d_\l$ vary widely (across $\l$), 
there may be no $\itr_{\max}$ that ensures both connectivity of all the subgraphs of $G$ corresponding to the $\cY_\l$, and at the same time guarantees a small number of false connections. 
In principle this problem could be mitigated by setting $\itr_{\max}$ for each $\vy_j$ individually according to the dimension of the subspace it belongs to. 
This would, however, require knowledge of 
the assignments of the data points $\vy_j$ to the subspaces $\cS_\l$, thereby performing the actual subspace clustering task.

We now turn to \ac{ssc}-\ac{mp}. \ac{mp} with $\mpspar = N$, i.e., the hybrid stopping criterion is activated once $\itr_{\max}$ iterations have been performed, 
exhibits a stopping behavior that is fundamentally different from that of 
\ac{omp}. As already mentioned this is a consequence of \ac{mp} orthogonalizing the residual only w.r.t. the data point selected in the current iteration, thereby allowing for repeated selection of individual data points. In the noiseless case, unlike \ac{ssc}-\ac{omp}, \ac{ssc}-\ac{mp} can select more than $d_\l$ data points from $\cY_\l \backslash \{\vy_j\}$ to represent $\vy_j \in \cY_\l$, thereby potentially producing a graph $G$ with better connectivity than \ac{ssc}-\ac{omp}. \revb{On the other hand, \ac{ssc}-\ac{mp} tends to assign smaller weights to the points in $\cY_\l \backslash \{\vy_j\}$ than \ac{ssc}-\ac{omp}, which can lead to slightly inferior performance (see the experiments reported in Appendix~\ref{sec:noiseless}).} For the noisy case, the experiments on synthetic and real-world data, reported in Section \ref{sec:inflsmax}, show that in the first $d_\l$ or so iterations \ac{mp} adds predominantly new points from the correct cluster $\cY_\l \backslash \{\vy_j\}$ and thereafter tends to revisit
previously selected data points or add new points stemming mostly from $\cY_\l \backslash \{\vy_j\}$. For fixed $\itr_{\max}$, this leads to a smaller number of false connections than what would be obtained by \ac{omp}. The example in Figure \ref{fig:OMPvsMPillustr} illustrates this behavior. The data set $\cY$ is drawn from the union of a two-dimensional subspace $\cS_1$ and a one-dimensional subspace $\cS_2$, where $\cS_1$ and $\cS_2$ span a (principal) angle of approximately 50 degrees. We consider the data point $\vy_j \in \cY_1$. In the first iteration, both \ac{omp} and \ac{mp} select the same data point $\vy_{\lambda_1} = \vy_{\omega_1}$ and we have $\vr_1 = \vq_1$. In the second iteration, the two algorithms again select the same data point, namely $\vy_{\lambda_2} = \vy_{\omega_2}$, but $\vr_2$ is now almost orthogonal to $\cS_1$ whereas $\vq_2$ remains close to $\cS_1$ as \ac{mp} orthogonalizes only w.r.t. $\vy_{\omega_2}$. Specifically, the (principal) angle between $\vq_2$ and $\cS_1$ is approximately 27 degrees, while $\vq_2$ and $\cS_2$ span an angle of approximately 50 degrees. In the third iteration \ac{mp} therefore selects a point from the set $\cY_1 \backslash \{ \vy_j \}$, whereas \ac{omp} chooses a point from the wrong subspace $\cY_2$.

\newcommand{\illfsize}{\tiny}

\newcommand{\colra}{black!60!white}
\newcommand{\colrb}{black!75!white}
\newcommand{\colqb}{black!90!white}

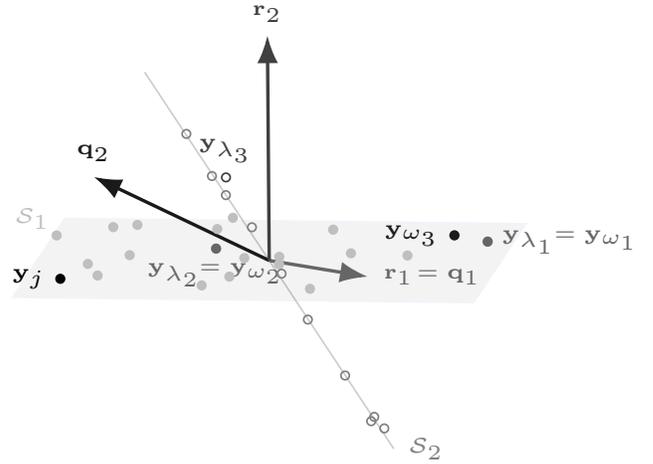
\begin{figure}[h]
\centering
 \begin{tikzpicture}[scale=1.6]
 \begin{axis}[hide axis, axis lines=center, axis line style=semithick, axis on top, samples=5, xmin=-1.1, xmax=1.5, ymin=-1, ymax=1, zmin=-1, zmax=1, xtick=\empty, ytick=\empty, ztick=\empty, view={-5}{-170}]  
 
\addplot3[surf,domain=-1:1,samples=2,opacity=0.05,color=black]{0*x+0*y} node[left,pos=0.0, opacity=1, color=lightgray]{\illfsize $\cS_1$};

 \addplot3 +[mark=none, opacity=0.2, color=black] coordinates {
	 (0.5661, 0.2265, 0.7926)
	 (-0.5661, -0.2265, -0.7926)
 } node[right,pos=0.0, color=gray, opacity=1]{\illfsize $\cS_2$};
 
 \addplot3 +[mark=*, mark size = 1, color=lightgray, only marks,mark options={solid}] table{./data/OMPMPillustration/Y1OMPMPillustration.dat};
 \addplot3 +[mark=o, mark size = 1, color=gray, only marks,mark options={solid}] table{./data/OMPMPillustration/Y2OMPMPillustration.dat};
 \addplot3 +[mark=*, mark size = 1, color=black, only marks,mark options={solid}] coordinates {
 	(-0.845392980184535, 0.534139691621176, -0.00234497111831683)
} node[left,pos=0, opacity=1]{\illfsize $\vy_j$};
\addplot3 +[mark=*, mark size = 1, color=\colra, only marks,mark options={solid}] coordinates {
 	(0.8852, -0.5476, 0.0016)
} node[right,pos=0.0, opacity=1]{\illfsize $\vy_{\lambda_1} \!\!= \vy_{\omega_1}$};
\addplot3 +[mark=*, mark size = 1, color=\colra, only marks,mark options={solid}] coordinates {
 	(-0.2687, -0.3365, 0.0092)
} node[below,pos=0.0, opacity=1]{\illfsize $\vy_{\lambda_2} \!\!= \vy_{\omega_2}$};

\addplot3 +[mark=o, mark size = 1, color=\colrb, only marks,mark options={solid}] coordinates {
 	(-0.2004, -0.1134, -0.3478)
} node[above,pos=-0.0, opacity=1]{\illfsize $\vy_{\lambda_3}$};

\addplot3 +[mark=*, mark size = 1, color=\colqb, only marks,mark options={solid}] coordinates {
 	(0.7236, -0.7013, 0.0025)
} node[left,pos=0.0, opacity=1]{\illfsize $\vy_{\omega_3}$};

\draw[Latex-,solid, thick, color=\colra] (axis cs:0.5244, 0.8474, -0.0829) -- (axis cs:0, 0, 0) node[right,pos=0.0, opacity=1]{\illfsize $\vr_1 \!=\vq_1$};
\draw[Latex-,solid, thick, color=\colqb] (axis cs:-0.6997, 0.5462, -0.4604) -- (axis cs:0, 0, 0) node[above,pos=0.0, opacity=1]{\illfsize $\vq_2$};
\draw[Latex-,solid, thick, color=\colrb] (axis cs:-0.0101, -0.0193, -0.9998) -- (axis cs:0, 0, 0) node[above,pos=0.01, opacity=1]{\illfsize $\vr_2$};
 	
   \end{axis}
 \end{tikzpicture}
 \vspace{-0.5cm}
\caption{\label{fig:OMPvsMPillustr} Evolution of the  (normalized) residuals $\vr_\itr$ and $\vq_\itr$ 
corresponding to \ac{omp} and \ac{mp}, respectively, for the data point $\vy_j \in \cY_1$. The data points belonging to $\cS_2$ are marked by circles. Note that all data points were normalized to unit $\ell_2$-norm prior to clustering.}
\end{figure}

The experiments reported in Section \ref{sec:inflsmax} reveal the following remarkable property. Even when we discount the advantage of \ac{mp}---owing to its ability to reselect data points---by forcing the sparsity levels of \ac{mp} and \ac{omp} to be equal to, say $\itr_\mathrm{high}$, through appropriate choice of $\itr_{\max}$ and $\mpspar$, with $\itr_\mathrm{high} \gg d_\l$ for at least one $\l \in [L]$, \ac{mp} still tends to select more points from $\cY_\l$ to represent $\vy_j \in \cY_\l$ than \ac{omp} does. Moreover, these numerical results indicate that \ac{mp} also tends to assign smaller (in absolute value) coefficients to false connections, i.e., to points in $\cY \backslash \cY_\l$; this can have a favorable effect on performance thanks to the robustness of spectral clustering to false connections with small associated weights.

Our analytical results in Section \ref{sec:mainres} guarantee 
that \ac{ssc}-\ac{omp} and \ac{ssc}-\ac{mp} succeed for $s_{\max}$ and $\mpspar$ linear---up to $\log$-terms---in the smallest subspace dimension, provided that the subspaces are sufficiently far apart, the noise variance is sufficiently small, and the data set contains sufficiently many points from each subspace. 

\paragraph*{\ac{dd}-stopping} We assume throughout that $\tau$ is sufficiently large for \ac{ssc}-\ac{mp} to terminate. 
For \ac{omp} in the context of sparse noisy signal recovery, taking $\tau$ linear in the noise standard deviation $\sigma$ is known to lead to correct recovery of the sparse signal under certain technical conditions \cite{cai2011orthogonal}. In the context of subspace clustering, where the problem is actually of a different nature, the analytical results in Section~\ref{sec:mainres} indicate that such a choice for $\tau$ guarantees that both \ac{ssc}-\ac{omp} and \ac{ssc}-\ac{mp} select order-wise at least $d_\l$ points from $\cY_\l$ to represent $\vy_j \in \cY_\l$, provided that the subspaces $\cS_\l$ are sufficiently far apart and the points in $\cY_\l$ are well spread out on $\cS_\l$, $\l \in [L]$, and perturbed by additive isotropic Gaussian noise of  sufficiently small variance. The numerical results in Section~\ref{sec:tprfprexp} show that the graphs $G$ generated by \ac{ssc}-\ac{omp} and \ac{ssc}-\ac{mp} will also have a small number of false connections if $\tau$, in addition, is not too small. Appropriate choice of $\tau$ therefore makes both \ac{omp} and \ac{mp} automatically adjust the sparsity level for each data point according to the dimension of the subspace the data point lies in. We say that the algorithms detect the dimensions of the subspaces the data points reside in. Recall that under \ac{di}-stopping this has to be accomplished through suitable choice of $\itr_{\max}$, $\mpspar$.

When the data set $\cY$ contains outliers that cannot be represented sparsely in terms of the other points in $\cY$, \ac{dd}-stopping usually leads to a high number of false connections as the representation error for outliers will decay slowly resulting in late activation of the \ac{dd}-stopping criterion. In this case, we need to rely on \ac{di}-stopping. 

\revb{\paragraph*{Summary} We recommend DI-stopping with $\itr_{\max}$ on the order of the subspace dimensions if the subspace dimensions are (approximately) known, and DD-stopping with $\tau^2$ on the order of the noise variance if the noise variance is known and not too large. If no prior knowledge on the subspace dimensions or noise variance is available, or if the noise variance is large, we recommend relying on DI-stopping with $\itr_{\max}$ in the range $\{5, \ldots, 10\}$. These values for $\itr_{\max}$ turned out to work well in the relevant experiments conducted in this paper, and were also used in related works \cite{dyer_greedy_2013, you_sparse_2015}.}

\section{Main results} \label{sec:mainres}

Our analytical performance results are for a statistical data model,  
also employed in \cite{soltanolkotabi2014robust, heckel_robust_2013}. 
Specifically, we take the subspaces $\cS_\l$ to be fixed 
and the points in the corresponding subsets $\cY_\l$ of the data set $\cY = \cY_1 \cup \ldots \cup \cY_L$ to be randomly distributed on $\cS_\l \cap \US{m}$ and  
perturbed by additive random noise. 
Concretely, the points in $\cY_\l$, $\l \in [L]$, are given by $\yl_i = \xl_i + \,\zl_i = \Ul \al_i + \,\zl_i$, $i \in [n_\l]$, where the columns of $\Ul \in \reals^{m \times d_\l}$ constitute an orthonormal basis for $\cS_\l$, the $\al_i$ are independently (across $i \in [n_\l]$, $\l \in [L]$) and uniformly distributed on $\US{d_\l}$, and the $\zl_i$ are i.i.d. $\mc N(\mathbf{0}, (\sigma^2/m) \mI_m)$. The factor $1/m$ in the noise covariance matrix ensures that $\norm[2]{\smash{\zl_i}}^2$ concentrates around $\EX{\smash{\norm[2]{\smash{\zl_i}}^2}} = \sigma^2$ for large $m$. Note further that the $\ell_2$-norm of the data points concentrates around $\sqrt{1 + \sigma^2}$ for large $m$ and that they are hence of comparable $\ell_2$-norm, 
as required by the formulations of \ac{ssc}-\ac{omp} and \ac{ssc}-\ac{mp} in Section~\ref{sec:algos}. Moreover, the data points are in general position w.p. $1$ for $\sigma > 0$.

Prima facie assuming the noiseless data points $\xl_i$ to be uniformly distributed on the subspaces $\cS_\l$ may appear overly stylized. 
However, for any algorithm to have a chance of 
producing correct assignments, 
we need the noiseless data points to be well spread out \rev{to a certain extent (albeit not necessarily around the origin as in our data model)} on the subspaces. To see this, suppose for example, that the points in $\cY_\l$ are concentrated on two distinct subspaces of $\cS_\l$, say $\cS_\l'$ and $\cS_\l''$. Then, one can assign the points in $\cY_\l$ either to two clusters, one containing the points concentrated on $\cS_\l'$ and the other one those concentrated on $\cS_\l''$, or one can assign all the points in $\cY_\l$ to a single cluster.

Our results will depend on the affinity between pairs of subspaces which measures how far apart two subspaces are. The affinity between the subspaces $\cS_k$ and $\cS_\l$ is defined as 
\cite[Def.~2.6]{soltanolkotabi2012geometric}, \cite[Def.~1.2]{soltanolkotabi2014robust}
\begin{equation}
\aff(\cS_k,\cS_\l) \defeq \frac{1}{\sqrt{\min \{d_k , d_\l \}}}   \norm[F]{ \herm{\mU^{(k)}} \mU^{(\l)} } \label{eq:affdef}
\end{equation}
and can equivalently be expressed in terms of the principal angles $\theta_1 \leq \ldots \leq \theta_{\min \{d_k , d_\l \}}$ between $\cS_k$ and $\cS_\l$ \cite[Sec.~6.3.4]{golub_matrix_1996} according to
\begin{equation}
\aff(\cS_k,\cS_\l) =\allowbreak \sqrt{ \frac{\cos^2( \theta_1) + \ldots + \cos^2(\theta_{\min \{d_k , d_\l \}})}{\min \{d_k , d_\l \}}}. \label{eq:princangles}
\end{equation}
We have $0 \leq \aff(\cS_k,\cS_\l) \leq 1$ and for subspaces intersecting in $t$ dimensions, we get $\cos(\theta_1)= \ldots =\cos(\theta_{t})=1$ and hence $\aff(\cS_k,\cS_\l) \geq \sqrt{t/\min\{d_k,d_\l\}}$.

Recall that spectral clustering recovers the oracle segmentation $\{\cY_1, \ldots, \cY_L\}$ if $\hat L = L$ and each connected component in $G$ corresponds to one of the $\cY_\l$. Establishing conditions that guarantee zero clustering error is inherently difficult. To the best of our knowledge the only instances of such a \rev{result for spectral clustering-based subspace clustering algorithms} are \cite[Thm. 2]{heckel_robust_2013} for \ac{tsc} in the noiseless case and a condition in \cite{wang2016graph} guaranteeing that a post-processing procedure for \ac{ssc} yields correct clustering in the noisy case. 
We will rely on the following intermediate, albeit sensible, performance measure, which has become standard in the subspace clustering literature and  was also employed in \cite{soltanolkotabi2012geometric, soltanolkotabi2014robust, dyer_greedy_2013, heckel_robust_2013, wang2013noisy, heckel2015dimensionality, dyer_greedy_2013, you_sparse_2015, park2014greedy}. 
\begin{definition}[\Acf{nfc} property] \label{def:nfc}
The graph $G$ satisfies the \ac{nfc} property if, for all $\l \in [L]$, the nodes corresponding to $\cY_\l$ are connected to other nodes corresponding to $\cY_\l$ only. 
\end{definition}
\rev{In what follows, we often say ``\ac{ssc}-\ac{omp}/\ac{ssc}-\ac{mp} satisfies the \ac{nfc} property'' instead of ``the graph $G$ generated by \ac{ssc}-\ac{omp}/\ac{ssc}-\ac{mp} satisfies the \ac{nfc} property''. } 
To guarantee perfect clustering, we would need to ensure---in addition to the \ac{nfc} property---that the subgraphs of $G$ corresponding to the $\cY_\l$ are connected. This would preclude split-ups of the subgraphs of $G$ corresponding to the individual $\cY_\l$. Sufficient conditions 
guaranteeing this property for \ac{ssc} 
were established in \cite{nasihatkon2011graph} for $m=3$ in the noiseless case.

Note that the \ac{nfc} property does not involve the parameter $\hat L$. The sufficient conditions for  \ac{ssc}-\ac{omp} and \ac{ssc}-\ac{mp} to \rev{satisfy the \ac{nfc} property} 
reported next, therefore, do not require $\hat L = L$.

Our main result for \ac{ssc}-\ac{omp} with \ac{di}-stopping is the following.

\begin{theorem}[\ac{ssc}-\ac{omp} with \ac{di}-stopping] \label{th:SSCOMP} Define the sampling density $\rho_\l \defeq (n_\l-1)/d_\l$, and let  
$d_{\max} \defeq \max_{\l \in [L]} d_\l$ and $\rho_{\min} = \min_{\l \in [L]} \rho_\l$. Assume that $m \geq 2 d_{\max}$, $\rho_{\min} \geq c_\rho$, $\sigma \leq 1/2$,  
and $\itr_{\max} \leq \min_{\l \in [L]} \{ c_s d_\l / \log( (n_\l - 1) e / \itr_{\max}) \}$, where $c_\rho$ and $c_\itr$ are numerical constants satisfying $c_\rho >1$, $0 <c_\itr \leq 1/10$. 
Then, the clustering condition
\begin{align}
&\max_{k, \l \colon k \neq \l} \aff (\cS_k,\cS_\l) + \frac{10 \sigma}{\sqrt{\log (N^3 s_{\max})}} \Bigg( \frac{\sqrt{d_{\max}}}{\sqrt{m}} c(\sigma) \nonumber \\
&\qquad\qquad\quad+ \frac{\sqrt{2}}{\sqrt{\rho_{\min}}} \left( 1 + \frac{3}{2} \sigma \right) \Bigg) \leq \frac{1}{8 \log (N^3 s_{\max})} \label{eq:ClusCondThm} 
\end{align}
with $c(\sigma) = 10 + 13 \sigma$ guarantees that the graph $G$ 
generated by \ac{ssc}-\ac{omp} under \ac{di}-stopping satisfies the \ac{nfc} property 
w.p. at least 
\begin{equation}
P^\star \defeq 1 - 6/N - 5 N e^{-c_m m} - 6 \sum_{\l \in [L]} n_\l e^{-c_d d_\l} \label{eq:pstar}
\end{equation}
for numerical constants $c_d$ and $c_m$ obeying $0< c_d \leq 1/18$, $0 < c_m \leq 1/8$.
\end{theorem}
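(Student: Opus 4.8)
The plan is to derive the \ac{nfc} property (Definition~\ref{def:nfc}) by showing that, when \ac{omp} is run on an arbitrary data point $\vy_j \in \cY_\l$, every one of the at most $s_{\max}$ selection steps \eqref{eq:OMPSelRule} picks an index corresponding to a point in $\cY_\l \backslash \{\vy_j\}$ rather than to a point from some $\cY_k$, $k \neq \l$. First I would fix $\l$ and $\vy_j \in \cY_\l$ and argue inductively: assuming the indices selected in iterations $1,\ldots,s-1$ all lie in $\cY_\l$, the residual $\vr_{s-1}$ is the projection of $\vy_j$ onto the orthogonal complement of the span of these correctly selected points, and therefore lives in the $(\leq s)$-dimensional space spanned by $\vy_j$ and those points. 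Decomposing $\vr_{s-1} = \Pp\vr_{s-1} + \Po\vr_{s-1}$ with $\Pp$ the orthogonal projection onto $\cS_\l$, the in-subspace part $\Pp\vr_{s-1}$ is what the correct points correlate with, while $\Po\vr_{s-1}$ carries (mostly) noise energy orthogonal to $\cS_\l$.

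The decisive step is to prove the separation $\max_{i:\vy_i\in\cY_\l\backslash\{\vy_j\}}\abs{\innerprod{\vy_i}{\vr_{s-1}}} > \max_{i:\vy_i\in\cY\backslash\cY_\l}\abs{\innerprod{\vy_i}{\vr_{s-1}}}$ for every $s\leq s_{\max}$; whenever it holds, \eqref{eq:OMPSelRule} is forced to pick a correct index and the \ac{nfc} property follows. For the left-hand side I would lower-bound the best correlation with the correct points. Writing $\vy_i=\Ul\al_i+\zl_i$ gives $\innerprod{\vy_i}{\vr_{s-1}}=\innerprod{\al_i}{\transp{\Ul}\Pp\vr_{s-1}}+\innerprod{\zl_i}{\vr_{s-1}}$, and since the $\al_i$ are i.i.d.\ uniform on $\US{d_\l}$ with $n_\l-1=\rho_\l d_\l$ of them available, a coverage (inradius-type) estimate guarantees that for the worst-case direction $\transp{\Ul}\Pp\vr_{s-1}$ at least one $\al_i$ is sufficiently aligned; after subtracting the noise clutter $\innerprod{\zl_i}{\vr_{s-1}}$ via Gaussian concentration this produces a lower bound proportional to $\norm[2]{\Pp\vr_{s-1}}$ whose normalizing factor carries the $\tfrac{1}{8\log(N^3 s_{\max})}$ and the $\rho_{\min}\geq c_\rho$ dependence appearing in \eqref{eq:ClusCondThm}.

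For the right-hand side I would upper-bound $\abs{\innerprod{\vy_i}{\vr_{s-1}}}$ uniformly over $\vy_i\in\cY\backslash\cY_\l$. With $\vy_i=\Uk\ak_i+\zk_i$, the signal cross-term $\innerprod{\ak_i}{\transp{\Uk}\Pp\vr_{s-1}}$ is controlled by combining $\norm[2\to 2]{\transp{\Uk}\Ul}\leq\norm[F]{\transp{\Uk}\Ul}=\sqrt{\min\{d_k,d_\l\}}\,\aff(\cS_k,\cS_\l)$ from \eqref{eq:affdef} with the concentration of the uniform vector $\ak_i$; the dimension factors here cancel against those in the correct-side coverage bound, leaving the bare affinity $\max_{k\neq\l}\aff(\cS_k,\cS_\l)$ of \eqref{eq:ClusCondThm}. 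The remaining terms---$\innerprod{\ak_i}{\transp{\Uk}\Po\vr_{s-1}}$ and the noise inner product $\innerprod{\zk_i}{\vr_{s-1}}$---are governed by $\sigma$ and, after invoking $\norm[2]{\zk_i}^2\approx\sigma^2$ (valid w.p.\ $1-\mathcal{O}(e^{-c_m m})$ thanks to the $1/m$ factor in the noise covariance) together with Gaussian tail bounds, reproduce the two bracketed noise terms $\tfrac{\sqrt{d_{\max}}}{\sqrt m}c(\sigma)$ and $\tfrac{\sqrt2}{\sqrt{\rho_{\min}}}(1+\tfrac32\sigma)$, scaled by $\tfrac{10\sigma}{\sqrt{\log(N^3 s_{\max})}}$. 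Dividing the two bounds by the common scale turns the separation into precisely the clustering condition \eqref{eq:ClusCondThm}.

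The hard part will be the statistical dependency between $\vr_{s-1}$ and the candidate points: since $\vr_{s-1}$ is a data-dependent function of the points selected in earlier iterations, the Gaussian tail bounds on $\innerprod{\zk_i}{\vr_{s-1}}$ and $\innerprod{\zl_i}{\vr_{s-1}}$ cannot be applied naively. I would resolve this by conditioning on the realizations of $\vy_j$ and of the points in $\cY_\l$, so that the not-yet-selected noise vectors $\zk_i$ of the other subspaces remain independent of $\vr_{s-1}$, and by bounding the relevant inner products \emph{uniformly} over all directions in the at most $(s_{\max}+1)$-dimensional subspace in which $\vr_{s-1}$ can lie; a covering-net argument over this low-dimensional subspace costs a factor of order $s_{\max}\log(\cdot)$ in the exponent, which is exactly why the admissible number of iterations is capped at $s_{\max}\lesssim d_\l/\log((n_\l-1)e/s_{\max})$ in the hypotheses. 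Finally I would collect the failure probabilities of all the concentration events---the coverage estimate for the correct side, the Gaussian tail bounds for both sides, and the noise-norm concentration $\norm[2]{\zl_i}^2\approx\sigma^2$---and apply a union bound over the $s_{\max}$ iterations, the $N$ data points, and the $L$ subspaces, yielding the stated success probability $P^\star$ in \eqref{eq:pstar}, with the error terms $6/N$, $5N e^{-c_m m}$, and $6\sum_{\l}n_\l e^{-c_d d_\l}$ originating from the correct-side coverage bound, the ambient-dimension noise concentration, and the per-point subspace-dimension concentration, respectively.
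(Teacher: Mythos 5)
Your high-level architecture matches the paper's: reduce to showing that every one of the $s_{\max}$ selection steps for each $\vy_j\in\cY_\l$ picks a point from $\cY_\l$, observe that on this event the residual coincides with the residual of an auxiliary OMP run on the dictionary $\cY_\l\backslash\{\vy_j\}$ alone (the paper calls this ``reduced OMP''; your induction-plus-conditioning is the same device), decompose the inner products into in-subspace and out-of-subspace parts, and union-bound over iterations, points, and subspaces. However, there are two genuine gaps. First, your bound on the cross-subspace signal term loses a dimension factor: bounding $\abs{\innerprod{\ak_i}{\transp{\Uk}\Pp\vr_{s-1}}}$ by concentration of $\ak_i$ alone together with $\norm[2\to2]{\transp{\Uk}\Ul}\le\norm[F]{\transp{\Uk}\Ul}$ yields roughly $\tfrac{\sqrt{\log(\cdot)}}{\sqrt{d_k}}\norm[F]{\transp{\Uk}\Ul}\,\norm[2]{\Pp\vr_{s-1}}$, and after dividing by the correct-side lower bound $\asymp\norm[2]{\Pp\vr_{s-1}}/\sqrt{d_\l}$ an uncancelled factor $\sqrt{\min\{d_k,d_\l\}}$ remains, so you would only obtain a clustering condition of the form $\aff\lesssim 1/(\sqrt{d}\log N)$ rather than \eqref{eq:ClusCondThm}. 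To cancel the dimensions one must also exploit that the \emph{direction} of $\Pp\vr_{s-1}$ is uniformly distributed on the sphere in $\cS_\l$; this requires the covariance of the OMP recursion under unitary maps together with the rotational invariance of the data model (the paper's Lemma~\ref{le:ResRotInv}) and a two-sided concentration bound of the type in Lemma~\ref{le:InterSubsInnerProd}. Your sketch contains no such step, and replacing the spectral norm by the Frobenius norm does not supply it.

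Second, the separation inequality is useless without a lower bound on $\norm[2]{\Pp\vr_{s-1}}$ (and an upper bound on $\norm[2]{\Po\vr_{s-1}}$) that holds uniformly over all $s\le s_{\max}$; otherwise the correct-side bound, which is proportional to $\norm[2]{\Pp\vr_{s-1}}$, can degenerate. These norms depend on the \emph{random} index set $\Lambda_{s-1}$ selected so far, and the paper controls them by a union bound over all $\binom{n_\l-1}{s_{\max}}$ candidate index sets combined with Borell's inequality and a restricted-isometry argument for the submatrices $\Atl_\Gamma$ and $\Zl_\Gamma$ — this combinatorial union bound, via $\binom{n_\l-1}{s_{\max}}\le((n_\l-1)e/s_{\max})^{s_{\max}}$, is the actual origin of the cap $s_{\max}\lesssim d_\l/\log((n_\l-1)e/s_{\max})$. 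Your proposal instead attributes that cap to an $\varepsilon$-net over the $(s_{\max}+1)$-dimensional span of the residual, applied to the cross-subspace inner products; but those terms need no net at all (conditionally on $\cY_\l$ the residual is fixed and the points of $\cY\backslash\cY_\l$ are independent of it, so a direct Gaussian/spherical tail bound suffices), and a net over a subspace of dimension $s_{\max}+1$ would in any case produce a $\log$-cardinality of order $s_{\max}$, not the required $s_{\max}\log((n_\l-1)e/s_{\max})$ count of index sets. The mechanism that actually makes the iteration cap necessary — uniform control of the residual's energy split over all reachable selection histories — is missing from the proposal.
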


The main result for \ac{ssc}-\ac{mp} with \ac{di}-stopping is as follows.

\begin{theorem}[\ac{ssc}-\ac{mp} with \ac{di}-stopping] \label{th:SSCMP}
Define the sampling density $\rho_\l \defeq (n_\l-1)/d_\l$, and let $d_{\max} \defeq \max_{\l \in [L]} d_\l$ and $\rho_{\min} = \min_{\l \in [L]} \rho_\l$. Assume that $m \geq 2 d_{\max}$, $\rho_{\min} \geq c_\rho$, $\sigma \leq 1/2$, 
$\itr_{\max} > 0$, and $\mpspar \leq \min_{\l \in [L]} \{c_s d_\l / \log( (n_\l - 1) e /\mpspar) \}$, where $c_\rho$ and $c_s$ are numerical constants satisfying $c_\rho >1$, $0 <c_\itr \leq 1/10$. 
Then, the clustering condition \eqref{eq:ClusCondThm} with $c(\sigma) = 22 + 29 \sigma$ 
guarantees that the graph $G$ generated by \ac{ssc}-\ac{mp} under \ac{di}-stopping satisfies the \ac{nfc} property  
w.p. at least $P^\star$ as defined in~\eqref{eq:pstar}.
\end{theorem}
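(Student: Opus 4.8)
The plan is to establish the \ac{nfc} property by showing that, when \ac{mp} computes the representation of each $\vy_j \in \cY_\l$, the selection rule~\eqref{eq:MPSelRule} returns an index in $\cY_\l \backslash \{\vy_j\}$ in \emph{every} iteration performed, for all $\l \in [L]$. Indeed, if this holds then $\supp(\vb_j)$ is contained in the index set of $\cY_\l \backslash \{\vy_j\}$, so $\mB$---and hence $\mA = \mB + \herm{\mB}$---carries no cross-cluster edges and $G$ satisfies the \ac{nfc} property. I would argue by induction on the iteration counter $\itr$: assuming no wrong point was selected earlier, I compare the largest correlation with a \emph{correct} point, $\max_{\vy_i \in \cY_\l \backslash \{\vy_j\}} \abs{\innerprod{\vy_i}{\vq_{\itr-1}}}$, against the largest correlation with a \emph{wrong} point, $\max_{\vy_k \in \cY \backslash \cY_\l} \abs{\innerprod{\vy_k}{\vq_{\itr-1}}}$, and show that the former strictly dominates the latter (a harmless revisit of a previously selected correct point does not break the argument). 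The basic device is the decomposition $\vq_{\itr-1} = \Pp \vq_{\itr-1} + \Po \vq_{\itr-1}$ of the residual into its component in $\cS_\l$ and its orthogonal complement, the latter absorbing the noise energy.

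For the \emph{lower bound} on the correct correlation I would exploit that the noiseless directions $\al_i$ are uniform on $\US{d_\l}$: for the fixed unit vector $\Pp \vq_{\itr-1} / \norm[2]{\Pp \vq_{\itr-1}}$, the maximum of $\abs{\innerprod{\al_i}{\cdot}}$ over the $n_\l - 1$ available points concentrates at order $\sqrt{\log(n_\l)/d_\l}$, so the best correct correlation is at least of order $\norm[2]{\Pp \vq_{\itr-1}} \sqrt{\log(n_\l)/d_\l}$ up to a controlled noise contribution. For the \emph{upper bound} on the wrong correlation I would split it into a signal part $\innerprod{\Uk \ak}{\vq_{\itr-1}}$ and a noise part $\innerprod{\zk}{\vq_{\itr-1}}$; the signal part is controlled through the affinity $\aff(\cS_k,\cS_\l)$ and the spherical concentration of $\ak$ (both the $\cS_\l$-component and the orthogonal, noise-level component of $\vq_{\itr-1}$ contributing), while the noise part is Gaussian with standard deviation of order $\sigma \norm[2]{\vq_{\itr-1}}/\sqrt{m}$. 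Balancing the two bounds yields the left-hand side of~\eqref{eq:ClusCondThm}---the affinity term from the signal correlation and the two noise terms carrying $\sqrt{d_{\max}/m}$ and $1/\sqrt{\rho_{\min}}$ from the Gaussian and spread-related fluctuations---against the correct-correlation floor $1/(8\log(N^3 s_{\max}))$ on the right-hand side. The assumptions $m \ge 2 d_{\max}$, $\rho_{\min} \ge c_\rho$, and $\sigma \le 1/2$ are what make these concentration and conditioning estimates go through.

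The genuinely \ac{mp}-specific work is the analysis of the residual recursion under the \emph{partial} orthogonalization~\eqref{eq:MPresorth}, which removes only the component of $\vq_{\itr-1}$ along the single selected atom $\vy_{\omega_\itr}$ rather than along the entire selected set as in \ac{omp}. Consequently $\Pp \vq_{\itr}$ does not shrink in the clean, monotone fashion it does for \ac{omp}, and I would instead track $\norm[2]{\Pp \vq_{\itr}}$---or, more robustly, the ratio $\norm[2]{\Pp \vq_{\itr}}/\norm[2]{\vq_{\itr}}$---directly through the recursion, using that the selected atom lies in $\cY_\l$ (by the induction hypothesis) and that its in-subspace part is well aligned with the residual. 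The target is a bound guaranteeing that the in-subspace fraction of the residual stays above the noise floor for all iterations performed, i.e., until \ac{di}-stopping is triggered by $\norm[0]{\vb_j} = \mpspar$ with $\mpspar$ of order $d_\l$; this is precisely what lets \ac{ssc}-\ac{mp} keep selecting correctly past the point where \ac{omp} would have exhausted $\cS_\l$, and it is the step responsible for the slightly larger constant $c(\sigma) = 22 + 29\sigma$ relative to the \ac{omp} value $10 + 13\sigma$ in Theorem~\ref{th:SSCOMP}. The probability bound $P^\star$ in~\eqref{eq:pstar} is then assembled by collecting the failure probabilities of the spherical and Gaussian concentration events, the union bound over all data points, out-of-subspace atoms, and iterations $\itr \le s_{\max}$ being the origin of the $\log(N^3 s_{\max})$ factors.

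The main obstacle is the statistical-dependency issue flagged in the introduction: the residual $\vq_{\itr-1}$ depends on the atoms selected in earlier iterations, so conditioning on the induction hypothesis couples $\vq_{\itr-1}$ to the randomness of $\cY_\l$. I would resolve the cross-subspace half of this by a decoupling observation: on the event that \ac{mp} has so far selected only points of $\cY_\l$, the residual $\vq_{\itr-1}$ is a deterministic function of $\vy_j$ and the selected points of $\cY_\l$ alone, hence---by the data model---independent of the out-of-subspace points $\{\vy_k \in \cY \backslash \cY_\l\}$; this independence legitimizes the Gaussian and spherical concentration underlying the upper bound, after a union bound over the finitely many iterations. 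The delicate part, harder here than for \ac{omp} because the \ac{mp} residual lacks the orthogonality that cleanly decouples successive iterations, is to derive the uniform-in-$\itr$ lower bound on $\norm[2]{\Pp \vq_{\itr}}$ from a single high-probability ``niceness'' event for the point set $\cY_\l$ (capturing its spread and the conditioning of its subsets) that holds \emph{independently} of the realized selection sequence; the residual bound should then follow deterministically given niceness, so that the induction closes without circularity.
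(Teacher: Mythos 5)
Your plan coincides, at the level of strategy, with the paper's proof: induct on the iteration, condition on a family of high-probability events, exploit that on the event ``only points of $\cY_\l$ selected so far'' the residual is a function of $\cY_\l$ alone and hence independent of the other clusters (the paper packages this as a ``reduced MP'' run on the dictionary $\cY_\l\backslash\{\yl_i\}$), control the wrong-cluster correlations through the affinity plus spherical/Gaussian concentration, lower-bound the correct-cluster correlation in terms of $\norm[2]{\Pp\qsl}$, and assemble $P^\star$ by a union bound over data points and iterations. (A minor divergence: for the correct-correlation floor the paper does not use extreme-value concentration of the maximum at order $\sqrt{\log(n_\l)/d_\l}$, but the cruder bound $\norm[\infty]{\transp{\Yl_{-i}}\qsl}\ge\norm[2]{\transp{\Yl_{-i}}\qsl}/\sqrt{n_\l-1}$ combined with a smallest-singular-value estimate, yielding order $\norm[2]{\Pp\qsl}/\sqrt{d_\l}$; either route supports a condition of the form \eqref{eq:ClusCondThm}.)

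The genuine gap is the step you yourself flag: the uniform-in-$\itr$ lower bound on $\norm[2]{\Pp\qsl}$. Tracking the recursion \eqref{eq:MPresorth} iteration by iteration (or the ratio $\norm[2]{\Pp\qsl}/\norm[2]{\qsl}$) will not close the induction, because the per-iteration decrement depends on the realized selection sequence, which re-couples the bound to the randomness you decoupled from; there is no clean monotonicity to exploit, precisely because MP orthogonalizes against only one atom. The paper avoids the recursion entirely via a projection identity: writing the MP decomposition $\yl_i=\sum_{\itr'=1}^{\itr}\innerprod{\qsprlm}{\yl_{\omega_{\itr'}}/\norm[2]{\yl_{\omega_{\itr'}}}}\,\yl_{\omega_{\itr'}}/\norm[2]{\yl_{\omega_{\itr'}}}+\qsl$ and letting $\Omega_\itr$ be the set of \emph{distinct} indices selected so far, the accumulated approximation lies in $\range(\Yl_{\Omega_\itr})$, whence
\begin{equation}
\norm[2]{\Pp\qsl}\;\ge\;\norm[2]{\bigl(\mI-\Yl_{\Omega_\itr\pp}\pinv{\Yl_{\Omega_\itr\pp}}\bigr)\yl_{i\pp}} .\nonumber
\end{equation}
Since $\vert\Omega_\itr\vert\le\mpspar$ under the hybrid DI-stopping rule, this reduces the MP case to the same deterministic quantity as for OMP, which is then lower-bounded simultaneously over all $\binom{n_\l-1}{\mpspar}$ \emph{fixed} index sets by Borell's inequality---exactly the selection-independent ``niceness'' event you were hoping for, but obtained by a specific identity rather than by recursion tracking. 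A companion step bounds $\norm[2]{\vb_{\Omega_\itr}}$ by $2\norm[2]{\yl_i}/\minsingv{\Atl_{\Omega_\itr}}$ (MP's coefficients are not the least-squares ones) and feeds this into a RIP-type conditioning event to upper-bound $\norm[2]{\Po\qsl}$; it is this factor-of-two loss in the \emph{orthogonal} residual component---not the ability to keep selecting past $d_\l$ atoms---that accounts for $c(\sigma)=22+29\sigma$ versus $10+13\sigma$ in Theorem~\ref{th:SSCOMP}. Without these two identities your induction does not close.
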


The proofs of Theorems \ref{th:SSCOMP} and \ref{th:SSCMP} can be found in Appendices \ref{sec:pfsscomp} and \ref{sec:pfsscmp}, respectively.

Theorems \ref{th:SSCOMP} and \ref{th:SSCMP} essentially state that \ac{ssc}-\ac{omp} and \ac{ssc}-\ac{mp} \rev{satisfy the \ac{nfc} property}  
for $s_{\max}$ and $\mpspar$ linear---up to $\log$-terms---in $d_{\min} \defeq \min_{\l \in [L]} d_\l$, provided that the subspaces are not too close (in terms of their pairwise affinities), the noise variance $\sigma^2$ is sufficiently small, and the data set $\cY$ contains sufficiently many points from each subspace $\cS_\l$. 
Specifically, the clustering condition \eqref{eq:ClusCondThm} tells us that the subspaces $\cS_\l$ are allowed to be quite close to each other 
and can even intersect in a substantial fraction of their dimensions, all provided that $\sigma^2$ is not too large. 
Moreover, inspection of the second term on the \ac{lhs} of 
\eqref{eq:ClusCondThm} shows that a higher noise variance $\sigma^2$ is tolerated when $m$ becomes large relative to the largest subspace dimension $d_{\max}$ and/or the data set $\cY$ contains an increasing number of points in each of the subspaces, resulting in an increase in the minimum sampling density $\rho_{\min}$. 
The clustering condition \eqref{eq:ClusCondThm} can hence be satisfied under the condition $\sigma \leq 1/2$ imposed by Theorems \ref{th:SSCOMP} and \ref{th:SSCMP} if $m$ is sufficiently large relative to $d_{\max}$ and if $\rho_{\min}$ is sufficiently large 
(but not too large, in order to prevent the \ac{rhs} of \eqref{eq:ClusCondThm} from becoming too small; for example, $\rho_{\min}$ should not scale exponentially in one of the $d_\l$). 
This shows that \ac{ssc}-\ac{omp} and \ac{ssc}-\ac{mp}, indeed, \rev{satisfy the \ac{nfc} property} 
even when the noise variance $\sigma^2$ is on the order of the signal energy, i.e., when the signal to noise ratio $\text{SNR} \defeq \EX{\smash{\norm[2]{\vx_j}^2}}/\EX{\smash{\norm[2]{\vz_j}^2}} = 1/\sigma^2$\vspace{0.05cm} satisfies $\text{SNR} \approx 0\text{dB}$ (recall that $\yl_i = \xl_i + \zl_i$ with $\EX{\smash{\norm[2]{\smash{\xl_i}}^2}} = 1$). 

\rev{The \ac{rhs} of \eqref{eq:ClusCondThm} going to zero as $N \to \infty$ may appear counter-intuitive as one would expect clustering to become easier when the number of data points increases. Note, however, that \eqref{eq:ClusCondThm} allows the subspaces to intersect, and Theorems \ref{th:SSCOMP} and \ref{th:SSCMP} guarantee the \ac{nfc} property for \emph{all data points}. Now, when $N$ increases, owing to the statistical data model our analysis is based on, the number of data points that are close to the intersection of two subspaces also increases, which in turn leads to an increase in the probability of the \ac{nfc} property being violated for at least one data point. This then results in the clustering condition becoming more restrictive. The clustering conditions for \ac{ssc} in \cite[Eq.~(3.1)]{soltanolkotabi2014robust}, \revb{\cite[Thm. 10]{wang2013noisy}} and for \ac{tsc} in \cite[Eq.~(8)]{heckel_robust_2013} exhibit the same $O(1/\log (N))$ scaling and hence the same seemingly counter-intuitive behavior.}

We hasten to add that the condition $\sigma \leq 1/2$ in Theorems \ref{th:SSCOMP} and \ref{th:SSCMP} was imposed only to get clustering conditions that are of simple form. Removing the restriction $\sigma \leq 1/2$ (which is used to get the bounds \eqref{eq:resoSimpleUB} and \eqref{eq:respSimpleLB} in Appendix~\ref{sec:pfsscomp}) would lead to clustering conditions allowing, in principle, for arbitrarily large $\sigma$ (i.e., even for $\text{SNR} < 0\text{dB}$), provided that the $d_\l$ are sufficiently small compared to $m$, and $\rho_{\min}$ is sufficiently large. \rev{One might further expect that the upper bounds on $\itr_{\max}$ and $\mpspar$ in Theorems \ref{th:SSCOMP} and \ref{th:SSCMP}, respectively, should depend on $\sigma$ because the number of iterations for which \ac{ssc}-\ac{omp} and \ac{ssc}-\ac{mp} are guaranteed to select points from $\cY_\l \backslash \{\vy_j\}$ for $\vy_j \in \cY_\l$ should decrease as $\sigma$ increases. However, this is not the case as the clustering condition \eqref{eq:ClusCondThm} limits the noise variance (more precisely, the variance of the noise components on the subspaces) depending on $\max_{k, \l \colon k \neq \l} \aff (\cS_k,\cS_\l)$, $N$, $\rho_{\min}$, and $\itr_{\max}$. 
}We furthermore note that the conditions in Theorems \ref{th:SSCOMP} and \ref{th:SSCMP} (with different constants in \eqref{eq:ClusCondThm}) continue to guarantee the \ac{nfc} property for bounded noise or sub-gaussian noise, in both cases of isotropic distribution. It is interesting to see that \ac{ssc}-\ac{mp} \rev{satisfies the \ac{nfc} property} 
under virtually the same conditions as \ac{ssc}-\ac{omp}, although in practice \ac{ssc}-\ac{mp} typically exhibits a lower running time at fixed performance.

Comparing the clustering condition \eqref{eq:ClusCondThm} to those for \ac{ssc} in \cite[Thm. 3.1]{soltanolkotabi2014robust} and for \ac{tsc} in \cite[Thm. 3]{heckel_robust_2013}, both of which guarantee the \ac{nfc} property and apply to the same data model as used here, we find that \eqref{eq:ClusCondThm} exhibits the same structure  
(up to $\log$-factors and constants) 
apart from the term proportional to $\sqrt{1/\rho_{\min}}$ on the \ac{lhs} of \eqref{eq:ClusCondThm}. This term dominates the term proportional to $\sqrt{d_{\max} / m}$ only if $\sqrt{d_{\max} / m} \ll \sqrt{1/\rho_{\min}} = \sqrt{ \max_{\l \in [L]} (d_\l/(n_\l-1))}$, i.e., if $\max_{\l \in [L]} n_\l \ll m$ (owing to $\max_{\l \in [L]} (d_\l/(n_\l-1)) \geq (\max_{\l \in [L]} d_\l)/(\max_{\l \in [L]} n_\l-1) > d_{\max}/(\max_{\l \in [L]} n_\l)$). \revb{Similarly, the clustering condition in \cite[Thm. 10]{wang2013noisy} does not have a term proportional to $\sqrt{1/\rho_{\min}}$ as \eqref{eq:ClusCondThm} does, but imposes a slightly more restrictive condition on $\sigma$, requiring $\sigma(c+\sigma)$ to be at most on the order of $\sqrt{m-d}/d$ instead of $\sqrt{m}/\sqrt{d}$ (assuming $d_\l =d$ for all $\l \in [L]$ and neglecting $\log$-terms for simplicity of exposition), where $c$ is a constant.}
Numerical results in Section \ref{sec:synthexp} indicate that the term proportional to $\sqrt{1/\rho_{\min}}$ in \eqref{eq:ClusCondThm} is not an artifact of our proof techniques, but rather fundamental. We further note that setting $\sigma = 0$, the second term on the \ac{lhs} of \eqref{eq:ClusCondThm} vanishes and we recover (up to $\log$-factors and constants) the clustering condition
\revb{\begin{equation}
\max_{k, \l \colon k \neq \l} \aff (\cS_k,\cS_\l) \leq \frac{\sqrt{\log (\rho_{\min})}}{64 \log (N)}\nonumber
\end{equation}}
found in \cite[Cor. 1]{heckel2015dimensionality} for \ac{ssc}-\ac{omp} in the noiseless case. 

In summary, \ac{ssc}-\ac{omp}, \ac{ssc}-\ac{mp}, \ac{tsc}, and \ac{ssc} all \rev{satisfy the \ac{nfc} property} 
under similar (sufficient) conditions, while differing considerably w.r.t. computational complexity. Specifically, \ac{ssc}-\ac{omp} and \ac{ssc}-\ac{mp}, albeit greedy, are computationally more expensive than \ac{tsc}, but significantly less expensive than \ac{ssc}. On the other hand, \ac{ssc}-\ac{mp} can outperform \ac{tsc} quite significantly in certain applications (see Section \ref{sec:faceclustering}). A detailed comparison of \ac{ssc}, \ac{ssc}-\ac{omp}, \ac{ssc}-\ac{mp}, and \ac{tsc} in terms of performance and running times is provided in Section \ref{sec:faceclustering}. The performance of all four algorithms varies across data sets, and none of the algorithms consistently outperforms the other ones. 

Recall that under \ac{di}-stopping the choice of the parameters $\itr_{\max}$, $\mpspar$ is critical for the success of \ac{ssc}-\ac{omp} and \ac{ssc}-\ac{mp}. Taking $\itr_{\max}, \mpspar$ too small or too large may lead to cluster split-ups or to many false connections, respectively. The maximum range for $\itr_{\max}$, $\mpspar$ for our results to guarantee the \ac{nfc} property is determined (up to $\log$-factors) by the smallest subspace dimension $d_{\min}$, which is usually unknown. 
Furthermore, if $d_{\min}$ is small, the range of admissible values for $\itr_{\max}$, $\mpspar$ will also be small. The clustering condition \eqref{eq:ClusCondThm} is, however, only sufficient (for the \ac{nfc} property to hold) and good clustering performance may be obtained in practice for larger values of $\itr_{\max}$, $\mpspar$ than those identified by Theorems~\ref{th:SSCOMP} and \ref{th:SSCMP}. 

We proceed to our main result on \ac{dd}-stopping, which indicates that the problems with choosing $\itr_{\max}$, $\mpspar$ for \ac{di}-stopping due to unknown $d_\l$ can be mitigated---to a certain extent---through \ac{dd}-stopping. Specifically, we show that \ac{ssc}-\ac{omp} and \ac{ssc}-\ac{mp} under \ac{dd}-stopping automatically select at least on the order of $d_\l$ points from $\cY_\l$ to represent $\vy_j \in \cY_\l$. 
We hasten to add, however, that Theorem \ref{co:trueconnections} does not guarantee that no additional data points corresponding to false connections are selected and Theorem \ref{co:trueconnections} hence does not guarantee the \ac{nfc} property.

\begin{theorem}[\ac{ssc}-\ac{omp} and \ac{ssc}-\ac{mp} with \ac{dd}-stopping] \label{co:trueconnections}
Define the sampling density $\rho_\l \defeq (n_\l-1)/d_\l$, and let $d_{\max} \defeq \max_{\l \in [L]} d_\l$ and $\rho_{\min} = \min_{\l \in [L]} \rho_\l$. Suppose that $m \geq 2 d_{\max}$, $\rho_{\min} \geq c_\rho$, and $\sigma \leq 1/2$, where $c_\rho$ is a numerical constant satisfying $c_\rho >1$. Pick $\tau \in [0, 2/3 - (\sqrt{d_{\max}}/\sqrt{m})\sigma]$. Then, the clustering condition \eqref{eq:ClusCondThm} with $\itr_{\max}$ on both sides replaced by $\max_{\l \in [L]} \lfloor c_s d_\l / \log( (n_\l - 1) e) \rfloor$ guarantees w.p. at least $P^\star$ as defined in \eqref{eq:pstar}, for all $\vy_j \in \cY_\l$, $j \in [n_\l]$, $\l \in [L]$, that the corresponding coefficient vectors $\vb_j$ computed by \ac{omp} and \ac{mp} (if it terminates) 
have at least 
\begin{equation}
\!\!\left\lfloor \frac{d_\l}{\log((n_\l-1)e)} \min \left\{\frac{1}{3} \left( \frac{2}{3} - \frac{\tau}{1 - \frac{3}{2}\frac{\sqrt{d_\l}}{\sqrt{m}} \sigma}\right)^2,c_s \right\} \right\rfloor \label{eq:tdlbcor}
\end{equation}
non-zero entries corresponding to points in $\cY_\l \backslash \{\vy_j\}$.
\end{theorem}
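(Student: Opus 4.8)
The plan is to obtain this statement as a corollary of the per-point analysis carried out in the proofs of Theorems \ref{th:SSCOMP} and \ref{th:SSCMP}, using the observation that under DD-stopping OMP and MP generate exactly the same iterates as under DI-stopping until the residual-norm criterion $\norm[2]{\vr_\itr} \le \tau$ (resp. $\norm[2]{\vq_\itr} \le \tau$) is activated. For a point $\vy_j \in \cY_\l$ I would run that analysis with the iteration/sparsity budget set to $s_\l \defeq \lfloor c_s d_\l / \log((n_\l-1)e)\rfloor$; the clustering condition of Theorem \ref{co:trueconnections}, which is \eqref{eq:ClusCondThm} with $\itr_{\max}$ replaced by $\max_{\l} s_\l$, is exactly what is needed for the correlation bounds underlying NFC to hold simultaneously for all points on the same event of probability at least $P^\star$ as in \eqref{eq:pstar}. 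On that event the NFC argument shows that none of the first $s_\l$ atoms selected for $\vy_j$ stems from $\cY \backslash \cY_\l$; since the count claimed in \eqref{eq:tdlbcor} is capped at $s_\l$ through the $\min$ with $c_s$, the entire relevant range of selections is controlled.

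It then remains to show that the DD-stopping criterion cannot fire before the prescribed number of correct atoms has been collected. For this I would invoke the lower bound \eqref{eq:respSimpleLB} on the component $\rps$ of the residual lying in $\cS_\l$: since $\norm[2]{\vr_\itr}^2 = \norm[2]{\rps}^2 + \norm[2]{\ros}^2 \ge \norm[2]{\rps}^2$, a lower bound on $\norm[2]{\rps}$ immediately lower-bounds the residual norm. The bound \eqref{eq:respSimpleLB} decreases with the number $k$ of \emph{distinct} correct atoms selected, roughly like $\bigl(1 - \tfrac32 \tfrac{\sqrt{d_\l}}{\sqrt m}\sigma\bigr)\bigl(\tfrac23 - \sqrt{3k\log((n_\l-1)e)/d_\l}\,\bigr)$, because $k$ correct atoms span at most a $k$-dimensional subspace of $\cS_\l$ and the uniformly distributed signal direction $\al_j$ retains a controlled component orthogonal to any such subspace. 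Solving the inequality $\norm[2]{\rps} > \tau$ for $k$ yields precisely the threshold in \eqref{eq:tdlbcor}, and the constraint $\tau \le 2/3 - (\sqrt{d_{\max}}/\sqrt m)\sigma$ is exactly what guarantees this threshold is nonnegative. Hence, as long as fewer than \eqref{eq:tdlbcor} correct atoms have been selected, the residual norm exceeds $\tau$ and the algorithm keeps iterating; combined with NFC, at termination $\vb_j$ has at least \eqref{eq:tdlbcor} nonzero entries indexed by $\cY_\l \backslash \{\vy_j\}$.

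The one genuinely non-routine point concerns MP, whose residual $\vq_\itr$ is not the orthogonal projection of $\vy_j$ onto the span of the selected atoms and which may re-select atoms, so that $\norm[0]{\vb_j}$ (the quantity being bounded) can be strictly smaller than the iteration count; this is also why the statement carries the caveat ``if it terminates''. To handle this I would write $\vq_\itr = \vy_j - \mY_{\Omega_\itr} \vb_j$ with $\Omega_\itr$ the set of distinct selected indices and use $\norm[2]{\Pp \vq_\itr} \ge \mathrm{dist}(\Pp \vy_j, \Pp\,\range(\mY_{\Omega_\itr}))$, the distance from the parallel signal to a subspace of dimension at most $|\Omega_\itr|$. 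This makes the parallel lower bound depend on the number of \emph{distinct} correct atoms exactly as in the OMP case, so the same counting argument applies and the conclusion is in terms of $\norm[0]{\vb_j}$ as required; if MP terminates, reaching $\norm[2]{\vq_\itr} \le \tau$ forces at least \eqref{eq:tdlbcor} distinct atoms, and NFC (valid up to $s_\l \ge$ that count) ensures they are all correct. The main obstacle is therefore not any single hard estimate—those are inherited from the proofs of Theorems \ref{th:SSCOMP} and \ref{th:SSCMP}—but rather the careful bookkeeping that ties together the NFC window, the parallel-residual lower bound, and, for MP, the distinction between iteration count and sparsity level, all on one common high-probability event.
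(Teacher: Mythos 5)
Your proposal is correct and follows essentially the same route as the paper: the NFC machinery from Theorems \ref{th:SSCOMP} and \ref{th:SSCMP} (on the event $\Estar$, with the iteration budget set to $\lfloor c_s d_\l/\log((n_\l-1)e)\rfloor$) guarantees only correct atoms in the relevant window, and the lower bound on the parallel residual component from $\Eg$ (resp.\ $\Egt$, which is parametrized by the number of \emph{distinct} selected atoms exactly as you describe) is solved for the iteration count at which it crosses $\tau$, with the constraint $\tau \le 2/3 - (\sqrt{d_{\max}}/\sqrt{m})\sigma$ ensuring nonnegativity. Your handling of MP via the distinct-index set $\Omega_\itr$ and the distance to $\range(\mP^\pp_{\Omega_\itr})$ is precisely the mechanism of Lemma \ref{le:ResParaMPLB} that the paper's proof invokes.
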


Note that \ac{mp} is not guaranteed to terminate under the conditions of Theorem \ref{co:trueconnections} as $\tau$ in the admissible range indicated by Theorem \ref{co:trueconnections} could be too small for termination (see the corresponding discussion in Section \ref{sec:algos}). The ensuing statements on \ac{ssc}-\ac{mp} all apply only if \ac{mp}, indeed, terminates for all points $\vy_j \in \cY$. 
Theorem \ref{co:trueconnections} identifies a range for the threshold parameter $\tau$ guaranteeing that both \ac{ssc}-\ac{omp} and \ac{ssc}-\ac{mp} deliver a graph $G$ which has each $\vy_j \in \cY_\l$, $\l \in [L]$, connected to at least $O(d_\l/\log(n_\l-1))$ other points in $\cY_\l$. 
\rev{If $\sigma$ increases, the probability of OMP and MP selecting false connections increases and more iterations need to be performed for a given number of true connections to be selected. As a consequence, the interval for $\tau$ specified in Theorem \ref{co:trueconnections} decreases as $\sigma$ increases.} As already pointed out, Theorem \ref{co:trueconnections} does not guarantee the \ac{nfc} property, and choosing $\tau$ too small will result in entries in the coefficient vectors $\vb_j$ that correspond to false connections. Intuitively, we expect that choosing $\tau$ sufficiently large, \ac{omp} and \ac{mp} should stop early enough so as to avoid false connections. \rev{More specifically, one would expect that $\tau$ needs to be chosen larger as $\sigma$ increases so as to avoid \ac{omp} and \ac{mp} selecting points from $\cY \backslash \cY_\l$ to represent $\vy_j \in \cY_\l$.} Unfortunately, it seems rather difficult, at least for the statistical data model at hand, to analytically characterize a range for $\tau$ that guarantees the \ac{nfc} property and simultaneously on the order of $d_\l$ connections between $\vy_j \in \cY_\l$ and other points in $\cY_\l$, for all $j \in [n_\l]$, $\l \in [L]$. Nonetheless, it turns out, that in practice $G$ often exhibits both of these properties if $\tau$ is chosen appropriately. Numerical results in Section \ref{sec:tprfprexp} corroborate this claim. In summary, \ac{ssc}-\ac{omp} and \ac{ssc}-\ac{mp} under \ac{dd}-stopping with appropriately chosen $\tau$ detect the dimensions of the subspaces $\cS_\l$ correctly and adapt the sparsity level of the individual representations according to the dimension of the subspace the data  point at hand lies in.

The procedure in \rev{\cite[Alg. 2]{soltanolkotabi2014robust}} for the selection of a per-data-point Lasso parameter in \ac{ssc} has a similar subspace dimension-detecting property but comes with stronger theoretical guarantees. 
Specifically, \rev{\cite[Thm. 3.1]{soltanolkotabi2014robust} and \cite[Thm. 3.2]{soltanolkotabi2014robust} taken together} guarantee the \ac{nfc} property and, concurrently, that each $\vy_j \in \cY_\l$ is connected to on the order of $d_\l$  
other points in $\cY_\l$, all this provided that the noise variance---assumed known---is small enough. \rev{The procedure in \cite[Alg. 2]{soltanolkotabi2014robust} could also be employed to select $\itr_{\max}$ in \ac{ssc}-\ac{omp} and \ac{ssc}-\ac{mp} under \ac{di}-stopping for each data point individually. 
This emulates \ac{dd}-stopping by employing \ac{di}-stopping together with a data-point-wise parameter selection procedure. More specifically, as shown in \cite[Lem. A.2]{soltanolkotabi2014robust}, the optimal cost of the auxiliary Lasso problem in \cite[Eq.~(2.4)]{soltanolkotabi2014robust} for $\vy_j \in \cY_\l$ is proportional to $\sqrt{d_\l}$. Squaring the optimal cost therefore yields an estimate of $d_\l$, which can, in turn, be used to select the parameter $\itr_{\max}$ in \ac{ssc}-\ac{omp} such that it is on the order of $d_\l$ for $\vy_j \in \cY_\l$ and thereby satisfies the condition in Theorem \ref{th:SSCOMP} (we can lower-bound the factor $1/\log( (n_\l - 1) e / \itr_{\max})$ in that condition by $1/\log( N e )$).
This would then guarantee, in addition to the \ac{nfc} property, on the order of $d_\l / \log( n_\l)$ true connections for $\vy_j \in \cY_\l$, $j \in [n_\l]$, $\l \in [L]$, for both \ac{ssc}-\ac{omp} and \ac{ssc}-\ac{mp} under the conditions of Theorems 1 and 2, and would hence realize the ``many true discoveries'' and the \ac{nfc} property at the same time as guaranteed by \cite[Thm.~3.1]{soltanolkotabi2014robust} and \cite[Thm.~3.2]{soltanolkotabi2014robust} 
for \ac{ssc}.
We point out, however, that the selection procedure in \cite[Alg. 2]{soltanolkotabi2014robust} results in considerable computational burden in addition to solving the (already computationally demanding) $N$ Lasso problems required by \ac{ssc} or running the \ac{omp} and \ac{mp} routines in \ac{ssc}-\ac{omp} and \ac{ssc}-\ac{mp}, respectively, to perform the actual clustering.}

Finally, we note that determining the range of admissible threshold parameters $\tau \in [0, 2/3 - (\sqrt{d_{\max}}/\sqrt{m})\sigma]$ in Theorem \ref{co:trueconnections} requires knowledge of the noise variance $\sigma^2$. 
In principle, knowledge of $d_{\max}$ is required as well. We can, however, upper-bound $\sqrt{d_{\max}}/\sqrt{m}$ by 1 thereby obviating the need for knowing $d_{\max}$ at the cost of a reduced range for $\tau$. 

\section[Numerical results]{Numerical results} 

\label{sec:numres}

We compare the performance of \ac{ssc}-\ac{omp}, \ac{ssc}-\ac{mp}, \ac{ssc}, \ac{tsc}, and \ac{nsn}.
\ac{ssc}-\ac{omp} and \ac{ssc}-\ac{mp} were implemented in Matlab exactly following their descriptions in Section \ref{sec:algos}. 
For \ac{ssc}, \ac{tsc}, and \ac{nsn}, we used the implementations provided in the corresponding references \cite{elhamifar_sparse_2013}, \cite{heckel_robust_2013}, and \cite{park2014greedy}, respectively. The code for reproducing all experiments in this paper can be found at \url{http://www.nari.ee.ethz.ch/commth/research/}. This code also contains information on the number of Monte Carlo runs used in the individual experiments.

We provide the algorithms with the true number of subspaces $L$ in all experiments. All running times were measured on a PC with 32 GB RAM and 4-core Intel Core i7-3770K CPU clocked at 3.50 GHz. It is quite common in the computer vision literature to perform post-processing on the adjacency matrix $\mA$ generated by the individual clustering algorithms. This can improve the clustering performance, but will not be pursued here \rev{in order to simplify our comparisons}. 

\newcommand{\plotred}{black!59!white}
\newcommand{\plotgreen}{black!20!white}
\newcommand{\plotblue}{black!80!white} 
\newcommand{\plotblack}{black!97!white}

\newcommand{\plotwidth}{0.35\textwidth}
\newcommand{\plotheight}{0.25\textwidth}
\newcommand{\plotseph}{1.5cm}
\newcommand{\plotsepv}{2.5cm}
\newcommand{\labelsize}{\normalsize}

\subsection{Comparison of \ac{ssc}-\ac{omp} and \ac{ssc}-\ac{mp}}
\label{sec:synthexp}
As the clustering condition \eqref{eq:ClusCondThm} is only a sufficient condition and guarantees the \ac{nfc} property only, it is unclear to what extent the behavior predicted by \eqref{eq:ClusCondThm} is reflected in the \ac{ce}, i.e., the fraction of misclustered data points. 
The following experiment is devoted to answering this question while comparing \ac{ssc}-\ac{omp} and \ac{ssc}-\ac{mp}. 
We generate data sets $\cY$ according to the statistical data model described in Section \ref{sec:mainres}. Specifically, we set $d_\l = d$, $\l \in [L]$, and we choose the bases $\Ul \in \reals^{m \times d}$ 
to all intersect in a shared $t$-dimensional space and to be mutually orthogonal on the orthogonal complement of this intersection. More specifically, the bases $\Ul$ are obtained by choosing a matrix $\mU \in \reals^{m \times(L(d-t) + t)}$ uniformly at random from the set of all orthonormal matrices of dimension $m \times (L(d-t) + t)$ and setting $\Ul \defeq [ \mU_{[t]} \; \mU_{\mc T_\l} ]$, where $\mc T_\l \defeq \{ t+ (\l-1) (d-t) + 1, \dots, t+ \l (d-t)  \}$. This results in $\aff (\cS_k,\cS_\l) = \sqrt{t/d}$, $k,\l \in [L]$, $k \neq \l$. 
Varying the parameter $t$ therefore allows us to vary the pairwise affinities. 
We furthermore set $n_\l = n$, $\l \in [L]$, and generate instances of $\cY$ by sampling $n$ data points uniformly at random from each subspace and adding $\mathcal N(\mathbf 0, (\sigma^2/m) \mI )$ noise to each data point (the noise vectors are generated independently across data points). 
We let $L=3$, $d=20$, $m = 200$, and vary $t$, $\rho_{\min} = \rho = n/d$, and $\sigma^2$. Furthermore, we employ \ac{di}-stopping and set $\itr_{\max} = d/2 = 10$ for both \ac{ssc}-\ac{omp} and \ac{ssc}-\ac{mp}, and $\mpspar = N$. Figure \ref{fig:phasediag} shows the \ac{ce} as a function of $\max_{k,\l \colon k \neq \l} \aff (\cS_k,\cS_\l) = \sqrt{t/d}$, $\rho$, and $\sigma^2$. The results nicely reflect the qualitative behavior indicated by the clustering condition \eqref{eq:ClusCondThm}. Specifically, both \ac{ssc}-\ac{omp} and \ac{ssc}-\ac{mp} tolerate higher noise variance as the affinities between the subspaces decrease and the number of points in $\cY$ drawn from each subspace, $n$, increases. It is furthermore interesting to observe that the performance of \ac{ssc}-\ac{omp} and \ac{ssc}-\ac{mp} is virtually identical.

Recall that the clustering condition \eqref{eq:ClusCondThm}, apart from the term proportional to $\sqrt{1/\rho_{\min}}$, exhibits the same scaling behavior as those guaranteeing the \ac{nfc} property for \ac{ssc} in \cite[Thm. 3.1]{soltanolkotabi2014robust} and for \ac{tsc} in \cite[Thm. 3]{heckel_robust_2013}. To find out whether this additional term is an artifact of our proof technique, we first note that the clustering condition \eqref{eq:ClusCondThm} takes the form 
\begin{equation}
 \max_{k,\l \colon k \neq \l} \aff (\cS_k,\cS_\l) + \frac{c_1}{\sqrt{\rho}} \leq c_2, \label{eq:phasetrans1}
\end{equation}
for fixed $\sigma$, and 
\begin{equation}
 \sigma ( c_3 + \sigma c_4 + \frac{1}{\sqrt{\rho}} (c_5 + \sigma c_6)) \leq c_7, \label{eq:phasetrans2}
\end{equation}
for fixed maximum affinity, where $c_1$-$c_7> 0$ are constants, $d_{\max}$ and $m$ were assumed constant in both cases, and factors logarithmic in any of the parameters (variable or fixed) were neglected. Rewriting \eqref{eq:phasetrans1} and \eqref{eq:phasetrans2} assuming equality, we get
\begin{equation}
\rho = \left(\frac{c_1}{c_2-\max_{k,\l \colon k \neq \l} \aff (\cS_k,\cS_\l)}\right)^2 \label{eq:curvefit1}
\end{equation}
and
\begin{equation}
\rho = \left( \frac{\sigma (c_5 + c_6 \sigma)}{c_7 - \sigma(c_3 + \sigma c_4)} \right)^2, \label{eq:curvefit2}
\end{equation}
respectively. 
In the top and bottom rows of Figure \ref{fig:phasediag}, we now fit 
\eqref{eq:curvefit1} and \eqref{eq:curvefit2}, respectively (by manually adjusting the constants $c_1$-$c_7 $), to the boundaries between the regions of success and the regions of failure. 
The shape of the fitted curves follows the boundaries indicated by the numerical results closely. This can be taken as an indication---at least to a certain extent---of the term in \eqref{eq:ClusCondThm} proportional to $\sqrt{1/\rho_{\min}}$ being fundamental.

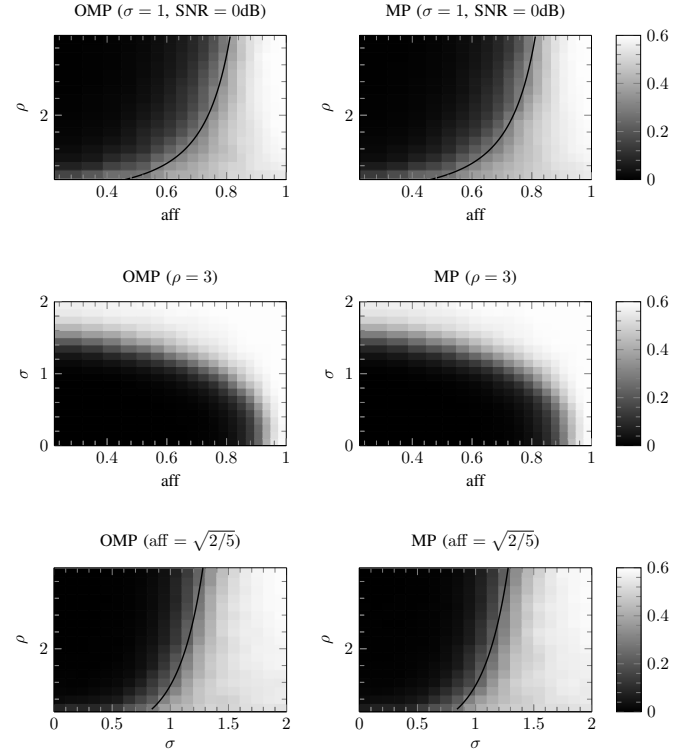
\begin{figure}[ph!]
\centering
\begin{tikzpicture}[scale=0.65] 
\begin{groupplot}[group style={group size=2 by 3,horizontal sep=\plotseph,vertical sep=\plotsepv,xlabels at=edge bottom, ylabels at=edge left}, width=\plotwidth,height=\plotheight, /tikz/font=\normalsize, colormap/blackwhite, view={0}{90}, point meta min=0.0, point meta max=0.6, minor tick num=4]

\nextgroupplot[title = {OMP ($\sigma = 1$, $\text{SNR}=0\text{dB}$)}, xlabel={\labelsize aff}, ylabel={\labelsize $\rho$}]
\addplot3[surf, shader=flat] file {./data/phasediag/CE-tvsrho_omp.dat};
\addplot[black,thick,domain=0.46:0.812,samples=50] {(0.37/(1-x))^2};

\nextgroupplot[title = {MP ($\sigma = 1$, $\text{SNR}=0\text{dB}$}), xlabel={\labelsize aff}, ylabel={\labelsize $\rho$}, colorbar]
\addplot3[surf, shader=flat, colorbar] file {./data/phasediag/CE-tvsrho_mp.dat};
\addplot[black,thick,domain=0.46:0.812,samples=50] {(0.37/(1-x))^2};

\nextgroupplot[title = {OMP ($\rho = 3$)}, xlabel={\labelsize aff}, ylabel={\labelsize $\sigma$}]
\addplot3[surf, shader=flat] file {./data/phasediag/CE-tvssig_omp.dat};

\nextgroupplot[title = {MP ($\rho = 3$)}, xlabel={\labelsize aff}, ylabel={\labelsize $\sigma$}, colorbar]
\addplot3[surf, shader=flat, colorbar] file {./data/phasediag/CE-tvssig_mp.dat};

\nextgroupplot[title = {OMP ($\aff = \sqrt{2/5}$)}, xlabel={\labelsize $\sigma$}, ylabel={\labelsize $\rho$}]
\addplot3[surf, shader=flat] file {./data/phasediag/CE-rhovssig_omp.dat};

\addplot[black,thick,domain=0.84:1.28,samples=50] {(x*(1+0.7*x)/(2.3 - 0.2*x - 0.5*x^2))^2};

\nextgroupplot[title = {MP ($\aff = \sqrt{2/5}$)}, xlabel={\labelsize $\sigma$}, ylabel={\labelsize $\rho$}, colorbar]
\addplot3[surf, shader=flat] file {./data/phasediag/CE-rhovssig_mp.dat};
\addplot[black,thick,domain=0.84:1.28,samples=50] {(x*(1+0.7*x)/(2.3 - 0.2*x - 0.5*x^2))^2};

\end{groupplot}
\end{tikzpicture}
\caption{\label{fig:phasediag} \ac{ce} as a function of $\aff \defeq \max_{k,\l \colon k \neq \l} \aff (\cS_k,\cS_\l)$, $\rho = n/d$, and $\sigma^2$. 
The (fitted) black curves in the top and bottom rows correspond to the curves \eqref{eq:curvefit1} and \eqref{eq:curvefit2}, respectively, and delineate the boundary between success and failure.}
\end{figure}

\subsection{Face clustering} \label{sec:faceclustering}

Next, we consider the problem of clustering face images of different individuals taken under varying illumination conditions. The rationale for employing subspace clustering to solve this problem stems from the observation that vectorized images of a given individual taken under different lighting conditions lie (approximately) in a $9$-dimensional linear subspace \cite{basri_lambertian_2003}. 
We consider the Extended Yale B data set \cite{georghiades_illumination_2001,lee_acquiring_2005}, which contains $192 \times 168$ pixel frontal face images of $38$ individuals, each taken under $64$ different illumination conditions. As \ac{ssc}-\ac{mp} does not seem to have been considered before in the literature, we compare its performance in terms of \ac{ce} 
to that of \ac{ssc}-\ac{omp}, \ac{ssc}, \ac{tsc}, and \ac{nsn}. Similar experiments comparing the performance of \ac{ssc}-\ac{omp} to that of other subspace clustering algorithms for face clustering were presented in \cite{dyer_greedy_2013, you_sparse_2015, heckel2015dimensionality}. To accomodate the memory requirements incurred by the optimization problems in \ac{ssc}, we apply \ac{ssc} (and, to ensure a fair comparison, also the other algorithms) to the downsampled $48 \times 42$ pixel versions of the images in the Extended Yale B data set provided in \cite{elhamifar_sparse_2013}. Note, however, that \ac{ssc}-\ac{omp}, \ac{ssc}-\ac{mp}, \ac{tsc}, and \ac{nsn} all could handle the original $192 \times 168$ pixel images. Instances of the data set $\cY$ are obtained by first choosing a subset of $L$ individuals uniformly at random from the set of all individuals and then collecting the $64$ (vectorized) images corresponding to each of the chosen individuals. As \ac{dd}-stopping leads to a large number of false connections---arguably due to (sparse) corruptions induced by shadows and specular reflections in the face images \cite{elhamifar_sparse_2013}---we rely on \ac{di}-stopping \revb{(i.e., $\tau=0$)} with $\itr_{\max} = 5$ (which corresponds to the choice made in \cite{dyer_greedy_2013} for \ac{ssc}-\ac{omp}) for both \ac{ssc}-\ac{omp} and \ac{ssc}-\ac{mp}, and we set $\mpspar = N$. Note that both \ac{omp} and \ac{mp} perform exactly $\itr_{\max}$ iterations in this experiment as the points in $\cY$ are in general position. 
For \ac{ssc}, \ac{tsc}, and \ac{nsn} we use the parameter values employed for the face clustering experiments in \cite{elhamifar_sparse_2013}, \cite{heckel_robust_2013}, and \cite{park2014greedy}, respectively. We emphasize that the experiments in \cite{elhamifar_sparse_2013}, \cite{heckel_robust_2013}, and \cite{park2014greedy} all also provide the true number of subspaces $L$ to the individual algorithms. Note furthermore that we employ \ac{ssc} with the objective function as formulated in \cite{elhamifar_sparse_2013} accounting for sparse corruptions of the data points. 

Table \ref{tab:fcce} shows the \ac{ce} for different choices of $L$, averaged over $100$ instances of $\cY$ for each $L$. In Table \ref{tab:fcrt}, we report the corresponding average running times.
\ac{ssc}-\ac{mp} outperforms \ac{ssc}-\ac{omp} (and \ac{tsc}) for all values of $L$, but $L = 2$, and does so at consistently lower running times (recall that \ac{ssc}-\ac{omp} and \ac{ssc}-\ac{mp} both perform exactly $\itr_{\max}$ iterations in this experiment, but \ac{ssc}-\ac{mp} has a lower per-iteration cost than \ac{ssc}-\ac{omp}). Furthermore, \ac{ssc}-\ac{omp} uniformly outperforms \ac{ssc}. The lowest \ac{ce} is obtained with \ac{nsn}. We note, however, that the performance of \ac{ssc}-\ac{mp}, except for $L=2$, almost matches that of \ac{nsn}, and \ac{ssc}-\ac{mp} consistently has roughly half the running time of \ac{nsn}. TSC has the lowest running time but yields the largest \ac{ce} (which is particularly high for this very data set \cite{heckel_robust_2013}). The running time of \ac{ssc} is one to two orders of magnitude higher than that of all other algorithms. Also note that the difference between the running times of \ac{ssc}-\ac{omp} and \ac{ssc}-\ac{mp} is small because $\itr_{\max}$ is small. \rev{Finally, the difference between the results for \ac{ssc}-\ac{omp}, \ac{ssc}, \ac{tsc}, and \ac{nsn} reported here compared to the results reported in published works \cite{you_sparse_2015, elhamifar_sparse_2013,heckel_robust_2013, park2014greedy} can be attributed to the fact that we do not perform post-processing on the adjacency matrix $\mA$ produced by the individual algorithms.}

Finally, we emphasize that for other applications such as, e.g., handwritten digit clustering \cite{heckel_robust_2013}, \ac{ssc} and \ac{tsc} yield lower CE than the other algorithms considered here, and none of the algorithms outperforms the others uniformly across applications as seen from experiments in \cite{elhamifar_sparse_2013,heckel_robust_2013,park2014greedy}.

\begin{table}[h]
\centering
\begin{tabular}{ l c c c c c}
  \hline
   $L$ & 2 & 3 & 5 & 8 & 10 \\
  \hline
  SSC-OMP & 2.83  &  4.04  &  6.81  &  12.98  &  14.14 \\
  SSC-MP & 4.16  &  3.72  &  5.24  &  9.09  &  11.36 \\
  SSC & 2.88  &  4.59  &  8.65  &  16.95  &  21.28 \\
  TSC &  10.71  &  16.81  &  29.73  &  38.34  &  41.22 \\
  NSN & 1.81  &  2.89  &  5.37  &  8.15  &  10.05 \\
  \hline
\end{tabular}
\caption{\label{tab:fcce}Average \ac{ce} (in percent) for face clustering.}
\end{table}

\begin{table}[h]
\centering
\begin{tabular}{l c c c c c}
  \hline
   $L$ & 2 & 3 & 5 & 8 & 10 \\
  \hline
  SSC-OMP & 0.21  &  0.36  &  0.80  &  2.01  &  3.13 \\
  SSC-MP & 0.15  &  0.28  &  0.65  &  1.74  &  2.80 \\
  SSC & 12.63  & 17.50 &  28.24  & 45.78 &  60.33 \\
  TSC &  0.08  &  0.14  &  0.29  &  0.62  &  0.95 \\
  NSN & 0.34  &  0.55  &  1.10  &  3.05  &  5.64 \\ 
  \hline
\end{tabular}
\caption{\label{tab:fcrt}Average running times (in seconds) for face clustering.}
\end{table}

\subsection{True positives/false positives tradeoff in \ac{dd}-stopping} \label{sec:tprfprexp}

Recall that for \ac{dd}-stopping Theorem \ref{co:trueconnections} guarantees that each data point $\vy_j \in \cY_\l$, $\l \in [L]$, is connected (in $G$) to at least $O(d_\l/\log(n_\l-1))$ other points in $\cY_\l$; we here refer to such connections as \ac{tp}. As already mentioned, Theorem \ref{co:trueconnections}, does, however, not guarantee the absence of false connections and provides a lower bound on the number of \ac{tp} only. 
It is therefore not clear, a priori, to what extent \ac{ssc}-\ac{omp} and \ac{ssc}-\ac{mp} under \ac{dd}-stopping, indeed, do detect the subspace dimensions. First, recall that by ``detecting the subspace dimensions'' we mean that 
the coefficient vectors $\vb_j$ for all $\vy_j \in \cY_\l$, $\l \in [L]$, have on the order of $d_\l$ non-zero entries corresponding to \ac{tp} and essentially no non-zero entries corresponding to false connections. 
We designate the number of \ac{tp} as $\tp_{\! \l}$, and the number of \ac{fp} as $\fp_\l$, 
both averaged over the data points corresponding to the subspace $\cS_\l$. 
We will also need the notions of \ac{tpr} and \ac{fpr} defined, in this experiment, as $\tpr_\l = \tp_{\! \l} /d_\l$ and $\fpr_\l = \fp_\l /(m-d_\l)$, respectively. 

We set $m = 300$, $L=4$, $\rho_\l = 4$, $\l \in [L]$, and choose the basis matrices $\Ul$ for the subspaces of dimensions $20$, $40$, $60$, and $80$, according to $\Ul \defeq [\bar \mU \; \; \Ult]$, where $\bar \mU$ and the $\Ult$ are drawn uniformly at random from the set of all orthonormal matrices of dimensions $300 \times 4$ and $300 \times (d_\l - 4)$, respectively. This guarantees that $\max_{k,\l \colon k \neq \l} \aff (\cS_k,\cS_\l) \geq \sqrt{1/5}$ as the subspaces all intersect in a shared $4$-dimensional space and possibly overlap in the orthogonal complement of this shared space. The data points are then drawn according to the statistical data model described in Section \ref{sec:mainres}.

Figure \ref{fig:tprfpr} shows $\tp_{\! \l}$ along with $\tpr_\l$ and $\fpr_\l$  
as a function of $\tau$ and for different values of $\sigma^2$. 
The middle and bottom rows in Figure \ref{fig:tprfpr} show that the \ac{tpr} curves corresponding to different $\l$ are almost on top of each other, i.e., $\tpr_1(\tau) \approx \ldots \approx \tpr_4(\tau) \approx c_\tpr(\tau)$, 
which means that the number of \ac{tp} for each subspace is, indeed, roughly proportional to the subspace dimension as $\tp_{\! \l}(\tau) = \tpr_\l(\tau) d_\l \approx c_{\,\tpr}(\tau) d_\l$, $\l \in [L]$. This indicates that the result in Theorem \ref{co:trueconnections} is order-wise optimal in $d_\l$. As, in addition, for large enough $\tau$, $\fpr_\l \approx 0$, $\l \in [L]$, we conclude that \ac{ssc}-\ac{omp} and \ac{ssc}-\ac{mp}, indeed, exhibit excellent subspace dimension detection properties provided that $\tau$ is chosen appropriately. 

We finally note that a similar experiment investigating the \ac{tpr}/\ac{fpr} tradeoff as a function of the Lasso parameter in \ac{ssc} was conducted in \cite[Sec. 2.4.3]{soltanolkotabi2014robust} with the main conclusion that a Lasso parameter on the order of $1/\sqrt{d_\l}$ extracts the subspace dimensions correctly order-wise with essentially no \ac{fp}. 

\renewcommand{\plotwidth}{0.45\textwidth}
\renewcommand{\plotheight}{0.35\textwidth}
\renewcommand{\plotseph}{1.5cm}
\renewcommand{\plotsepv}{2.5cm}
\newcommand{\pllw}{1.2pt}

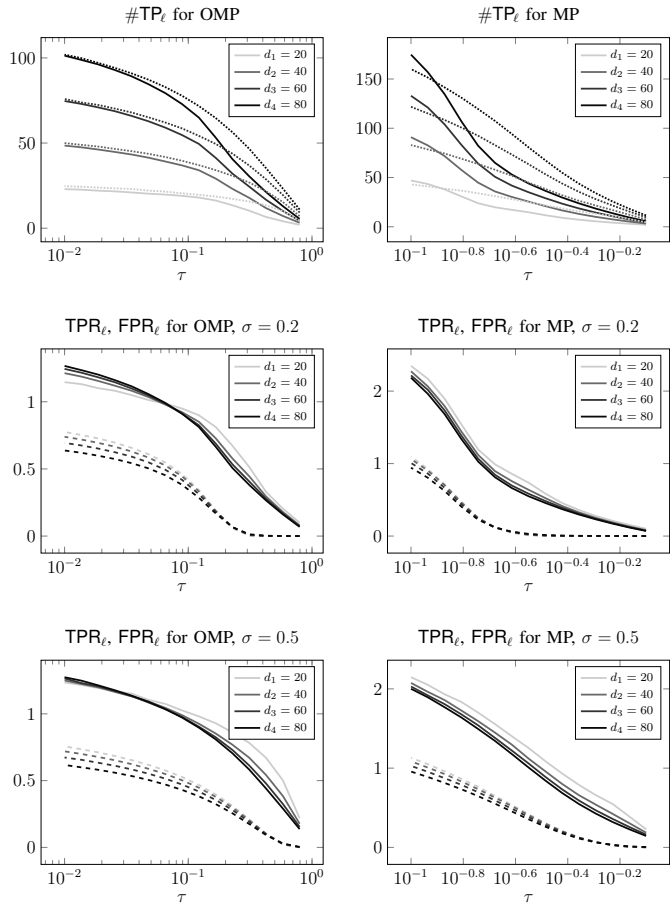
\begin{figure}[h!]
    \centering
    
\begin{tikzpicture}[scale=0.57, /tikz/font=\large] 
    \begin{semilogxaxis}[name=tdomp,title ={$\tp_{\! \l}$ for OMP},
            xlabel=$\tau$,
    width =\plotwidth,
    height =\plotheight,
    legend entries={ $d_1=20$, $d_2=40$, $d_3=60$, $d_4=80$} ,
    legend style={
            cells={anchor=west},
                    legend pos= north east,
                    font=\footnotesize,}
    ]
    \addplot +[line width=\pllw,mark=none,solid,\plotgreen] table[x index=0,y index=1]{./data/TPRFPR/TDOMP-0.20.dat};
    \addplot +[line width=\pllw,mark=none,solid,\plotred] table[x index=0,y index=2]{./data/TPRFPR/TDOMP-0.20.dat};
    \addplot +[line width=\pllw,mark=none,solid,\plotblue] table[x index=0,y index=3]{./data/TPRFPR/TDOMP-0.20.dat};
    \addplot +[line width=\pllw,mark=none,solid,\plotblack] table[x index=0,y index=4]{./data/TPRFPR/TDOMP-0.20.dat};
    \addplot +[line width=\pllw,mark=none,densely dotted,\plotgreen] table[x index=0,y index=1]{./data/TPRFPR/TDOMP-0.50.dat};
    \addplot +[line width=\pllw,mark=none,densely dotted,\plotred] table[x index=0,y index=2]{./data/TPRFPR/TDOMP-0.50.dat};
    \addplot +[line width=\pllw,mark=none,densely dotted,\plotblue] table[x index=0,y index=3]{./data/TPRFPR/TDOMP-0.50.dat};
    \addplot +[line width=\pllw,mark=none,densely dotted,\plotblack] table[x index=0,y index=4]{./data/TPRFPR/TDOMP-0.50.dat};
    
    \end{semilogxaxis}
    
    \begin{semilogxaxis}[ name=tdmp,title ={$\tp_{\! \l}$ for MP},anchor=west,at={($(tdomp.east)+(\plotseph,0)$)},
            xlabel=$\tau$,
    width =\plotwidth,
    height =\plotheight,
    legend entries={ $d_1=20$, $d_2=40$, $d_3=60$, $d_4=80$} ,
    legend style={
            cells={anchor=west},
                    legend pos= north east,
                    font=\footnotesize,}
    ]
    \addplot +[line width=\pllw,mark=none,solid,\plotgreen] table[x index=0,y index=1]{./data/TPRFPR/TDMP-0.20.dat};
    \addplot +[line width=\pllw,mark=none,solid,\plotred] table[x index=0,y index=2]{./data/TPRFPR/TDMP-0.20.dat};
    \addplot +[line width=\pllw,mark=none,solid,\plotblue] table[x index=0,y index=3]{./data/TPRFPR/TDMP-0.20.dat};
    \addplot +[line width=\pllw,mark=none,solid,\plotblack] table[x index=0,y index=4]{./data/TPRFPR/TDMP-0.20.dat};
    \addplot +[line width=\pllw,mark=none,densely dotted,\plotgreen] table[x index=0,y index=1]{./data/TPRFPR/TDMP-0.50.dat};
    \addplot +[line width=\pllw,mark=none,densely dotted,\plotred] table[x index=0,y index=2]{./data/TPRFPR/TDMP-0.50.dat};
    \addplot +[line width=\pllw,mark=none,densely dotted,\plotblue] table[x index=0,y index=3]{./data/TPRFPR/TDMP-0.50.dat};
    \addplot +[line width=\pllw,mark=none,densely dotted,\plotblack] table[x index=0,y index=4]{./data/TPRFPR/TDMP-0.50.dat};
    
    \end{semilogxaxis}
    
     \begin{semilogxaxis}[ name=tpfpomp02,title ={$\tpr_\l$, $\fpr_\l$ for OMP, $\sigma = 0.2$},anchor=north,at={($(tdomp.south)+(0,-\plotsepv)$)},
            xlabel=$\tau$,
    width =\plotwidth,
    height =\plotheight,
    legend entries={ $d_1=20$, $d_2=40$, $d_3=60$, $d_4=80$} ,
    legend style={
            cells={anchor=west},
                    legend pos= north east,
                    font=\footnotesize,}
    ]
    \addplot +[line width=\pllw,mark=none,solid,\plotgreen] table[x index=0,y index=1]{./data/TPRFPR/TPROMP-0.20.dat};
    \addplot +[line width=\pllw,mark=none,solid,\plotred] table[x index=0,y index=2]{./data/TPRFPR/TPROMP-0.20.dat};
    \addplot +[line width=\pllw,mark=none,solid,\plotblue] table[x index=0,y index=3]{./data/TPRFPR/TPROMP-0.20.dat};
    \addplot +[line width=\pllw,mark=none,solid,\plotblack] table[x index=0,y index=4]{./data/TPRFPR/TPROMP-0.20.dat};
    \addplot +[line width=\pllw,mark=none,dashed,\plotgreen] table[x index=0,y index=1]{./data/TPRFPR/FPROMP-0.20.dat};
    \addplot +[line width=\pllw,mark=none,dashed,\plotred] table[x index=0,y index=2]{./data/TPRFPR/FPROMP-0.20.dat};
    \addplot +[line width=\pllw,mark=none,dashed,\plotblue] table[x index=0,y index=3]{./data/TPRFPR/FPROMP-0.20.dat};
    \addplot +[line width=\pllw,mark=none,dashed,\plotblack] table[x index=0,y index=4]{./data/TPRFPR/FPROMP-0.20.dat};
    
    \end{semilogxaxis}
    
    \begin{semilogxaxis}[ name=tpfpmp02,title ={$\tpr_\l$, $\fpr_\l$ for MP, $\sigma = 0.2$},anchor=north,at={($(tdmp.south)+(0,-\plotsepv)$)},
            xlabel=$\tau$,
    width =\plotwidth,
    height =\plotheight,
    legend entries={ $d_1=20$, $d_2=40$, $d_3=60$, $d_4=80$} ,
    legend style={
            cells={anchor=west},
                    legend pos= north east,
                    font=\footnotesize,}
    ]
    \addplot +[line width=\pllw,mark=none,solid,\plotgreen] table[x index=0,y index=1]{./data/TPRFPR/TPRMP-0.20.dat};
    \addplot +[line width=\pllw,mark=none,solid,\plotred] table[x index=0,y index=2]{./data/TPRFPR/TPRMP-0.20.dat};
    \addplot +[line width=\pllw,mark=none,solid,\plotblue] table[x index=0,y index=3]{./data/TPRFPR/TPRMP-0.20.dat};
    \addplot +[line width=\pllw,mark=none,solid,\plotblack] table[x index=0,y index=4]{./data/TPRFPR/TPRMP-0.20.dat};
    \addplot +[line width=\pllw,mark=none,dashed,\plotgreen] table[x index=0,y index=1]{./data/TPRFPR/FPRMP-0.20.dat};
    \addplot +[line width=\pllw,mark=none,dashed,\plotred] table[x index=0,y index=2]{./data/TPRFPR/FPRMP-0.20.dat};
    \addplot +[line width=\pllw,mark=none,dashed,\plotblue] table[x index=0,y index=3]{./data/TPRFPR/FPRMP-0.20.dat};
    \addplot +[line width=\pllw,mark=none,dashed,\plotblack] table[x index=0,y index=4]{./data/TPRFPR/FPRMP-0.20.dat};
    
    \end{semilogxaxis}
    
     \begin{semilogxaxis}[ name=tpfpomp05,title ={$\tpr_\l$, $\fpr_\l$ for OMP, $\sigma = 0.5$},anchor=north,at={($(tpfpomp02.south)+(0,-\plotsepv)$)},
            xlabel=$\tau$,
    width =\plotwidth,
    height =\plotheight,
    legend entries={ $d_1=20$, $d_2=40$, $d_3=60$, $d_4=80$} ,
    legend style={
            cells={anchor=west},
                    legend pos= north east,
                    font=\footnotesize,}
    ]
    \addplot +[line width=\pllw,mark=none,solid,\plotgreen] table[x index=0,y index=1]{./data/TPRFPR/TPROMP-0.50.dat};
    \addplot +[line width=\pllw,mark=none,solid,\plotred] table[x index=0,y index=2]{./data/TPRFPR/TPROMP-0.50.dat};
    \addplot +[line width=\pllw,mark=none,solid,\plotblue] table[x index=0,y index=3]{./data/TPRFPR/TPROMP-0.50.dat};
    \addplot +[line width=\pllw,mark=none,solid,\plotblack] table[x index=0,y index=4]{./data/TPRFPR/TPROMP-0.50.dat};
    \addplot +[line width=\pllw,mark=none,dashed,\plotgreen] table[x index=0,y index=1]{./data/TPRFPR/FPROMP-0.50.dat};
    \addplot +[line width=\pllw,mark=none,dashed,\plotred] table[x index=0,y index=2]{./data/TPRFPR/FPROMP-0.50.dat};
    \addplot +[line width=\pllw,mark=none,dashed,\plotblue] table[x index=0,y index=3]{./data/TPRFPR/FPROMP-0.50.dat};
    \addplot +[line width=\pllw,mark=none,dashed,\plotblack] table[x index=0,y index=4]{./data/TPRFPR/FPROMP-0.50.dat};
    
    \end{semilogxaxis}
    
    \begin{semilogxaxis}[ name=tpfpmp05,title ={$\tpr_\l$, $\fpr_\l$ for MP, $\sigma = 0.5$},anchor=north,at={($(tpfpmp02.south)+(0,-\plotsepv)$)},
            xlabel=$\tau$,
    width =\plotwidth,
    height =\plotheight,
    legend entries={ $d_1=20$, $d_2=40$, $d_3=60$, $d_4=80$} ,
    legend style={
            cells={anchor=west},
                    legend pos= north east,
                    font=\footnotesize,}
    ]
    \addplot +[line width=\pllw,mark=none,solid,\plotgreen] table[x index=0,y index=1]{./data/TPRFPR/TPRMP-0.50.dat};
    \addplot +[line width=\pllw,mark=none,solid,\plotred] table[x index=0,y index=2]{./data/TPRFPR/TPRMP-0.50.dat};
    \addplot +[line width=\pllw,mark=none,solid,\plotblue] table[x index=0,y index=3]{./data/TPRFPR/TPRMP-0.50.dat};
    \addplot +[line width=\pllw,mark=none,solid,\plotblack] table[x index=0,y index=4]{./data/TPRFPR/TPRMP-0.50.dat};
    \addplot +[line width=\pllw,mark=none,dashed,\plotgreen] table[x index=0,y index=1]{./data/TPRFPR/FPRMP-0.50.dat};
    \addplot +[line width=\pllw,mark=none,dashed,\plotred] table[x index=0,y index=2]{./data/TPRFPR/FPRMP-0.50.dat};
    \addplot +[line width=\pllw,mark=none,dashed,\plotblue] table[x index=0,y index=3]{./data/TPRFPR/FPRMP-0.50.dat};
    \addplot +[line width=\pllw,mark=none,dashed,\plotblack] table[x index=0,y index=4]{./data/TPRFPR/FPRMP-0.50.dat};
    
    \end{semilogxaxis}
    
    \end{tikzpicture}
    \caption{\label{fig:tprfpr} \ac{tp}/\ac{fp}     tradeoff in \ac{dd}-stopping as a function of $\tau$. Top row: Solid lines: $\sigma = 0.2$, dotted lines: $\sigma = 0.5$. Middle and bottom rows: Solid lines: \ac{tpr}, dashed lines: \ac{fpr}.}
    \end{figure}

\subsection{Influence of $\itr_{\max}$ and $\mpspar$ in \ac{di}-stopping} \label{sec:inflsmax}

Recall that Theorems \ref{th:SSCOMP} and \ref{th:SSCMP} provide a range of admissible values for $\itr_{\max}$ and $\mpspar$ in \ac{di}-stopping. For small $d_{\min}$, these ranges are, however, small. As already pointed out, taking $\itr_{\max}$, $\mpspar$ too small leads to cluster split-ups, whereas $\itr_{\max}$, $\mpspar$ too large results in a large number of false connections in $G$. It is therefore important to determine the sensitivity of \ac{ssc}-\ac{omp} and \ac{ssc}-\ac{mp} performance w.r.t. the choice of $\itr_{\max}$ and $\mpspar$. The next experiment is devoted to this matter. 
We consider the \ac{ce} as well as the average \ac{tpr} and \ac{fpr} which, in this experiment, are defined as $\tpr'_\l = \tp_{\! \l} /n_\l$ and $\fpr'_\l = \fp_\l / (N-n_\l)$, respectively. These alternative normalizations are used as we are not interested in investigating the dependence of \ac{tp} and \ac{fp} on the $d_\l$, as was done in Section~\ref{sec:tprfprexp}; rather, we want to ensure that $\tpr'_\l, \fpr'_\l \in [0,1]$. 
Furthermore, we define the \ac{tp}-$\ell_1$-norm and the \ac{fp}-$\ell_1$-norm as the $\ell_1$-norm of the entries of the coefficient vectors $\vb_j$ corresponding to \ac{tp} and \ac{fp}, respectively, both averaged over all data points. The \ac{tp}- and \ac{fp}-$\ell_1$-norms hence correspond to half of the average weight associated with \ac{tp} and \ac{fp}, respectively; 
the factor $1/2$ stems from the adjacency matrix of $G$ being given by $\mA = \mB + \transp{\mB}$. The motivation for considering \ac{tp}-/\ac{fp}-$\ell_1$-norms comes from the fact that the performance of spectral clustering is determined not only by the number of \ac{tp} and \ac{fp}, but also by the 
weights associated with the \ac{tp} and the \ac{fp}. 
Specifically, even when the $\vb_j$ contain a considerable number of \ac{fp}, good performance can still be obtained provided that the corresponding \ac{fp}-$\ell_1$-norm is sufficiently small. 

We cluster synthetic data generated according to the statistical data model described in Section~\ref{sec:mainres} as well as images taken from the Extended Yale B data set (we use downsampled $48 \times 42$ pixel versions of the images, see Section \ref{sec:faceclustering} for a detailed description). More specifically, in the case of synthetic data, we set $L=3$, $m = 80$, $\sigma = 0.5$, $d_\l = 15$, $\l \in [L]$, and $\rho_\l = 4$, $\l \in [L]$, and we generate the bases $\Ul$, $\l \in [L]$, to intersect in a shared $3$-dimensional space by following the construction employed in Section \ref{sec:tprfprexp} such that $\max_{k,\l \colon k \neq \l} \aff (\cS_k,\cS_\l) = \sqrt{1/5}$. In the case of face clustering, we follow the procedure described in Section \ref{sec:faceclustering} to obtain instances of $\cY$ containing the face images of $L=3$ randomly selected individuals. 

Figure \ref{fig:fracesrprt} shows the \ac{ce}, \ac{tpr}/\ac{fpr}, and \ac{tp}-/\ac{fp}-$\ell_1$-norms as a function of $\itr_{\max}$ and $\mpspar$. For both clustering problems the \ac{ce} of \ac{ssc}-\ac{omp} is seen to increase rapidly as a function of $\itr_{\max}$, while for \ac{ssc}-\ac{mp} with $\mpspar = N$ (i.e., stopping is activated by $\itr = \itr_{\max}$), the \ac{ce} increases very slowly 
for the face clustering problem and does not increase at all in the case of synthetic data. This indicates that \ac{ssc}-\ac{mp} 
exhibits significantly smaller sensitivity to the choice of $\itr_{\max}$ than \ac{ssc}-\ac{omp}. As already pointed out in Section \ref{sec:algos}, this is due to the ability of \ac{ssc}-\ac{mp} to select points $\vy_i \in \cY_\l \backslash \{ \vy_j \}$ repeatedly to participate in the representation of $\vy_j \in \cY_\l$; for given $\itr_{\max}$ this results in \ac{ssc}-\ac{mp} producing fewer \ac{fp} than \ac{ssc}-\ac{omp}. Moreover, for \ac{ssc}-\ac{mp}, the \ac{tp}-$\ell_1$-norm exceeds the \ac{fp}-$\ell_1$-norm for all values of $\itr_{\max}$, while for \ac{ssc}-\ac{omp} the \ac{fp}-$\ell_1$-norm exceeds the \ac{tp}-$\ell_1$-norm for large $\itr_{\max}$, and does so significantly. The coefficient vectors produced by \ac{ssc}-\ac{mp} hence lead to more favorable conditions for the spectral clustering step than those produced by \ac{ssc}-\ac{omp}.

We further observe that the \ac{tpr} and \ac{fpr} of \ac{ssc}-\ac{mp}, with $\itr_{\max} = \infty$ (i.e., stopping is activated by $\norm[0]{\vb_j}=\mpspar$), as a function of $\mpspar$, increase at the same rate as the \ac{tpr} and \ac{fpr} of \ac{ssc}-\ac{omp} as a function of $\itr_{\max}$. 
However, the ratio of \ac{tp}- and \ac{fp}-$\ell_1$-norms for \ac{ssc}-\ac{mp} 
significantly exceeds that for \ac{ssc}-\ac{omp} for all values of $u$ ($u$ is the variable on the $x$-axis in Figure \ref{fig:fracesrprt} and corresponds to $\itr_{\max}$ for \ac{ssc}-\ac{omp} and $\mpspar$ for \ac{ssc}-\ac{mp}). 
In other words, even when we force the representations computed by \ac{ssc}-\ac{mp} and \ac{ssc}-\ac{omp} to have the same sparsity level, thereby discounting the advantage \ac{mp} has through its ability to reselect data points, \ac{ssc}-\ac{mp} still produces weight assignments in $G$ that are more favorable in terms of spectral clustering. 
Indeed, in both the face clustering and the synthetic data clustering problem the \ac{ce} incurred by \ac{ssc}-\ac{omp} significantly exceeds that of \ac{ssc}-\ac{mp} for most values of $u$. 
In summary, this indicates that when we enforce the same target sparsity level for \ac{ssc}-\ac{mp} and \ac{ssc}-\ac{omp}, \ac{ssc}-\ac{mp} is less sensitive to the choice of the sparsity level than \ac{ssc}-\ac{omp}. 

\revb{We finally note that while in the noisy case \ac{ssc}-\ac{mp} is much more robust than \ac{ssc}-\ac{omp} w.r.t. the choice of the parameters for \ac{di}-stopping, in the noiseless case \ac{ssc}-\ac{omp} yields slightly lower \ac{ce} than \ac{ssc}-\ac{mp} for \ac{di}-stopping if the subspace affinities are large. This matter is investigated numerically in Appendix \ref{sec:noiseless}.} 

\renewcommand{\plotwidth}{0.45\textwidth}
\renewcommand{\plotheight}{0.35\textwidth}
\renewcommand{\plotseph}{1.5cm}
\renewcommand{\plotsepv}{2.5cm}
\newcommand{\xlabis}{u}

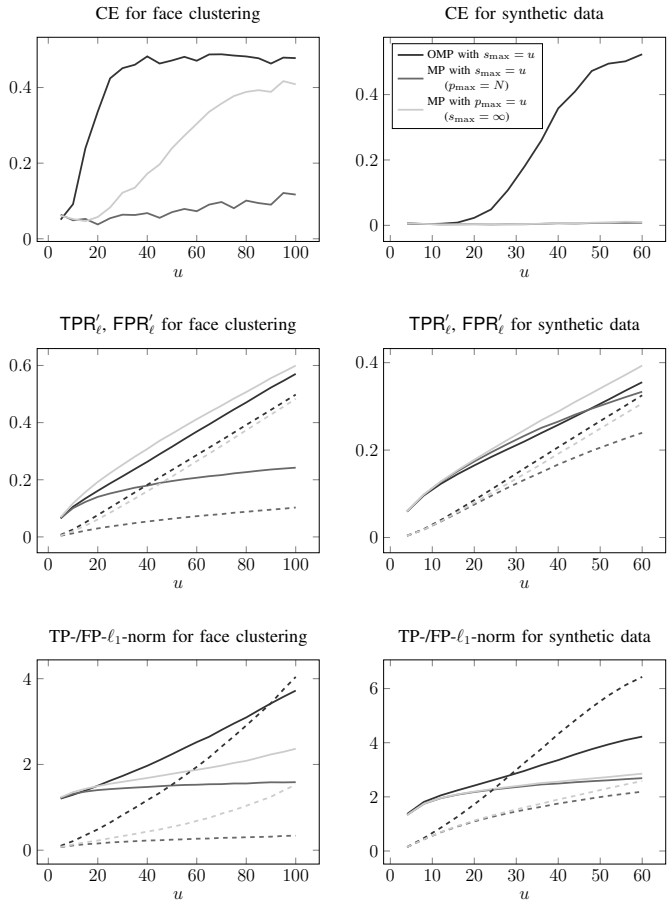
\begin{figure}[t!] 
\hspace{-0.6cm}   
    \centering
\begin{tikzpicture}[scale=0.57, /tikz/font=\large] 
    \begin{axis}[ name=cefaces,title ={CE for face clustering},
            xlabel=$\xlabis$,
    width =\plotwidth,
    height =\plotheight]
    \addplot +[line width=\pllw,mark=none,solid,\plotblue] table[x index=0,y index=1]{./data/itersens/CEs-itersens_faces.dat};
    \addplot +[line width=\pllw,mark=none,solid,\plotred] table[x index=0,y index=2]{./data/itersens/CEs-itersens_faces.dat};
    \addplot +[line width=\pllw,mark=none,solid,\plotgreen] table[x index=0,y index=3]{./data/itersens/CEs-itersens_faces.dat};
    
    \end{axis}
    
    \begin{axis}[ name=cesynth,title ={CE for synthetic data},anchor=west,at={($(cefaces.east)+(\plotseph,0)$)},
            xlabel=$\xlabis$,
    width =\plotwidth,
    height =\plotheight,
    legend entries={ {OMP with $\itr_{\max} = \xlabis$}, {\shortstack{MP with $\itr_{\max} = \xlabis$ \\ ($\mpspar = N$)}}, {\shortstack{MP with $\mpspar = \xlabis$ \\ ($\itr_{\max} = \infty$)}}} ,
    legend style={
            cells={anchor=west},
                    legend pos= north west,
                    font=\footnotesize,}
    ]
    \addplot +[line width=\pllw,mark=none,solid,\plotblue] table[x index=0,y index=1]{./data/itersens/CEs-itersens_synth_0.50.dat};
    \addplot +[line width=\pllw,mark=none,solid,\plotred] table[x index=0,y index=2]{./data/itersens/CEs-itersens_synth_0.50.dat};
    \addplot +[line width=\pllw,mark=none,solid,\plotgreen] table[x index=0,y index=3]{./data/itersens/CEs-itersens_synth_0.50.dat};
        
    \end{axis}
    
     \begin{axis}[ name=tpfpfaces,title ={$\tpr'_\l$, $\fpr'_\l$ for face clustering},anchor=north,at={($(cefaces.south)+(0,-\plotsepv)$)},
            xlabel=$\xlabis$,
    width =\plotwidth,
    height =\plotheight]
    \addplot +[line width=\pllw,mark=none,solid,\plotblue] table[x index=0,y index=1]{./data/itersens/TPFPs-itersens_faces.dat};
    \addplot +[line width=\pllw,mark=none,solid,\plotred] table[x index=0,y index=2]{./data/itersens/TPFPs-itersens_faces.dat};
    \addplot +[line width=\pllw,mark=none,solid,\plotgreen] table[x index=0,y index=3]{./data/itersens/TPFPs-itersens_faces.dat};
    \addplot +[line width=\pllw,mark=none,dashed,\plotblue] table[x index=0,y index=4]{./data/itersens/TPFPs-itersens_faces.dat};
    \addplot +[line width=\pllw,mark=none,dashed,\plotred] table[x index=0,y index=5]{./data/itersens/TPFPs-itersens_faces.dat};
    \addplot +[line width=\pllw,mark=none,dashed,\plotgreen] table[x index=0,y index=6]{./data/itersens/TPFPs-itersens_faces.dat};
    
    \end{axis}
    
    \begin{axis}[ name=tpfpsynth,title ={$\tpr'_\l$, $\fpr'_\l$ for synthetic data},anchor=north,at={($(cesynth.south)+(0,-\plotsepv)$)},
            xlabel=$\xlabis$,
    width =\plotwidth,
    height =\plotheight]
     \addplot +[line width=\pllw,mark=none,solid,\plotblue] table[x index=0,y index=1]{./data/itersens/TPFPs-itersens_synth_0.50.dat};
    \addplot +[line width=\pllw,mark=none,solid,\plotred] table[x index=0,y index=2]{./data/itersens/TPFPs-itersens_synth_0.50.dat};
    \addplot +[line width=\pllw,mark=none,solid,\plotgreen] table[x index=0,y index=3]{./data/itersens/TPFPs-itersens_synth_0.50.dat};
    \addplot +[line width=\pllw,mark=none,dashed,\plotblue] table[x index=0,y index=4]{./data/itersens/TPFPs-itersens_synth_0.50.dat};
    \addplot +[line width=\pllw,mark=none,dashed,\plotred] table[x index=0,y index=5]{./data/itersens/TPFPs-itersens_synth_0.50.dat};
    \addplot +[line width=\pllw,mark=none,dashed,\plotgreen] table[x index=0,y index=6]{./data/itersens/TPFPs-itersens_synth_0.50.dat};
    
    \end{axis}
    
     \begin{axis}[ name=tfnorm,title ={\ac{tp}-/\ac{fp}-$\ell_1$-norm for face clustering},anchor=north,at={($(tpfpfaces.south)+(0,-\plotsepv)$)},
            xlabel=$\xlabis$,
    width =\plotwidth,
    height =\plotheight]
    \addplot +[line width=\pllw,mark=none,solid,\plotblue] table[x index=0,y index=1]{./data/itersens/TF1norm-itersens_faces.dat};
    \addplot +[line width=\pllw,mark=none,solid,\plotred] table[x index=0,y index=2]{./data/itersens/TF1norm-itersens_faces.dat};
    \addplot +[line width=\pllw,mark=none,solid,\plotgreen] table[x index=0,y index=3]{./data/itersens/TF1norm-itersens_faces.dat};
    \addplot +[line width=\pllw,mark=none,dashed,\plotblue] table[x index=0,y index=4]{./data/itersens/TF1norm-itersens_faces.dat};
    \addplot +[line width=\pllw,mark=none,dashed,\plotred] table[x index=0,y index=5]{./data/itersens/TF1norm-itersens_faces.dat};
    \addplot +[line width=\pllw,mark=none,dashed,\plotgreen] table[x index=0,y index=6]{./data/itersens/TF1norm-itersens_faces.dat};
    
    \end{axis}
   
   \begin{axis}[ name=cesynth,title ={\ac{tp}-/\ac{fp}-$\ell_1$-norm for synthetic data},anchor=north,at={($(tpfpsynth.south)+(0,-\plotsepv)$)},
            xlabel=$\xlabis$,
    width =\plotwidth,
    height =\plotheight]
    \addplot +[line width=\pllw,mark=none,solid,\plotblue] table[x index=0,y index=1]{./data/itersens/TF1norm-itersens_synth_0.50.dat};
    \addplot +[line width=\pllw,mark=none,solid,\plotred] table[x index=0,y index=2]{./data/itersens/TF1norm-itersens_synth_0.50.dat};
    \addplot +[line width=\pllw,mark=none,solid,\plotgreen] table[x index=0,y index=3]{./data/itersens/TF1norm-itersens_synth_0.50.dat};
    \addplot +[line width=\pllw,mark=none,dashed,\plotblue] table[x index=0,y index=4]{./data/itersens/TF1norm-itersens_synth_0.50.dat};
    \addplot +[line width=\pllw,mark=none,dashed,\plotred] table[x index=0,y index=5]{./data/itersens/TF1norm-itersens_synth_0.50.dat};
    \addplot +[line width=\pllw,mark=none,dashed,\plotgreen] table[x index=0,y index=6]{./data/itersens/TF1norm-itersens_synth_0.50.dat};
    
    \end{axis}
    
\end{tikzpicture}
\caption[TP/FP tradeoff]{\label{fig:fracesrprt} Clustering performance of \ac{ssc}-\ac{omp} and \ac{ssc}-\ac{mp} for \ac{di}-stopping, as a function of $\itr_{\max}$ and $\mpspar$, respectively. Middle row: solid lines: \ac{tpr}, dashed lines: \ac{fpr}. Bottom row: solid lines: \ac{tp}-$\ell_1$-norm, dashed lines: \ac{fp}-$\ell_1$-norm.
}
\end{figure}

\section*{Acknowledgments}

The authors would like to thank Reinhard Heckel and Martin Jaggi for insightful discussions.

\FloatBarrier

\appendices

\revb{
\section{Influence of $\itr_{\max}$ and $\mpspar$ in \ac{di}-stopping for noiseless data}
\label{sec:noiseless}

We compare the influence of $\itr_{\max}$ and $\mpspar$ on the performance of \ac{ssc}-\ac{omp} and \ac{ssc}-\ac{mp} with \ac{di}-stopping and for noiseless data. To this end, we generate data lying in a union of three subspaces as described in Section \ref{sec:inflsmax} (with $\sigma = 0$), considering the pairs $(5,3)$, $(10,3)$, and $(10,6)$ for $(t,\rho)$, where $t$ denotes the number of dimensions which the three subspaces intersect in. Figure \ref{fig:noiseless} shows the \ac{ce} along with the quantities \ac{tpr}/\ac{fpr} and the \ac{tp}-/\ac{fp}-$\ell_1$-norm as a function of $\itr_{\max}$ (or $\mpspar$ for \ac{ssc}-\ac{mp} if the maximum sparsity level is used as stopping criterion) in the range $\{1, \ldots, 2d\}$. We observe that \ac{ssc}-\ac{omp} yields a slightly lower \ac{ce} than \ac{ssc}-\ac{mp} for $t=10$. While \ac{ssc}-\ac{mp} yields a higher \ac{tpr} and a lower \ac{fpr} than \ac{ssc}-\ac{omp} for most of the values of $t$, $\rho$, and $\itr_{\max}$ or $\mpspar$, \ac{ssc}-\ac{mp} assigns smaller values, than \ac{ssc}-\ac{omp}, to entries in the adjacency matrix $\mA$ corresponding to the true connections (see the plot in the last row, left, in Figure \ref{fig:noiseless}). This arguably leads to the slightly higher \ac{ce} of \ac{ssc}-\ac{mp} compared to \ac{ssc}-\ac{omp} for $t=10$.

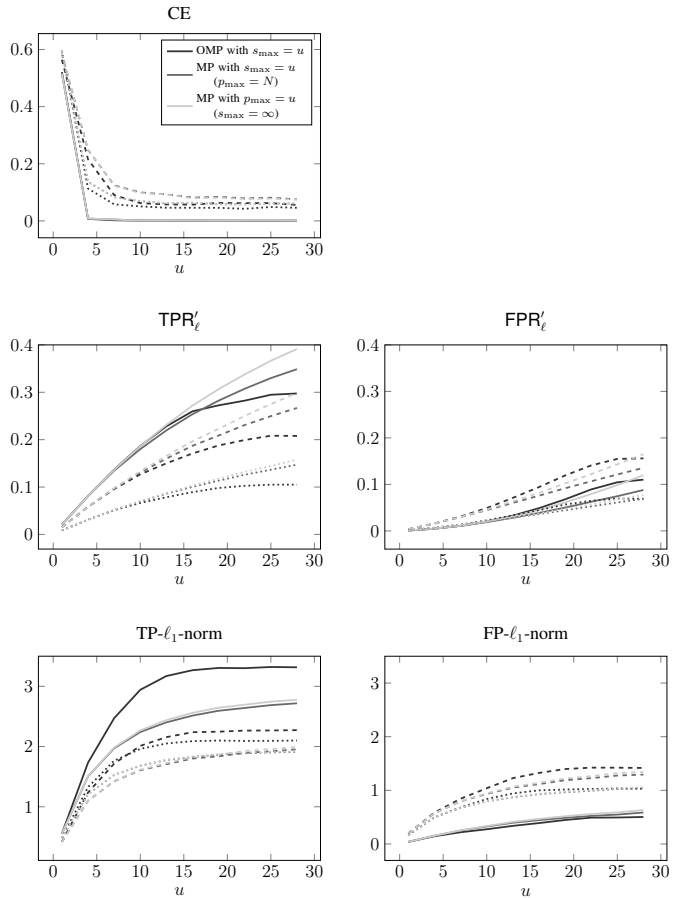
\begin{figure}[t!]
\hspace{-0.2cm}      
    \centering
\begin{tikzpicture}[scale=0.57, /tikz/font=\large] 
    
    \begin{axis}[ name=cesynth,title ={CE},
            xlabel=$\xlabis$,
    width =\plotwidth,
    height =\plotheight,
    legend entries={ {OMP with $\itr_{\max} = \xlabis$}, {\shortstack{MP with $\itr_{\max} = \xlabis$ \\ ($\mpspar = N$)}}, {\shortstack{MP with $\mpspar = \xlabis$ \\ ($\itr_{\max} = \infty$)}}} ,
    legend style={
            cells={anchor=west},
                    legend pos= north east,
                    font=\footnotesize,}
    ]
    \addplot +[mark=none,solid,line width=\pllw,\plotblue] table[x index=0,y index=1]{./data/noiseless/CEs-itersens_noiseless_synth_5_3.dat};
    \addplot +[mark=none,solid,line width=\pllw,\plotred] table[x index=0,y index=2]{./data/noiseless/CEs-itersens_noiseless_synth_5_3.dat};
    \addplot +[mark=none,solid,line width=\pllw,\plotgreen] table[x index=0,y index=3]{./data/noiseless/CEs-itersens_noiseless_synth_5_3.dat};
    
    \addplot +[mark=none,dashed,line width=\pllw,\plotblue] table[x index=0,y index=1]{./data/noiseless/CEs-itersens_noiseless_synth_10_3.dat};
    \addplot +[mark=none,dashed,line width=\pllw,\plotred] table[x index=0,y index=2]{./data/noiseless/CEs-itersens_noiseless_synth_10_3.dat};
    \addplot +[mark=none,dashed,line width=\pllw,\plotgreen] table[x index=0,y index=3]{./data/noiseless/CEs-itersens_noiseless_synth_10_3.dat};
    
    \addplot +[mark=none,dotted,line width=\pllw,\plotblue] table[x index=0,y index=1]{./data/noiseless/CEs-itersens_noiseless_synth_10_6.dat};
    \addplot +[mark=none,dotted,line width=\pllw,\plotred] table[x index=0,y index=2]{./data/noiseless/CEs-itersens_noiseless_synth_10_6.dat};
    \addplot +[mark=none,dotted,line width=\pllw,\plotgreen] table[x index=0,y index=3]{./data/noiseless/CEs-itersens_noiseless_synth_10_6.dat};
        
    \end{axis}
    
     \begin{axis}[ name=tpfpfaces,title ={$\tpr'_\l$},anchor=north,at={($(cesynth.south)+(0,-\plotsepv)$)}, ymax=0.4,
            xlabel=$\xlabis$,
    width =\plotwidth,
    height =\plotheight]
    \addplot +[mark=none,solid,line width=\pllw,\plotblue] table[x index=0,y index=1]{./data/noiseless/TPFPs-itersens_noiseless_synth_5_3.dat};
    \addplot +[mark=none,solid,line width=\pllw,\plotred] table[x index=0,y index=2]{./data/noiseless/TPFPs-itersens_noiseless_synth_5_3.dat};
    \addplot +[mark=none,solid,line width=\pllw,\plotgreen] table[x index=0,y index=3]{./data/noiseless/TPFPs-itersens_noiseless_synth_5_3.dat};
    
 \addplot +[mark=none,dashed,line width=\pllw,\plotblue] table[x index=0,y index=1]{./data/noiseless/TPFPs-itersens_noiseless_synth_10_3.dat};
    \addplot +[mark=none,dashed,line width=\pllw,\plotred] table[x index=0,y index=2]{./data/noiseless/TPFPs-itersens_noiseless_synth_10_3.dat};
    \addplot +[mark=none,dashed,line width=\pllw,\plotgreen] table[x index=0,y index=3]{./data/noiseless/TPFPs-itersens_noiseless_synth_10_3.dat};
   
    \addplot +[mark=none,dotted,line width=\pllw,\plotblue] table[x index=0,y index=1]{./data/noiseless/TPFPs-itersens_noiseless_synth_10_6.dat};
    \addplot +[mark=none,dotted,line width=\pllw,\plotred] table[x index=0,y index=2]{./data/noiseless/TPFPs-itersens_noiseless_synth_10_6.dat};
    \addplot +[mark=none,dotted,line width=\pllw,\plotgreen] table[x index=0,y index=3]{./data/noiseless/TPFPs-itersens_noiseless_synth_10_6.dat};
    
    \end{axis}
    
    \begin{axis}[ name=tpfpsynth,title ={$\fpr'_\l$},anchor=west,at={($(tpfpfaces.east)+(\plotseph,0)$)}, ymax=0.4,
            xlabel=$\xlabis$,
    width =\plotwidth,
    height =\plotheight]
     \addplot +[mark=none,solid,line width=\pllw,\plotblue] table[x index=0,y index=4]{./data/noiseless/TPFPs-itersens_noiseless_synth_5_3.dat};
    \addplot +[mark=none,solid,line width=\pllw,\plotred] table[x index=0,y index=5]{./data/noiseless/TPFPs-itersens_noiseless_synth_5_3.dat};
    \addplot +[mark=none,solid,line width=\pllw,\plotgreen] table[x index=0,y index=6]{./data/noiseless/TPFPs-itersens_noiseless_synth_5_3.dat};
    
 \addplot +[mark=none,dashed,line width=\pllw,\plotblue] table[x index=0,y index=4]{./data/noiseless/TPFPs-itersens_noiseless_synth_10_3.dat};
    \addplot +[mark=none,dashed,line width=\pllw,\plotred] table[x index=0,y index=5]{./data/noiseless/TPFPs-itersens_noiseless_synth_10_3.dat};
    \addplot +[mark=none,dashed,line width=\pllw,\plotgreen] table[x index=0,y index=6]{./data/noiseless/TPFPs-itersens_noiseless_synth_10_3.dat};
   
    \addplot +[mark=none,dotted,line width=\pllw,\plotblue] table[x index=0,y index=4]{./data/noiseless/TPFPs-itersens_noiseless_synth_10_6.dat};
    \addplot +[mark=none,dotted,line width=\pllw,\plotred] table[x index=0,y index=5]{./data/noiseless/TPFPs-itersens_noiseless_synth_10_6.dat};
    \addplot +[mark=none,dotted,line width=\pllw,\plotgreen] table[x index=0,y index=6]{./data/noiseless/TPFPs-itersens_noiseless_synth_10_6.dat};
    
    \end{axis}
    
     \begin{axis}[ name=tfnorm,title ={TP-$\ell_1$-norm},anchor=north,at={($(tpfpfaces.south)+(0,-\plotsepv)$)}, ymax=3.5,
            xlabel=$\xlabis$,
    width =\plotwidth,
    height =\plotheight]
    \addplot +[mark=none,solid,line width=\pllw,\plotblue] table[x index=0,y index=1]{./data/noiseless/TF1norm-itersens_noiseless_synth_5_3.dat};
    \addplot +[mark=none,solid,line width=\pllw,\plotred] table[x index=0,y index=2]{./data/noiseless/TF1norm-itersens_noiseless_synth_5_3.dat};
    \addplot +[mark=none,solid,line width=\pllw,\plotgreen] table[x index=0,y index=3]{./data/noiseless/TF1norm-itersens_noiseless_synth_5_3.dat};
    
 \addplot +[mark=none,dashed,line width=\pllw,\plotblue] table[x index=0,y index=1]{./data/noiseless/TF1norm-itersens_noiseless_synth_10_3.dat};
    \addplot +[mark=none,dashed,line width=\pllw,\plotred] table[x index=0,y index=2]{./data/noiseless/TF1norm-itersens_noiseless_synth_10_3.dat};
    \addplot +[mark=none,dashed,line width=\pllw,\plotgreen] table[x index=0,y index=3]{./data/noiseless/TF1norm-itersens_noiseless_synth_10_3.dat};
   
    \addplot +[mark=none,dotted,line width=\pllw,\plotblue] table[x index=0,y index=1]{./data/noiseless/TF1norm-itersens_noiseless_synth_10_6.dat};
    \addplot +[mark=none,dotted,line width=\pllw,\plotred] table[x index=0,y index=2]{./data/noiseless/TF1norm-itersens_noiseless_synth_10_6.dat};
    \addplot +[mark=none,dotted,line width=\pllw,\plotgreen] table[x index=0,y index=3]{./data/noiseless/TF1norm-itersens_noiseless_synth_10_6.dat};
    
    \end{axis}
   
   \begin{axis}[ name=cesynth,title ={FP-$\ell_1$-norm},anchor=west,at={($(tfnorm.east)+(\plotseph,0)$)}, ymax=3.5,
            xlabel=$\xlabis$,
    width =\plotwidth,
    height =\plotheight ]
    \addplot +[mark=none,solid,line width=\pllw,\plotblue] table[x index=0,y index=4]{./data/noiseless/TF1norm-itersens_noiseless_synth_5_3.dat};
    \addplot +[mark=none,solid,line width=\pllw,\plotred] table[x index=0,y index=5]{./data/noiseless/TF1norm-itersens_noiseless_synth_5_3.dat};
    \addplot +[mark=none,solid,line width=\pllw,\plotgreen] table[x index=0,y index=6]{./data/noiseless/TF1norm-itersens_noiseless_synth_5_3.dat};
    
 \addplot +[mark=none,dashed,line width=\pllw,\plotblue] table[x index=0,y index=4]{./data/noiseless/TF1norm-itersens_noiseless_synth_10_3.dat};
    \addplot +[mark=none,dashed,line width=\pllw,\plotred] table[x index=0,y index=5]{./data/noiseless/TF1norm-itersens_noiseless_synth_10_3.dat};
    \addplot +[mark=none,dashed,line width=\pllw,\plotgreen] table[x index=0,y index=6]{./data/noiseless/TF1norm-itersens_noiseless_synth_10_3.dat};
   
    \addplot +[mark=none,dotted,line width=\pllw,\plotblue] table[x index=0,y index=4]{./data/noiseless/TF1norm-itersens_noiseless_synth_10_6.dat};
    \addplot +[mark=none,dotted,line width=\pllw,\plotred] table[x index=0,y index=5]{./data/noiseless/TF1norm-itersens_noiseless_synth_10_6.dat};
    \addplot +[mark=none,dotted,line width=\pllw,\plotgreen] table[x index=0,y index=6]{./data/noiseless/TF1norm-itersens_noiseless_synth_10_6.dat};
    
    \end{axis}
    
\end{tikzpicture}
\caption[TP/FP tradeoff]{\label{fig:noiseless} \revb{Clustering performance of \ac{ssc}-\ac{omp} and \ac{ssc}-\ac{mp} for \ac{di}-stopping, as a function of $\itr_{\max}$ and $\mpspar$ for noiseless data. Solid line: $t=5$, $\rho=3$; dashed line: $t=10$, $\rho=3$; dotted line: $t=10$, $\rho=6$.
}}
\end{figure}
}

\section{Proof of Theorem \ref{th:SSCOMP}} \label{sec:pfsscomp}

Throughout the proof, we shall use $\Yl \defeq \Xl + \Zl = \Ul \Al + \Zl$, $\Xl \in \reals^{m \times n_\l}$, $\Al \in \reals^{d_\l \times n_\l}$, and $\Zl \in \reals^{m \times n_\l}$ to denote the matrices whose columns are the $\yl_i$, $\xl_i$, $\al_i$, and $\zl_i$, $i \in [n_\l]$, respectively. 
Furthermore, $\Pp \defeq \Ul \transp{\Ul}$\label{def:pppo} and $\Po \defeq \mI - \Ul \transp{\Ul}$ stand for the orthogonal projection onto $\cS_\l$ and its orthogonal complement (in $\reals^m$) $\cS_\l^\po$, respectively. We do not indicate the dependence of $\Pp$ and $\Po$ on $\l$ as this is always clear from the context. For $\vv \in \reals^m$ and $\mA \in \reals^{m \times n}$, we use the shorthands\vspace{0.05cm} $\vv_\pp \defeq \Pp \vv$, $\mA_\pp \defeq \Pp \mA$, $\vv_\po \defeq \Po \vv$, and $\mA_\po \defeq \Po \mA$. Further, $\atl_i \in \reals^{d_\l}$ denotes the coefficients of $\yl_{i \pp}$ in the basis $\Ul$,\vspace{0.05cm} i.e., $\yl_{i \pp} = \Ul \atl_i$, and similarly $\Yl_\pp = \Pp \Yl = \Ul \Atl$. Note that the distribution of $\atl_i = \al_i + \transp{\Ul} \zl_i$ is rotationally invariant as $\al_i$ and $\transp{\Ul} \zl_i$ are statistically independent and both have rotationally invariant distributions. Finally, $\rs(\vx, \mD)$ and $\Lambda_\itr(\vx,\mD)$ denote the residual and the index set, respectively, after iteration $\itr$, obtained by \ac{omp} applied to $\vx$ with the columns of $\mD$ as dictionary elements.

If $\min_{\l \in [L]} \{c_s d_\l / \log( (n_\l - 1) e / s_{\max})\} < 1$, then the condition in Theorem \ref{th:SSCOMP} admits zero \ac{omp} iterations, i.e., the graph $G$ delivered by \ac{ssc}-\ac{omp} has an empty edge set and thereby trivially no false connections. 
We therefore consider the case $1 \leq \itr_{\max} \leq \min_{\l \in [L]} \{c_s d_\l / \log( (n_\l - 1) e / s_{\max})\}$ in the remainder of the proof. 

The graph $G$ obtained by \ac{ssc}-\ac{omp} has no false connections 
if for each $\yl_i \in \cY_\l$, for all $\l \in [L]$, the \ac{omp} algorithm selects points from $\cY_\l$ in all $\itr_\mathrm{max}$ iterations.\footnote{
For \ac{di}-stopping the \ac{omp} algorithm terminates w.p. $1$ after exactly $\itr_{\max}$ iterations as in our data model the points in $\cY$ are in general position w.p. 1 and $\itr_{\max} < \min_{\l \in [L]} c_s d_\l < \min\{m,N-1\}$ by the condition on $\itr_{\max}$ in Theorem \ref{th:SSCOMP}.

} 
Now, the \ac{omp} selection rule \eqref{eq:OMPSelRule} for $\yl_i$ implies that \ac{omp} selects a point from $\cY_\l$ in the $(\itr+1)$-st iteration if
\begin{equation}\label{eq:CorrSelCond}
\max_{k \neq \l, j} \abs{\innerprod{\yk_j}{\rs}} < \max_{j \in [n_\l] \backslash (\Lambda_{\itr}\cup \{i\})} \abs{\innerprod{\yl_j}{\rs}},
\end{equation}
where $\max_{k \neq \l, j}$ denotes maximization over subspaces $k \in [L]$, $k \neq \l$, and over the indices $j$ of the points $\yk_j \in \cY_k$ in these subspaces. Hence, the graph $G$ obtained by \ac{ssc}-\ac{omp} satisfies the \ac{nfc} property if \eqref{eq:CorrSelCond} holds for all $\itr_{\max}$ iterations, for each $\yl_i \in \cY_\l$, for all $\l \in [L]$. We now establish that this holds for our statistical data model w.p. at least $P^\star$ as defined in \eqref{eq:pstar}. 

Our analysis will be based on an auxiliary algorithm termed ``reduced \ac{omp}'', which has access to the reduced dictionary $\cY_\l \backslash \{ \yl_i \}$ only---instead of the full dictionary $\cY \backslash \{ \yl_i \}$---to represent $\yl_i$. We henceforth use the shorthands $\rsl$ for the residuals $\rs(\yl_i, \Yl_{-i})$ corresponding to reduced \ac{omp}. The dependence of the $\rsl$ on the index $i$ of the data point $\yl_i$ to be represented is not made explicit for notational ease. If the $\rsl$ satisfy \eqref{eq:CorrSelCond} for \emph{all iterations} $\itr \in [\itr_{\max}]$, the reduced OMP algorithm and the original OMP algorithm select exactly the same data points and do so in exactly the same order. In this case, we also have $\rsl = \rs$, for all $\itr \in [\itr_{\max}]$. As $\rsl$ satisfying \eqref{eq:CorrSelCond} for all $\itr \in [\itr_{\max}]$ is necessary and sufficient for $\rs$ to satisfy \eqref{eq:CorrSelCond} for all $\itr \in [\itr_{\max}]$, a lower bound $P^\star$ on the probability of $\rsl$ satisfying \eqref{eq:CorrSelCond} for all $\itr \in [\itr_{\max}]$ also constitutes a lower bound on the probability of $\rs$ satisfying \eqref{eq:CorrSelCond} for all $\itr \in [\itr_{\max}]$. Working with the reduced OMP algorithm is beneficial as $\rsl$ is a function of the points in $\cY_\l$ only and is therefore statistically independent of the points in $\cY \backslash \cY_\l$. This is significant as it will allow us to apply standard concentration of measure inequalities for independent random variables. In the remainder of the proof, we work with reduced \ac{omp} exclusively.

We start by providing intuition on the proof idea. To this end, we expand the inner products in \eqref{eq:CorrSelCond} according to 
\begin{align}
\innerprod{\yk_j}{\rsl} &= \innerprod{\xk_j+\zk_j}{\rpsl + \rosl} \nonumber \\
&= \innerprod{\xk_j}{\rpsl} + \innerprod{\xk_j}{\rosl} \nonumber \\ &\qquad+ \innerprod{\zk_j}{\rpsl} + \innerprod{\zk_j}{\rosl}. \label{eq:innerprodexp}
\end{align} 
The first term in \eqref{eq:innerprodexp} quantifies the similarity of the portions of $\yk_j$ and $\rsl$ that lie in $\cS_k$ and $\cS_\l$, respectively, i.e., the ``signal components'' of $\yk_j$ and $\rsl$, while the other terms all account for interactions with or between ``undesired components'' residing in $\cS_k^\po$, $\cS_\l^\po$. 
If the similarities---in absolute value---of the ``signal components'' for $k = \l$, $j \in [n_\l] \backslash ( \Lambda_\itr \cup \{ i \})$, are sufficiently large relative to those for $k \neq \l$, $j \in [n_k]$, and if the interactions of all ``undesired components'' are sufficiently small, for $k, \l \in [L]$, then \eqref{eq:CorrSelCond} holds. Following \cite[Proofs of Thm. 3, Cor. 1]{heckel2015dimensionality} this intuition will be made quantitative and rigorous by introducing events that, when conditioned on, yield bounds on the absolute values of the individual terms in \eqref{eq:innerprodexp} that are of analytically amenable form. These bounds will then be employed to derive an upper bound on the \ac{lhs} and a lower bound on the \ac{rhs} of \eqref{eq:CorrSelCond} that both hold conditionally on the intersection of the underlying events. Based on these bounds, we will then show that the clustering condition \eqref{eq:ClusCondThm} implies \eqref{eq:CorrSelCond} w.p. at least $P^\star$. The particular choice of the events that we condition on is delicate, but when done properly, allows us to make the statistical dependencies between $\rsl$ and $\yl_j$, $j \in [n_\l] \backslash \{i\}$, analytically tractable. 
We finally note that although the general idea of conditioning on suitably defined events is taken from previous work by the authors \cite[Proofs of Thm.~3, Cor.~1]{heckel2015dimensionality}, the choice of the specific events as well as other technical aspects of the present proof differ significantly from \cite[Proofs of Thm.~3, Cor.~1]{heckel2015dimensionality}.

We commence the formal proof by upper-bounding the \ac{lhs} of \eqref{eq:CorrSelCond} according to
\begin{align}
&\max_{k \neq \l, j} \abs{\innerprod{\xk_j+\zk_j}{\rpsl + \rosl}} \nonumber \\
	&\qquad\qquad\leq \max_{k \neq \l, j} \abs{\innerprod{\xk_j}{\rpsl}}
	+ \max_{k \neq \l, j} \abs{\innerprod{\xk_j}{\rosl}} \nonumber \\
	&\qquad\qquad\qquad+\max_{k \neq \l, j} \abs{\innerprod{\zk_j}{\rsl}} \nonumber \\
	&\qquad\qquad\leq 4 \log( N^3 \itr_{\max}) \frac{\norm[F]{\transp{\Uk} \Ul}}{\sqrt{d_k} \sqrt{d_\l}} \norm[2]{\rpsl} \nonumber \\
	&\qquad\qquad\qquad+ \frac{\sqrt{2 \log( N^3 \itr_{\max})}}{\sqrt{m - d_\l}} \norm[2]{\rosl} \nonumber \\
	&\qquad\qquad\qquad+ \frac{\sqrt{2 \log( N^3 \itr_{\max})}}{\sqrt{m}} \frac{3}{2} \sigma \left(1 + \frac{3}{2} \sigma \right), \label{eq:CorrSelCondLHSUB}
\end{align}
where the second inequality holds on the event $\Ea \cap \Eb \cap \Ec \cap \Ed$ with
\begin{align}
\Ea &\defeq \vast\{ \max_{k \neq \l, j} \abs{\innerprod{\xk_j}{\rpsl}} \nonumber \\
&\quad\qquad \leq 4 \log( N^3 \itr_{\max}) \frac{\norm[F]{\transp{\Uk} \Ul}}{\sqrt{d_k} \sqrt{d_\l}} \norm[2]{\rpsl} \vast\}, \label{eq:DefEa} \\
\Eb &\defeq \left\{ \max_{k \neq \l, j} \abs{\innerprod{\xk_j}{\rosl}} \! \leq \! \frac{\sqrt{2 \log( N^3 \itr_{\max})}}{\sqrt{m - d_\l}} \norm[2]{\rosl} \right\}\!\!, \label{eq:DefEb} \\
\Ec &\defeq \Bigg\{ \max_{k \neq \l, j} \abs{\innerprod{\zk_j}{\rsl}} \nonumber \\
&\quad \leq \frac{\sqrt{2 \log( N^3 \itr_{\max})}}{\sqrt{m}} \left(1 + \norm[2]{\zl_i} \right) \max_{k \neq \l, j} \norm[2]{\zk_j} \Bigg\}\!, \label{eq:DefEc} \\ 
\Ed &\defeq \Bigg\{ \left\{ \norm[2]{\zl_{j \pp}} \leq \frac{3}{2} \frac{ \sqrt{d_\l}}{\sqrt{m}} \sigma \right\} \cap \left\{ \norm[2]{\zl_{j}} \leq \frac{3}{2} \sigma \right\}, \nonumber \\
&\qquad\qquad \forall \l \in [L], j \in [n_\l] \Bigg\}. \label{eq:DefEd}
\end{align}
Note that the dependence of $\Ea$ and $\Eb$ on $i$ is due to $\rpsl$ and $\rosl$, both of which are functions of $\yl_i$. 
Here, $\Ea$ pertains to the similarities of the ``signal components'' for $k \neq \l$, $\Eb$ and $\Ec$ quantify the similarity of ``undesired components'', and $\Ed$ controls the magnitude of the ``undesired components'' of the $\yl_j$.

We proceed by lower-bounding the \ac{rhs} of \eqref{eq:CorrSelCond}.\vspace{0.05cm} Using $\norm[2]{\smash{\rsl}} = \norm[2]{\smash{(\mI - \Yl_{\Lambda_{\itr}} \pinv{(\Yl_{\Lambda_{\itr}})}) \yl_i}} \leq \norm[2]{\smash{\yl_i}} \leq 1 + \norm[2]{\smash{\zl_i}}$\vspace{0.05cm} (where the inequality is thanks to $\norm[2]{\smash{\xl_i}} = 1$ and $\mI - \Yl_{\Lambda_{\itr}} \pinv{(\Yl_{\Lambda_{\itr}})}$ being an orthogonal projection matrix), we find that on the event $\Ed \cap \Ee$ with
\begin{align}
 \Ee \defeq &\Bigg\{ \max_{j \in [n_\l] \backslash (\Lambda_{\itr}\cup \{i\})} \abs{\innerprod{\yl_j}{\rsl}} \nonumber \\
 & \qquad \geq \left(1-\frac{c_4+1}{\sqrt{\rho_\l}}\right)\frac{\norm[2]{\rpsl}}{\sqrt{d_\l}} \nonumber \\ 
 & \qquad \qquad - \sigma \left(\frac{1}{\sqrt{m}} + \frac{2}{\sqrt{n_\l - 1}} \right) \norm[2]{\rsl} \Bigg\}, \label{eq:DefEe}
\end{align}
where $c_4 > 0$ is the numerical constant in Lemma \ref{le:RHSLB}, 
the \ac{rhs} of \eqref{eq:CorrSelCond} obeys
\begin{align}
 &\max_{j \in [n_\l] \backslash (\Lambda_{\itr}\cup \{i\})} \abs{\innerprod{\yl_j}{\rsl}} \geq \left(1-\frac{c_4+1}{\sqrt{\rho_\l}}\right)\frac{\norm[2]{\rpsl}}{\sqrt{d_\l}} \nonumber \\
 & \qquad\qquad\qquad - \sigma \left(\frac{1}{\sqrt{m}} + \frac{2}{\sqrt{n_\l - 1}} \right) \left(1 + \frac{3}{2} \sigma \right). \label{eq:CorrSelCondRHSLB}
\end{align}
On $\Ea \cap \Eb \cap \Ec \cap \Ed \cap \Ee$, \eqref{eq:CorrSelCond} is now implied by [\ac{rhs} of \eqref{eq:CorrSelCondLHSUB}] $<$ [\ac{rhs} of \eqref{eq:CorrSelCondRHSLB}]; multiplying this inequality by $\sqrt{d_\l}/(4 \log (N^3s_{\max}) \norm[2]{\smash{\rpsl}})$, we get 
\begin{align}
&\underbrace{\frac{\norm[F]{\transp{\Uk} \Ul}}{\sqrt{d_k}}}_{\leq \underset{k \colon k \neq \l}{\max} \aff (\cS_k,\cS_\l)}
+ \frac{1}{\sqrt{8 \log( N^3 \itr_{\max})}}
\Bigg( \underbrace{\frac{\sqrt{d_\l}}{\sqrt{m - d_\l}}}_{\leq \sqrt{2 d_\l}/\sqrt{m}} \frac{\norm[2]{\rosl}}{\norm[2]{\rpsl}} \nonumber \\
& \hspace{4cm}+ \frac{ \sigma}{\norm[2]{\rpsl}} \frac{\sqrt{d_\l}}{\sqrt{m}} \frac{3}{2} \left(1 + \frac{3}{2} \sigma \right)\Bigg) \nonumber \\
& \quad < \frac{1}{4 \log( N^3 \itr_{\max})}
\Bigg( \underbrace{\left(1-\frac{c_4+1}{\sqrt{\rho_\l}}\right)}_{\geq 1/2} \nonumber \\
&\hspace{1.1cm}- \frac{\sigma}{\norm[2]{\rpsl}} \Bigg(\frac{\sqrt{d_\l}}{\sqrt{m}} + \underbrace{\frac{2\sqrt{d_\l}}{\sqrt{n_\l - 1}}}_{= 2/\sqrt{\rho_\l}} \Bigg) \left(1 + \frac{3}{2} \sigma \right) \Bigg),
    \label{eq:CorrSelCondCombA}
\end{align}
where $1-(c_4+1)/\sqrt{\rho_\l} \geq 1/2$ follows from $\rho_\l \geq \rho_{\min} \geq c_\rho \defeq 4(c_4+1)^2$, for all $\l \in [L]$, and $\sqrt{d_\l}/\sqrt{m - d_\l} \leq \sqrt{2 d_\l}/\sqrt{m}$ is by $m \geq 2 d_{\max} \geq 2 d_\l$, for all $\l \in [L]$. Rearranging terms in \eqref{eq:CorrSelCondCombA} and using $\sqrt{8 \log(N^3 \itr_{\max})} < 4 \log(N^3 \itr_{\max})$, for $N \geq 2$, we can see that \eqref{eq:CorrSelCondCombA} is implied by
\begin{align}
&\max_{k \colon k \neq \l} \aff (\cS_k,\cS_\l) + \frac{1}{\norm[2]{\rpsl} \sqrt{8 \log (N^3 s_{\max})}} \Bigg(\frac{\sqrt{2 d_\l}}{\sqrt{m}} \norm[2]{\rosl} \nonumber \\
& \quad+ \sigma \left( \frac{5}{2} \frac{\sqrt{d_\l}}{\sqrt{m}} + \frac{2}{\sqrt{\rho_\l}} \right) \left( 1 + \frac{3}{2} \sigma \right) \Bigg) \leq \frac{1}{8 \log(N^3 s_{\max})}. \label{eq:CorrSelCondCombB}
\end{align}
To further simplify \eqref{eq:CorrSelCondCombB}, we upper-bound $\norm[2]{\smash{\rosl}}$  and lower-bound $\norm[2]{\smash{\rpsl}}$. To this end, we introduce the events
\begin{align}
\Ef &\defeq \left\{ \norm[2]{\rosl} \leq \norm[2]{\zl_{i \po}} + \frac{3 \sigma}{\tilde a} \norm[2]{\yl_i}, \forall \itr \leq \itr_{\max} \right\}, \nonumber \\
\Eg &\defeq \vast\{ \norm[2]{\rpsl} \nonumber \\
& \quad > \norm[2]{\yl_{i \pp}}\! \left(\frac{2}{3} - \sqrt{ \frac{3 \smaxparlb \log ((n_\l - 1)e / \smaxparlb)}{d_\l}} \right), \forall \itr \leq \smaxparlb \vast\}\!, \label{eq:resplbdef}
\end{align}
where\vspace{0.05cm} $\tilde a \defeq \min_{j \in [n_\l] \backslash\{ i \}} \norm[2]{\smash{\yl_{j \pp}}} \geq \min_{j \in [n_\l] \backslash\{ i \}} (\norm[2]{\smash{\xl_j}} - \norm[2]{\smash{\zl_{j \pp}}}) \geq 1 - \max_{j \in [n_\l] \backslash\{ i \}} \norm[2]{\smash{\zl_{j \pp}}}$.\vspace{0.05cm} Setting $\smaxparlb = \itr_{\max}$ in \eqref{eq:resplbdef}, on $\Ed \cap \Ef \cap \Eg$, we have
\begin{align}
\norm[2]{\rosl} &\leq \frac{3}{2} \sigma + 3 \sigma \frac{1 + \frac{3}{2} \sigma}{1 - \frac{3}{2} \frac{\sqrt{d_\l}}{\sqrt{m}} \sigma} \leq \sigma (8 + 10 \sigma), \label{eq:resoSimpleUB} \\
\norm[2]{\rpsl} & > \left(1 - \frac{3}{2} \frac{\sqrt{d_\l}}{\sqrt{m}} \sigma \right) \left(\frac{2}{3} - \sqrt{3 c_\itr}\right) \nonumber \\
&> \left(1 - \frac{3}{2} \frac{\sqrt{d_\l}}{\sqrt{m}} \sigma \right) \frac{1}{9} > \frac{1}{20}, \label{eq:respSimpleLB}
\end{align}
where we employed the assumptions $m \geq 2 d_{\max} \geq 2 d_\l$, for all $\l \in [L]$, and $\sigma \leq 1/2$ to get \eqref{eq:resoSimpleUB}, and used $\s_{\max} \leq c_s d_\l / \log( (n_\l - 1) e / s_{\max})$, for all $\l \in [L]$, and $c_s \defeq \min\{1/10, c_1\}$ 
(with $c_1$ the constant in Lemma \ref{le:ResOrthUB} below), to arrive at \eqref{eq:respSimpleLB}. With \eqref{eq:resoSimpleUB} and \eqref{eq:respSimpleLB}, it follows that \eqref{eq:CorrSelCondCombB} is implied by
\begin{align}
\max_{k \colon k \neq \l} \aff (\cS_k,\cS_\l) &+ \frac{10 \sigma}{\sqrt{\log (N^3 s_{\max})}} \Bigg( \frac{\sqrt{d_\l}}{\sqrt{m}}\left(10 + 13 \sigma \right) \nonumber \\
&\quad + \frac{\sqrt{2}}{\sqrt{\rho_\l}} \left( 1 + \frac{3}{2} \sigma \right) \Bigg) \leq \frac{1}{8 \log (N^3 s_{\max})}. \nonumber
\end{align}
This inequality holds for all $\l \in [L]$ by the clustering condition \eqref{eq:ClusCondThm} with $c(\sigma) = 10 + 13 \sigma$. Hence, on the event 
\begin{align}
\Estar \defeq \bigcap_{\l,i,s}  &\big(\Ea \cap \Eb \cap \Ec \nonumber \\
& \qquad \qquad \cap \Ed \cap \Ee \cap \Ef \cap \Eg\big), \label{eq:estar}
\end{align}
\eqref{eq:ClusCondThm} implies \eqref{eq:CorrSelCond} for every $\yl_i \in \cY_\l$, for all $\l \in [L]$, and the graph $G$ obtained by \ac{ssc}-\ac{omp} has no false connections. 
It remains to lower-bound $\prob{\Estar}$. 
By the union bound, we have
\begin{align}
&\prob{\Estar} = 1 - \prob{\comp{\mc E}^\star} \nonumber \\
&\quad\geq 1 - \prob{\comp{\mc E}_4} \nonumber \\
&\qquad - \sum_{\l \in [L], i \in [n_\l]} \Big( \prob{\evcomp{5}{\l,i}} + \prob{\evcomp{6}{\l,i}} + \prob{\evcomp{7}{\l,i}} \Big)  \nonumber \\
&\qquad -  \sum_{\substack{\l \in [L], i \in [n_\l],\\ s \in [s_{\max}]}} \!\!\Big(\prob{\evcomp{1}{\l,i,s}} + \prob{\evcomp{2}{\l,i,s}} + \prob{\evcomp{3}{\l,i,s}} \Big)  \nonumber \\
&\quad\geq 1 - \sum_{\l \in [L]} n_\l ( e^{- d_\l/8} + e^{-m/8} ) \nonumber \\ 
&\qquad- \!\!\! \sum_{\l \in [L], i \in [n_\l]} \Big( 2e^{-c_5 d_\l} + 2e^{-2 m} \nonumber \\
&\qquad\qquad\quad+ 2e^{-c_2 m} + 2e^{-c_3 d_\l} + e^{-d_\l/18}\Big)  \nonumber \\
&\qquad- \sum_{\substack{\l \in [L], i \in [n_\l],\\ s \in [s_{\max}]}} \Big(\frac{2}{N^2 s_{\max}} + \frac{2}{N^2 s_{\max}} + \frac{2}{N^2 s_{\max}} \Big)  \label{eq:TotProbUnion} \\
&\quad\geq 1 - \frac{6}{N} - \sum_{\l \in [L]} n_\l (6e^{-c_d d_\l} + 5e^{-c_m m}), \nonumber 
\end{align}
where $c_d \defeq \min\{1/18, c_3, c_5\}$, $c_m \defeq \min\{ 1/8, c_2\}$, and \eqref{eq:TotProbUnion} follows from Lemmata \ref{le:EaEbEc}, \ref{le:ResParaLB}, \ref{le:ResOrthUB}, \ref{le:RHSLB}, and \ref{le:Ed}. 

The proofs of Lemmata \ref{le:EaEbEc} and \ref{le:ResParaLB} rely on the rotational invariance of the distributions of $\rpsl$ and $\rosl$ on $\cS_\l$ and $\cS_\l^\perp$, respectively, which is inherited from the rotational invariance of the distributions of the $\xl_j$ and $\zl_j$ 
characterized next. 

\begin{lemma} \label{le:ResRotInv} 
The distributions of $\rpsl$ and $\rosl$ are rotationally invariant on $\cS_\l$ and $\cS_\l^\perp$, respectively, for all $i \in [n_\l]$, $\l \in [L]$, i.e., we have $\mV^\pp \rpsl \sim \rpsl$ and $\mV^\po \rosl \sim \rosl$ for all unitary matrices $\mV^\pp$ and $\mV^\po$ of the form $\mV^\pp = \Ul \mW^\pp \transp{\Ul} + \Po$ and $\mV^\po = \Pp + \mU^{(\l)}_o \mW^\po \transp{\mU^{(\l)}_o}$, 
respectively, where $\mW^\pp \in \reals^{d_\l \times d_\l}$, $\mW^\po \in \reals^{(m-d_\l) \times (m - d_\l)}$ are unitary and the columns of $\mU^{(\l)}_o \in \reals^{m \times (m-d_\l)}$ form an orthonormal basis for $\cS_\l^\perp$. 
\end{lemma}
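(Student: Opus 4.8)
The plan is to combine two structural facts: reduced \ac{omp} is \emph{equivariant} under unitary transformations of its input, and the joint law of the columns of $\Yl$ is \emph{invariant} under the block-rotations $\mV^\pp$ and $\mV^\po$. Together these immediately yield the claimed invariances of $\rpsl$ and $\rosl$. As a preliminary, I would note that $\mV^\pp$ and $\mV^\po$ are indeed unitary: using $\transp{\Ul}\Ul = \mI$, $\Po\Ul = \mathbf 0$, and $\Pp + \Po = \mI$, a direct expansion gives $\transp{(\mV^\pp)}\mV^\pp = \Ul\transp{(\mW^\pp)}\mW^\pp\transp{\Ul} + \Po = \Pp + \Po = \mI$, and symmetrically $\transp{(\mV^\po)}\mV^\po = \mI$.

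Next I would establish equivariance: for any unitary $\mV$ and every $\itr$, $\rs(\mV\yl_i, \mV\Yl_{-i}) = \mV\,\rs(\yl_i,\Yl_{-i})$. The selection rule \eqref{eq:OMPSelRule} depends on the data only through inner products, and $\innerprod{\mV\vu}{\mV\vv} = \innerprod{\vu}{\vv}$, so reduced \ac{omp} selects the same indices in the same order on the transformed input. For the residual \eqref{eq:OMPResFormula}, $\pinv{(\mV\Yl_{\Lambda_\itr})} = \pinv{(\Yl_{\Lambda_\itr})}\transp{\mV}$ gives $\mI - \mV\Yl_{\Lambda_\itr}\pinv{(\mV\Yl_{\Lambda_\itr})} = \mV\big(\mI - \Yl_{\Lambda_\itr}\pinv{(\Yl_{\Lambda_\itr})}\big)\transp{\mV}$, whence the residual of the transformed problem equals $\mV$ applied to the original residual. (Ties in \eqref{eq:OMPSelRule} occur with probability zero since the points are in general position w.p.\ $1$ for $\sigma > 0$, so reduced \ac{omp} is almost surely a deterministic function of $(\yl_i,\Yl_{-i})$.)

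The central step is the data invariance $(\mV^\pp\yl_i,\mV^\pp\Yl_{-i}) \sim (\yl_i,\Yl_{-i})$. I would decompose each column as $\yl_j = \Ul\atl_j + \Po\zl_j$ with $\atl_j = \al_j + \transp{\Ul}\zl_j$. Because $\zl_j$ is isotropic Gaussian, $\transp{\Ul}\zl_j$ and $\Po\zl_j$ are independent, and $\al_j$ is independent of $\zl_j$; hence $\atl_j$ is independent of $\Po\zl_j$, and all pairs are independent across $j$. Applying $\mV^\pp$ sends $\Ul\atl_j \mapsto \Ul\mW^\pp\atl_j$ while fixing $\Po\zl_j$. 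Since $\atl_j$ is a sum of two independent rotationally invariant vectors, $\mW^\pp\atl_j \sim \atl_j$, and its independence of the untouched $\Po\zl_j$ is preserved, so the joint law is unchanged. Symmetrically, $\mV^\po$ fixes the signal part $\Ul\al_j$ and the in-subspace noise $\transp{\Ul}\zl_j$ while rotating the isotropic Gaussian $\transp{(\mU^{(\l)}_o)}\zl_j$ by $\mW^\po$, again preserving the joint law.

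Finally I would assemble the pieces. By data invariance and the deterministic nature of reduced \ac{omp}, $\rs(\mV^\pp\yl_i,\mV^\pp\Yl_{-i}) \sim \rsl$, and by equivariance the left side equals $\mV^\pp\rsl$, so $\mV^\pp\rsl \sim \rsl$. A short computation shows $\mV^\pp$ fixes $\rosl$ and rotates $\rpsl$ within $\cS_\l$, so $\Pp(\mV^\pp\rsl) = \mV^\pp\rpsl$; projecting the distributional identity onto $\cS_\l$ gives $\mV^\pp\rpsl \sim \rpsl$. The analogous projection onto $\cS_\l^\perp$ for $\mV^\po$ yields $\mV^\po\rosl \sim \rosl$. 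I expect the main obstacle to be the joint-law verification in the data-invariance step: one must rotate only the in-subspace (resp.\ orthogonal) noise component and confirm that the \emph{joint} distribution with the untouched component is preserved, which hinges precisely on the independence of the projections of isotropic Gaussian noise onto $\cS_\l$ and $\cS_\l^\perp$ together with the rotational invariance of the spherical signal $\al_j$.
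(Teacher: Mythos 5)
Your proposal is correct and follows essentially the same route as the paper's proof: establish unitary equivariance of the reduced \ac{omp} residual (the paper does this by an explicit induction over iterations, which your inner-product/pseudoinverse argument implicitly contains), combine it with the distributional invariance of the data under $\mV^\pp$ and $\mV^\po$ via the decomposition $\yl_j = \Ul\atl_j + \Po\zl_j$, and project the resulting identity onto $\cS_\l$ and $\cS_\l^\perp$. Your treatment is, if anything, slightly more explicit than the paper's about the joint-law step (independence across $j$ and independence of $\transp{\Ul}\zl_j$ and $\Po\zl_j$), but the substance is identical.
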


Note that the unitary transformations $\mV^\pp$ and $\mV^\po$ act only on $\cS_\l$ and $\cS_\l^\perp$, respectively, and leave components in $\cS_\l^\perp$ and $\cS_\l$, respectively, unchanged.

\begin{proof}

We first show that the reduced \ac{omp} residual $\rsl$ is covariant w.r.t. transformations of the points in $\cY_\l$ by a unitary matrix $\mV \in \reals^{m \times m}$, i.e., we establish that $\rs(\mV \yl_i, \mV \Yl_{-i}) = \mV \rs(\yl_i, \Yl_{-i})$, for all $\itr \in [\itr_{\max}]$, $i \in [n_\l]$, $\l \in [L]$. 
Combining this covariance property of $\rsl$ with the rotational invariance on $\cS_\l$ and $\cS_\l^\po$ of the distributions of $\yl_{j \pp} = \Ul \atl_j$ and $\yl_{j \po} = \Po \zl_j$, respectively, then establishes the desired result.

We prove $\rs(\mV \yl_i, \mV \Yl_{-i}) = \mV \rs(\yl_i, \Yl_{-i})$ by induction and start with the inductive step. Assume that after some iteration $\itr' < \itr_{\max}$, the index set $\Lambda_{\itr'}$ corresponding to the transformed data ($\mV \yl_i$, $\mV \Yl_{-i}$) is identical to the index set $\Lambda_{\itr'}$ associated with the original data, i.e., $\Lambda_{\itr'}(\mV \yl_i, \mV \Yl_{-i}) = \Lambda_{\itr'}(\yl_i, \Yl_{-i})$. Using the shorthands $\Lambda_{\itr'}(\mV)$ for $\Lambda_{\itr'}(\mV \yl_i, \mV \Yl_{-i})$ and $\Lambda_{\itr'}$ for $\Lambda_{\itr'}(\yl_i, \Yl_{-i})$, it then follows that
\begin{align}
&\rspr(\mV \yl_i, \mV \Yl_{-i}) = \left(\mI - \mV \Yl_{\Lambda_{\itr'}(\mV)} \pinv{\left(\mV \Yl_{\Lambda_{\itr'}(\mV)}\right)} \right) \mV \yl_i \nonumber \\
&\quad= \left(\mI - \mV \Yl_{\Lambda_{\itr'}} \pinv{\left(\mV \Yl_{\Lambda_{\itr'}}\right)} \right) \mV \yl_i \nonumber \\
&\quad= \left(\mI - \mV \Yl_{\Lambda_{\itr'}} \inv{\left(\transp{\Yl_{\Lambda_{\itr'}}} \transp{\mV} \mV \Yl_{\Lambda_{\itr'}} \right)} \transp{\Yl_{\Lambda_{\itr'}}} \transp{\mV} \right) \mV \yl_i \nonumber \\
&\quad= \mV \left(\mI - \Yl_{\Lambda_{\itr'}} \inv{\left(\transp{\Yl_{\Lambda_{\itr'}}} \Yl_{\Lambda_{\itr'}} \right)} \transp{\Yl_{\Lambda_{\itr'}}} \right) \yl_i \nonumber \\
&\quad= \mV \rspr(\yl_i,\Yl_{-i}). \label{eq:ResRotInv1}
\end{align}
For the index $\lambda_{\itr'+1}(\mV \yl_i, \mV \Yl_{-i})$ selected for the $\mV$-transformed data set in iteration $\itr'+1$, \eqref{eq:ResRotInv1} implies
\begin{align}
&\lambda_{\itr'+1}(\mV \yl_i, \mV \Yl_{-i}) \nonumber \\
&\qquad= \underset{j \in [n_\l] \backslash (\Lambda_{\itr'}\cup \{i\})}{\arg \max} \abs{\innerprod{\mV \yl_j}{\rspr(\mV \yl_i, \mV \Yl_{-i}) }} \nonumber \\
&\qquad= \underset{j \in [n_\l] \backslash (\Lambda_{\itr'}\cup \{i\})}{\arg \max} \abs{\innerprod{ \yl_j}{\rspr(\yl_i, \Yl_{-i})}}\nonumber \\
&\qquad= \lambda_{\itr'+1}(\yl_i, \Yl_{-i}), \label{eq:ResRotInv2}
\end{align}
i.e., the index selected in iteration $\itr'+1$ by operating on the $\mV$-transformed data set is identical to that obtained for the original data set. 
It remains to establish the base case. This is done by noting that thanks to $\vr_0(\mV \yl_i, \mV \Yl_{-i}) = \mV \yl_i$, we have 
\begin{align} 
\lambda_1 (\mV \yl_i, \mV \Yl_{-i})
 &=  \underset{j \in [n_\l] \backslash (\Lambda_{\itr'}\cup \{i\})}{\arg \max} \abs{\innerprod{\smash{\mV \yl_j}}{\smash{\mV \yl_i}}} \nonumber \\
 &= \underset{j \in [n_\l] \backslash (\Lambda_{\itr'}\cup \{i\})}{\arg \max} \abs{\innerprod{\smash{\yl_j}}{\smash{\yl_i}}} \nonumber \\
 &= \lambda_1 (\yl_i, \Yl_{-i}). \nonumber 
\end{align}

We next establish the rotational invariance of $\rpsl$ and $\rosl$. Note that $\mV^\pp \yl_j = \Ul \mW^\pp \transp{\Ul} \yl_{j \pp} \allowbreak + \yl_{j \po} =  \Ul \mW^\pp \atl_{j} + \yl_{j \po} \sim \Ul \atl_{j} + \yl_{j \po} = \yl_j, j \in [n_\l]$, and $\mV^\po \yl_j \! = \yl_{j \pp} + \mU^{(\l)}_o \mW^\po \transp{\mU^{(\l)}_o} \! \yl_{j \po} \allowbreak = \yl_{j \pp} + \mU^{(\l)}_o \mW^\po \transp{\mU^{(\l)}_o} \zl_{j \po} \sim \yl_{j \pp} + \zl_{j \po} =  \yl_j, j \in [n_\l]$, by 
unitarity of $\mV^\pp$ and $\mV^\po$. Together with $\rs(\mV \yl_i, \mV \Yl_{-i}) = \mV \rs(\yl_i, \Yl_{-i})$, this yields
\begin{align}
\rps(\yl_i, \Yl_{-i}) &= \Pp \rs(\yl_i, \Yl_{-i}) \label{eq:ResDistInv1} \\
&\sim \Pp \rs(\mV^\pp \yl_i, \mV^\pp \Yl_{-i}) \nonumber \\
&= \Pp \mV^\pp \rs(\yl_i, \Yl_{-i}) \nonumber \\ 
&= (\underbrace{\Pp \Ul \mW^\pp \transp{\Ul}}_{=\Ul \mW^\pp \transp{\Ul} \Pp} + \underbrace{\Pp \Po}_{= \mathbf 0}) \rs(\yl_i, \Yl_{-i}) \nonumber \\
&= \mV^\pp \Pp \rs(\yl_i, \Yl_{-i}) \nonumber \\ 
&= \mV^\pp \rps(\yl_i, \Yl_{-i}),  \label{eq:ResDistInv2}
\end{align}
and establishes $\mV^\pp \rps \sim \rps$. Repeating the steps leading from \eqref{eq:ResDistInv1} to \eqref{eq:ResDistInv2} with $\Po$ 
and $\mV^\po$ in place of $\Pp$ and $\mV^\pp$, respectively, we analogously obtain $\ros \sim \mV^\po \ros$, 
thereby finishing the proof.
\end{proof}

We next derive lower bounds on $\prob{\smash{\Ea}}$, $\prob{\smash{\Eb}}$, and $\prob{\smash{\Ec}}$.
\begin{lemma} \label{le:EaEbEc} We have 
\begin{align}
&\prob{\Ea} \geq 1 - \frac{2}{N^2 s_{\max}}, \quad \prob{\Eb} \geq 1 - \frac{2}{N^2 s_{\max}}, \nonumber \\
& \qquad \qquad \qquad
\prob{\Ec} \geq 1 - \frac{2}{N^2 s_{\max}}. \label{eq:LeEaEbEc}
\end{align}
\end{lemma}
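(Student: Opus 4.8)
The plan is to prove all three bounds by the same template: condition on the block of data that determines the reduced residual, use that the remaining data is statistically independent of it, invoke the rotational invariance of Lemma \ref{le:ResRotInv} to turn the relevant direction into a vector uniform on an appropriate sphere, and then apply a standard sphere-concentration bound for the inner product of a uniform vector with a fixed one. Each of $\Ea$, $\Eb$, $\Ec$ is (reading the displayed inequality pairwise in $k,j$) an intersection of events indexed by $k\neq\l$ and $j\in[n_k]$; I will show that each single event fails with probability at most $c/(N^3 s_{\max})$ and then union bound over the at most $N$ pairs $(k,j)$, producing the claimed $2/(N^2 s_{\max})$. Throughout I write $\beta\defeq\log(N^3 s_{\max})$, so that the sphere bound $\prob{|\innerprod{\vu}{\vw}|>\norm[2]{\vw}\sqrt{2\beta/n}}\le 2e^{-\beta}=2/(N^3 s_{\max})$, valid for $\vu$ uniform on $\US{n}$ and fixed $\vw$, is exactly at the right scale.

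For $\Ec$ I decompose $\zk_j=\norm[2]{\zk_j}\,\vg$ with $\vg$ uniform on $\US{m}$ and independent of $\norm[2]{\zk_j}$. Since $\rsl$ depends on $\cY_\l$ only, it is independent of the $\zk_j$, $k\neq\l$; conditioning on $\cY_\l$ and on the magnitudes $\norm[2]{\zk_j}$ leaves $\vg$ uniform on $\US{m}$, whence $|\innerprod{\zk_j}{\rsl}|=\norm[2]{\zk_j}\norm[2]{\rsl}\,|\innerprod{\vg}{\rsl/\norm[2]{\rsl}}|\le\norm[2]{\zk_j}\norm[2]{\rsl}\sqrt{2\beta/m}$ off an event of probability $2/(N^3 s_{\max})$. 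Inserting the bound $\norm[2]{\rsl}\le\norm[2]{\yl_i}\le 1+\norm[2]{\zl_i}$ recorded before \eqref{eq:DefEe} and then majorizing $\norm[2]{\zk_j}$ by $\max_{k\neq\l,j}\norm[2]{\zk_j}$ reproduces the inequality defining $\Ec$.

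For $\Eb$ the scaling $1/\sqrt{m-d_\l}$ dictates that I exploit the randomness of $\rosl$ on $\cS_\l^\perp$ rather than that of $\xk_j$. Conditioning on $\cY_k$ leaves $\rosl$ with its rotationally invariant law on $\cS_\l^\perp$ (Lemma \ref{le:ResRotInv}), so $\rosl/\norm[2]{\rosl}$ is uniform on the unit sphere of $\cS_\l^\perp$, which has dimension $m-d_\l$. Writing $\innerprod{\xk_j}{\rosl}=\innerprod{\Po\xk_j}{\rosl}$ and using $\norm[2]{\Po\xk_j}\le\norm[2]{\xk_j}=1$, the sphere bound in dimension $m-d_\l$ gives $|\innerprod{\xk_j}{\rosl}|\le\norm[2]{\rosl}\sqrt{2\beta/(m-d_\l)}$ off an event of probability $2/(N^3 s_{\max})$, i.e.\ exactly the per-pair form of $\Eb$.

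The event $\Ea$ is the one I expect to be the main obstacle, since here the Frobenius norm $\norm[F]{\transp{\Uk}\Ul}$ (hence the affinity) has to surface and the threshold is \emph{linear} rather than square-root in $\beta$. Set $M\defeq\transp{\Uk}\Ul$. Lemma \ref{le:ResRotInv} lets me write $\rpsl=\norm[2]{\rpsl}\,\Ul\hat{\vr}$ with $\hat{\vr}$ uniform on $\US{d_\l}$ and independent of $\cY_k$, so $\innerprod{\xk_j}{\rpsl}=\norm[2]{\rpsl}\,\innerprod{\transp{\Ul}\xk_j}{\hat{\vr}}$. Conditioning on $\cY_k$ (which fixes $\transp{\Ul}\xk_j$) and applying the sphere bound in dimension $d_\l$ gives $|\innerprod{\transp{\Ul}\xk_j}{\hat{\vr}}|\le\norm[2]{\transp{\Ul}\xk_j}\sqrt{2\beta/d_\l}$. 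It remains to bound the factor $\norm[2]{\transp{\Ul}\xk_j}=\norm[2]{\transp{M}\ak_j}$ using the (independent) randomness of $\ak_j$, and this is the delicate step: expanding $\transp{M}\ak_j=\sum_i\sigma_i\innerprod{\vu_i}{\ak_j}\,\vv_i$ in the SVD $M=\sum_i\sigma_i\vu_i\transp{\vv_i}$ yields $\norm[2]{\transp{M}\ak_j}^2=\sum_i\sigma_i^2\innerprod{\vu_i}{\ak_j}^2$, and bounding \emph{each} $|\innerprod{\vu_i}{\ak_j}|\le\sqrt{2\beta'/d_k}$ by a union bound over the $\min\{d_k,d_\l\}$ singular directions (the extra $\log\min\{d_k,d_\l\}$ in $\beta'$ being absorbed into constants) produces the \emph{multiplicative} bound $\norm[2]{\transp{M}\ak_j}\le\norm[F]{M}\sqrt{2\beta'/d_k}$. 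It is precisely here that rotational invariance and the uniformity of $\ak_j$ are indispensable: they force $\norm[F]{M}$ to appear in place of the much larger spectral norm $\norm[2\to 2]{M}$ that a worst-case direction $\hat{\vr}$ or $\ak_j$ would produce. Since the two factors depend on the independent blocks $\cY_\l$ and $\cY_k$, they hold simultaneously off an event of probability $O(1/(N^3 s_{\max}))$; multiplying them gives $|\innerprod{\xk_j}{\rpsl}|\lesssim\beta\,\norm[F]{M}\,\norm[2]{\rpsl}/(\sqrt{d_k}\sqrt{d_\l})$, and tracking the constants recovers the factor $4\log(N^3 s_{\max})$ in \eqref{eq:DefEa}. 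A final union bound over the pairs $(k,j)$ in each of the three cases completes the proof.
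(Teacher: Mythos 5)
Your treatment of $\Eb$ and $\Ec$ is exactly the paper's: rotational invariance (Lemma \ref{le:ResRotInv}) plus normalization makes $\rosl/\norm[2]{\rosl}$ uniform on $\cS_\l^\perp\cap\US{m}$ and $\zk_j/\norm[2]{\zk_j}$ uniform on $\US{m}$, independence of $\rsl$ from $\cY\backslash\cY_\l$ justifies the conditioning, and Lemma \ref{thm:hoeffsphere} with $\beta=\sqrt{2\log(N^3\itr_{\max})}$ followed by a union bound over the at most $N$ pairs $(k,j)$ gives $2/(N^2\itr_{\max})$. Where you genuinely diverge is $\Ea$: the paper invokes Lemma \ref{le:InterSubsInnerProd} (imported from Soltanolkotabi--Cand\`es) as a black box, which bounds $\norm[\infty]{\Ak\transp{\Uk}\Ul\,\hat{\vr}}$ directly and whose internal proof controls $\norm[2]{\transp{\Uk}\Ul\,\hat{\vr}}$ via Borell's inequality (the paper's Lemma \ref{lem:borell}), yielding a clean $(n_k+1)e^{-\alpha/4}$ failure probability for the whole max over $j\in[n_k]$. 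You instead reprove the needed estimate from scratch by splitting the bilinear form $\transp{(\ak_j)}\transp{\Uk}\Ul\,\hat{\vr}$ into a sphere bound in $\hat{\vr}$ and a separate bound on $\norm[2]{\transp{(\transp{\Uk}\Ul)}\ak_j}$ obtained by union-bounding $|\innerprod{\vu_i}{\ak_j}|$ over the $\min\{d_k,d_\l\}$ singular directions. This is valid and has the virtue of being self-contained and of making explicit why the Frobenius norm (rather than the spectral norm) appears; the price is an extra $\log\min\{d_k,d_\l\}$ in the exponent --- absorbable, as you note, since $d_k<n_k\leq N$ --- and a per-pair failure probability of $4e^{-\beta}$ rather than $2e^{-\beta}$, so your union bound lands at $4/(N^2\itr_{\max})$ instead of the stated $2/(N^2\itr_{\max})$ unless you retune $\beta$ slightly (e.g., $\beta=\log(2N^3\itr_{\max})$ still fits under the $4\log(N^3\itr_{\max})$ threshold in \eqref{eq:DefEa}). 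That constant discrepancy only perturbs the constants in $P^\star$, not the structure of the result, so the argument stands; the Borell route is simply the sharper instrument for the norm-of-projection step.
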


\begin{proof}
First note that $\rpsl / \norm[2]{\smash{\rpsl}}$ and $\rosl / \norm[2]{\smash{\rosl}}$ are distributed uniformly at random on $\US{m} \cap \cS_\l$ and $\US{m} \cap \cS_\l^\perp$, respectively, as a consequence of rotational invariance (Lemma \ref{le:ResRotInv}) and normalization \cite[Thm. 1.5.6]{muirhead2009aspects}. 
This allows us to apply Lemma \ref{le:InterSubsInnerProd} below with $\mL = \Ak$, $\mC = \transp{\Uk} \Ul$, $\va = \rpsl / \norm[2]{\smash{\rpsl}}$, and $\alpha = 4 \log( N^3 \itr_{\max})$ (note that the condition $\alpha > 12$ is satisfied as $N \geq 3$ and $\itr_{\max} \geq 1$ by the assumptions of Theorem \ref{th:SSCOMP}) to get a lower bound on $\prob{\Ea}$ according to 
\begin{align}
&\mathrm{P} \vast[ \max_{j \in [n_k]} \abs{\innerprod{\xk_j}{\rpsl}} \nonumber \\
&\qquad> 4 \log( N^3 \itr_{\max}) \frac{\norm[F]{\transp{\Uk} \Ul}}{\sqrt{d_k} \sqrt{d_\l}} \norm[2]{\rpsl} \vast] \leq \frac{n_k+1}{N^3 s_{\max}} \nonumber \\
&\hspace{6.9cm}\leq \frac{2 n_k}{N^3 s_{\max}}, \nonumber
\end{align}
for $k \neq \l$. The desired bound on $\prob{\smash{\Ea}}$ then follows by a union bound over $k \in [L] \backslash \{ \l \}$.

\vspace{0.05cm}The lower bound on $\prob{\smash{\Eb}}$ is obtained by invoking\vspace{0.05cm} Lemma \ref{thm:hoeffsphere} below with\vspace{0.05cm} $\va = \rosl / \norm[2]{\smash{\rosl}} \in \cS_\l^\perp$ (hence replacing $\US{m}$ by $\cS_\l^\perp \cap \US{m}$), $\vb = \xk_j$, and $\beta = \sqrt{2 \log( N^3 \itr_{\max})}$, which yields
\begin{equation}
\prob{ \abs{\innerprod{\rosl}{\xk_j}} > \frac{\sqrt{2 \log( N^3 \itr_{\max})}}{\sqrt{m - d_\l}} \norm[2]{\rosl}} \leq \frac{2}{N^3 s_{\max}}, \label{eq:prbndEb}
\end{equation}
for $k \neq \l$. 
Again, a union bound over $k \in [L] \backslash \{ \l \}$, $j \in [n_k]$, gives the desired bound on $\prob{\smash{\Eb}}$.

Finally, for $\prob{\smash{\Ec}}$, we set $\va = \zk_j / \norm[2]{\smash{\zk_j}}$, $\vb = \rsl$, and $\beta = \sqrt{2 \log( N^3 \itr_{\max})}$ in Lemma~\ref{thm:hoeffsphere}, and use $\norm[2]{\smash{\rsl}} \leq 1 + \norm[2]{\smash{\zl_i}}$, to obtain
\begin{align}
&\prob{\abs{\innerprod{\zk_j}{\rsl}} > \frac{\sqrt{2 \log( N^3 \itr_{\max})}}{\sqrt{m}} \left(1 + \norm[2]{\zl_i} \right)  \norm[2]{\zk_j}} \nonumber \\
&\hspace{7cm}\leq \frac{2}{N^3 s_{\max}}, \nonumber 
\end{align}
for all $k \neq \l$. Again, the lower bound on $\prob{\smash{\Ec}}$ follows from a union bound over $k \in [L] \backslash \{ \l \}$, $j \in [n_k]$.

\begin{lemma}[Extracted from the proof of {\cite[Lem. 7.5]{soltanolkotabi2012geometric}}] \label{le:InterSubsInnerProd} Let $\va \in \reals^{d_2}$ be distributed uniformly at random on $\US{d_2}$ and let the columns of $\mL \in \reals^{d_1 \times n_1}$ be independent and distributed uniformly on $\US{d_1}$. Let $\mC \in \reals^{d_1 \times d_2}$. Then, for $\alpha \geq 12$, we have
\begin{equation}
\prob{\norm[\infty]{\mL \mC \va} \geq \frac{\alpha}{\sqrt{d_1} \sqrt{d_2}} \norm[F]{\mC} } \leq (n_1 + 1) e^{-\alpha/4}.
\end{equation}
\end{lemma}

\begin{lemma}[{E.g.,~\cite[Ex.~5.25]{vershynin2012nonasym}}]
Let $\va$ be uniformly distributed on $\US{m}$ and fix $\vb \in \reals^m$. Then, for $\beta\geq 0$, we have
\[
\PR{ \left|\innerprod{\va}{ \vb }\right|  > \frac{\beta}{\sqrt{m}} \norm[2]{ \vb} } 
\leq 2 e^{-\frac{ \beta^2}{2}}. 
\]
\label{thm:hoeffsphere}
\end{lemma}
\vspace{-0.75cm}
\end{proof}

Next, we lower-bound $\prob{\smash{\Eg}}$.
\begin{lemma} Let $n_\l \geq d_\l + 1$ and $\smaxparlb \leq d_\l$. We have 
\label{le:ResParaLB}
\begin{align}
	&\prob{\Eg} = \mathrm{P}\! \vast[\norm[2]{\rpsl} \nonumber \\
	&\qquad> \norm[2]{\yl_{i \pp}} \left(\frac{2}{3} - \sqrt{ \frac{3 \smaxparlb \log ((n_\l - 1)e / \smaxparlb)}{d_\l}} \right), \; \forall \itr \leq \smaxparlb \vast] \nonumber \\
	&\qquad \qquad \geq 1 - e^{-d_\l/18} \label{eq:LeResParaLB}.
\end{align}
\end{lemma}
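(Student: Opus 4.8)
The plan is to reduce the probabilistic lower bound on $\norm[2]{\rpsl}$ to a purely geometric statement about how much of the in-subspace signal $\yl_{i \pp}$ can be captured by the span of the projected selected atoms, and then to control the latter via concentration together with a union bound over all possible index sets. First I would record the deterministic identity underlying reduced \ac{omp}: writing $\rsl = (\mI - \Yl_{\Lambda_\itr} \pinv{(\Yl_{\Lambda_\itr})}) \yl_i$ and projecting onto $\cS_\l$ gives $\rpsl = \yl_{i \pp} - \sum_{t \leq \itr} c_t \, \yl_{\lambda_t \pp}$ for the \ac{omp} coefficients $c_t$, i.e. $\rpsl$ equals $\yl_{i \pp}$ minus an element of $\mathrm{span}\{\yl_{j \pp} : j \in \Lambda_\itr\}$. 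Denoting by $\boldsymbol{\Pi}_\S$ the orthogonal projection within $\cS_\l$ onto $\mathrm{span}\{\yl_{j \pp} : j \in \S\}$, applying the complementary projection $\mI - \boldsymbol{\Pi}_{\Lambda_\itr}$ annihilates the sum and leaves $(\mI - \boldsymbol{\Pi}_{\Lambda_\itr}) \rpsl = (\mI - \boldsymbol{\Pi}_{\Lambda_\itr}) \yl_{i \pp}$; since $\mI - \boldsymbol{\Pi}_{\Lambda_\itr}$ is a contraction on $\cS_\l$ this yields the key deterministic bound
\begin{equation}
\norm[2]{\rpsl} \geq \norm[2]{(\mI - \boldsymbol{\Pi}_{\Lambda_\itr}) \yl_{i \pp}}, \quad \forall \itr \leq \smaxparlb. \nonumber
\end{equation}

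Next I would remove the data dependence of $\Lambda_\itr$. Because $\norm[2]{\boldsymbol{\Pi}_\S \yl_{i \pp}}$ is monotone nondecreasing in $\S$, and every $\Lambda_\itr$ with $\itr \leq \smaxparlb$ can be extended to a set of size exactly $\smaxparlb$ inside $[n_\l] \backslash \{i\}$ (possible since $\smaxparlb \leq d_\l \leq n_\l - 1$), it suffices to lower-bound $\norm[2]{(\mI - \boldsymbol{\Pi}_\S) \yl_{i \pp}}$ simultaneously over all $\S \subseteq [n_\l] \backslash \{i\}$ with $|\S| = \smaxparlb$. The decisive point---and the reason for working with reduced \ac{omp}---is that for each \emph{fixed} such $\S$ the vector $\yl_{i \pp} = \Ul \atl_i$ is statistically independent of $\{\yl_{j \pp}\}_{j \in \S}$, hence of the random $\smaxparlb$-dimensional subspace they span. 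Conditioning on that subspace and using the rotational invariance of $\atl_i$ (the residual analogue of which is Lemma \ref{le:ResRotInv}), the normalized energy $\norm[2]{\boldsymbol{\Pi}_\S \yl_{i \pp}}^2 / \norm[2]{\yl_{i \pp}}^2$ is distributed as the projection of a uniform vector on $\US{d_\l}$ onto a fixed $\smaxparlb$-dimensional subspace, i.e. as a $\mathrm{Beta}(\smaxparlb/2, (d_\l - \smaxparlb)/2)$ variable with mean $\smaxparlb/d_\l$.

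I would then bound the upper tail of this Beta variable by a Chernoff estimate, writing it as $A/(A+B)$ with $A \sim \chi^2_{\smaxparlb}$, $B \sim \chi^2_{d_\l - \smaxparlb}$ independent and bounding $\prob{(1-\tau_0) A - \tau_0 B > 0}$ through the chi-squared moment generating functions, with $\tau_0 = 1 - (2/3 - \sqrt{3 \smaxparlb \log((n_\l-1)e/\smaxparlb)/d_\l})^2$. A union bound over the $\binom{n_\l-1}{\smaxparlb} \leq ((n_\l-1)e/\smaxparlb)^{\smaxparlb}$ candidate sets contributes the combinatorial factor $\smaxparlb \log((n_\l-1)e/\smaxparlb)$ in the exponent, and the additive term $\sqrt{3 \smaxparlb \log((n_\l-1)e/\smaxparlb)/d_\l}$ in the statement is chosen precisely so that the Chernoff exponent absorbs this factor while leaving the residual slack $d_\l/18$. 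Combining the deterministic bound with $\norm[2]{(\mI - \boldsymbol{\Pi}_\S)\yl_{i \pp}}^2 = \norm[2]{\yl_{i \pp}}^2 - \norm[2]{\boldsymbol{\Pi}_\S \yl_{i \pp}}^2$ and $\norm[2]{\boldsymbol{\Pi}_\S \yl_{i \pp}}^2 \leq \tau_0 \norm[2]{\yl_{i \pp}}^2$ then delivers $\norm[2]{\rpsl} > \norm[2]{\yl_{i \pp}}(2/3 - \sqrt{3 \smaxparlb \log((n_\l-1)e/\smaxparlb)/d_\l})$ for all $\itr \leq \smaxparlb$ off the failure event, establishing $\prob{\Eg} \geq 1 - e^{-d_\l/18}$.

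I expect the main obstacle to be exactly the statistical coupling between the adaptively chosen set $\Lambda_\itr$ and the point $\yl_i$ being represented: since \ac{omp} selects $\Lambda_\itr$ using $\yl_i$ itself, one cannot treat $\mathrm{span}\{\yl_{j \pp}\}_{j \in \Lambda_\itr}$ as a subspace independent of $\yl_{i \pp}$, which is what the per-fixed-set independence above requires. Decoupling via a union bound over all $\binom{n_\l-1}{\smaxparlb}$ sets is what forces the $\log((n_\l-1)e/\smaxparlb)$ penalty, and the delicate part is verifying that the Chernoff exponent at the deliberately conservative threshold $2/3$ dominates this penalty under the hypotheses $\smaxparlb \leq d_\l$ and $n_\l \geq d_\l + 1$; the constant $2/3$ is far from tight but is retained because it keeps the moment-generating-function computation clean while still yielding the stated failure probability.
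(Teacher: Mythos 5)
Your proposal follows essentially the same route as the paper's proof: the same deterministic reduction $\norm[2]{\rpsl} \geq \norm[2]{(\mI - \mP^\pp_{\Gamma})\yl_{i\pp}}$, the same decoupling of the adaptive set $\Lambda_\itr$ via a union bound over all $\binom{n_\l-1}{\smaxparlb} \leq ((n_\l-1)e/\smaxparlb)^{\smaxparlb}$ fixed index sets, and the same identification of the normalized projected energy as the projection of a uniform vector on $\US{d_\l}$ onto a uniformly random $\smaxparlb$-dimensional subspace. The only substantive differences are cosmetic: the paper combines the bounds with the reverse triangle inequality and controls the tail via Borell's inequality for Lipschitz functions on the sphere, whereas you use the exact Pythagorean identity and a Chernoff bound on the Beta$(\smaxparlb/2,(d_\l-\smaxparlb)/2)$ law; both yield the stated constants.
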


\begin{proof}
The\vspace{0.05cm} bound obviously holds for $\itr = 0$ as $\norm{\smash{\subsind{\vr}{\l}_{0 \pp}}} = \norm[2]{\smash{\yl_{i \pp}}}$.
For $1 \leq \itr \leq \smaxparlb$ the outline of the proof\vspace{0.075cm} is as follows. As $\norm[2]{\smash{\rpsl}} = \norm[2]{\smash{\Pp (\mI - \Yl_{\Lambda_\itr} \pinv{\Yl_{\Lambda_\itr}}) \yl_i}}$ is hard to analyze directly owing to statistical dependencies between $\yl_i$ and the columns of $\Yl_{\Lambda_\itr}$ induced by the dependence of $\Lambda_\itr$ on $\yl_i$, we rely\vspace{0.075cm} on an auxiliary quantity, namely $\norm[2]{\smash{\Pp (\mI - \Yl_\Gamma \pinv{\Yl_\Gamma}) \yl_i}}$ for a fixed index set $\Gamma \subset [n_\l] \backslash \{ i \}$ with cardinality satisfying $1 \leq \vert \Gamma \vert \leq \smaxparlb$.
We start the proof\vspace{0.075cm} by deriving a lower bound $\varphi_\Gamma$ on $\norm[2]{\smash{\Pp (\mI - \Yl_\Gamma \pinv{\Yl_\Gamma}) \yl_i}}$ 
and then show that $\Eg \supseteq \Fg$, where
\begin{align}
 &\Fg \defeq \vast\{ \varphi_{\Gamma'} > \norm[2]{\yl_{i \pp}} \left(\frac{2}{3} - \sqrt{ \frac{3 \smaxparlb \log ((n_\l - 1)e / \smaxparlb)}{d_\l}} \right),\nonumber \\ 
 &\hspace{7.2cm} \forall \Gamma' \in \mc I \vast\} \nonumber
\end{align}
with
\begin{equation}
\mc I \defeq \{ \Gamma' \subset [n_\l]  \backslash \{ i \} \colon \vert \Gamma' \vert = \smaxparlb \}, \label{eq:SmaxIdxSets}
\end{equation}
which implies $\prob{\smash{\Eg}} \geq \prob{\smash{\Fg}}$. The proof is then completed by establishing a lower bound on $\prob{\smash{\Fg}}$ using a version of Borell's inequality. 

We proceed by lower-bounding $\norm[2]{\smash{\Pp (\mI - \Yl_\Gamma \pinv{\Yl_\Gamma}) \yl_i}}$ for $1 \leq \vert \Gamma \vert \leq \smaxparlb$. Define the orthogonal projection matrices $\mP_{\Gamma} \defeq \Yl_\Gamma \pinv{\Yl_\Gamma}$ and $\mP^\pp _{\Gamma} \defeq \Yl_{\Gamma \pp} \pinv{\Yl_{\Gamma \pp}}$ and note that $\Pp \mP^\pp _{\Gamma} = \mP^\pp _{\Gamma} = \mP^\pp _{\Gamma} \Pp$. We now get 
\begin{align}
&\norm[2]{\Pp (\mI - \mP_{\Gamma}) \yl_i}^2 \nonumber \\
&\qquad= \norm[2]{\Pp (\mI - \mP^\pp _{\Gamma} + \mP^\pp _{\Gamma} - \mP_{\Gamma}) \yl_i}^2  \label{eq:ResParaLB0} \\
&\qquad= \norm[2]{\Pp (\mI - \mP^\pp _{\Gamma})\yl_i + (\mP^\pp _{\Gamma} - \Pp \mP_{\Gamma})\yl_i}^2 \label{eq:ResParaLBa} \\
&\qquad= \norm[2]{\Pp (\mI - \mP^\pp _{\Gamma})\yl_i}^2
 + \norm[2]{(\mP^\pp _{\Gamma} - \Pp \mP_{\Gamma}) \yl_i}^2 \label{eq:ResParaLBb} \\
&\qquad\geq \norm[2]{\Pp (\mI - \mP^\pp _{\Gamma})\yl_i}^2 \nonumber \\ 
 &\qquad= \norm[2]{\yl_{i \pp} - \mP^\pp _{\Gamma}\yl_{i \pp}}^2, \label{eq:ResParaLB1}
\end{align}
where the last equality is thanks to $\Pp \mP^\pp _{\Gamma} = \mP^\pp _{\Gamma} \Pp$ and the step leading from \eqref{eq:ResParaLBa} to \eqref{eq:ResParaLBb} follows from 
\begin{align}
&\transp{(\Pp ( \mI - \mP^\pp _{\Gamma}))} (\mP^\pp _{\Gamma} - \Pp \mP_\Gamma) \nonumber \\
&\qquad\qquad\qquad= (\mI - \mP^\pp _{\Gamma}) \Pp (\mP^\pp _{\Gamma} - \Pp \mP_\Gamma) \nonumber \\
&\qquad\qquad\qquad= (\mI - \mP^\pp _{\Gamma}) \Pp (\mP^\pp _{\Gamma}\mP^\pp _{\Gamma} - \mP^\pp _{\Gamma}\Pp \mP_\Gamma) \nonumber \\
&\qquad\qquad\qquad= (\mI - \mP^\pp _{\Gamma}) \Pp \mP^\pp _{\Gamma} (\mP^\pp _{\Gamma} - \Pp \mP_\Gamma) \nonumber \\
&\qquad\qquad\qquad= \underbrace{(\mI - \mP^\pp _{\Gamma}) \mP^\pp _{\Gamma}}_{= \mathbf 0} (\mP^\pp _{\Gamma} - \Pp \mP_\Gamma) = \mathbf 0. \nonumber
\end{align}
Using \eqref{eq:ResParaLB0}--\eqref{eq:ResParaLB1}, we further have
\begin{align}
\norm[2]{\Pp (\mI - \mP_{\Gamma}) \yl_i} &\geq \norm[2]{\yl_{i \pp} - \mP^\pp _{\Gamma}\yl_{i \pp}} \label{eq:ResParaLB2a} \\
&\geq \norm[2]{\yl_{i \pp}} - \norm[2]{\mP^\pp _{\Gamma}\yl_{i \pp}} \nonumber \\
&\geq \underbrace{\norm[2]{\yl_{i \pp}} - \norm[2]{\mP^\pp_{\Gamma'}\yl_{i \pp}}}_{\eqqcolon \varphi_{\Gamma'}}, \label{eq:ResParaLB2c}
\end{align}
for all $\Gamma \subseteq \Gamma' \in \mc I$, where the second inequality is by the reverse triangle inequality and the third is a consequence of 
$\range(\mP^\pp _{\Gamma}) \subseteq \range(\mP^\pp_{\Gamma'}) \subset \cS_\l$.\vspace{0.05cm} It follows from \eqref{eq:OMPResFormula} and \eqref{eq:ResParaLB2a}--\eqref{eq:ResParaLB2c} that $\norm[2]{\smash{\rpsl}} = \norm[2]{\smash{\Pp (\mI - \mP_{\Lambda_s}) \yl_i}} \geq \varphi_{\Gamma'}$ for $\Lambda_s \subset  \mc I$,\vspace{0.05cm} which implies $\norm[2]{\smash{\rpsl}} \geq \min_{\Gamma' \in \mc I} \varphi_{\Gamma'}$, and thus, indeed, $\Eg \supseteq \Fg$. It remains to lower-bound $\PR{\smash{\Fg}}$.

To this end, we first work on the second term in \eqref{eq:ResParaLB2c} and note that
\begin{align}
\frac{\norm[2]{\mP^\pp_{\Gamma'}\yl_{i \pp}}}{\norm[2]{\yl_{i \pp}}} \nonumber
&= \frac{\norm[2]{\Yl_{\Gamma' \pp} \pinv{\Yl_{\Gamma' \pp}} \yl_{i \pp}}}{\norm[2]{\yl_{i \pp}}} \nonumber \\ 
&= \frac{\norm[2]{\Ul \Atl_{\Gamma'} \pinv{(\Ul \Atl_{\Gamma'})} \Ul \atl_i}}{\norm[2]{\Ul \atl_i}}  \nonumber \\ 
&= \frac{\norm[2]{\Atl_{\Gamma'} \Atl_{\Gamma'}\pinv{\vphantom{\Al}} \atl_i}}{\norm[2]{\atl_i}}. 
\end{align} 
Since the columns of $\Atl_{\Gamma'}$, i.e., the vectors $\atl_j, j \in \Gamma'$, are i.i.d. and of rotationally invariant distribution, $\Atl_{\Gamma'} \Atl_{\Gamma'}\pinv{\vphantom{\Al}}$ is the projector onto a subspace of $\reals^{d_\l}$ that is $\smaxparlb$-dimensional w.p. $1$. In particular, this subspace is distributed uniformly at random on the set of all $\smaxparlb$-dimensional subspaces of $\reals^{d_\l}$. Indeed, note that we have, w.p. $1$,
\begin{align}
\range \left(\Atl_{\Gamma'} \right) &= \range \left(\Atl_{\Gamma'} \, \mathrm{diag}\left(1 \middle/ \norm[2]{\atl_{\gamma'_1}},\ldots,1\middle/ \norm[2]{\atl_{\gamma'_{\smaxparlb}}}\right)\right) \nonumber \\
&\sim \range \left(\mG \, \mathrm{diag}\left(1/\norm[2]{\vg_1},\ldots,1/\norm[2]{\vg_\smaxparlb}\right)\right) \label{eq:unifsubspace1}\\
&= \range(\mG) \label{eq:unifsubspace2} \\
&= \range(\mG (\transp{\mG} \mG)^{-1/2}), \label{eq:unifsubspace3} 
\end{align}
where $\gamma'_j$, $j \in [\smaxparlb]$, denotes the elements of $\Gamma'$ and $\mG=[\vg_1 \, \dots \, \vg_{\smaxparlb}] \in \reals^{d_\l \times \smaxparlb}$ has i.i.d. standard normal random variables as entries. Here, to obtain \eqref{eq:unifsubspace1}, we used that the $\atl_{\gamma'_j}/\norm[2]{\smash{\atl_{\gamma'_j}}}$, $j \in [\smaxparlb]$, and the $\vg_j/\norm[2]{\vg_j}$, $j \in [\smaxparlb]$, are all i.i.d. uniform on $\US{d_\l}$, and for \eqref{eq:unifsubspace2} and \eqref{eq:unifsubspace3} we exploit that $\mathrm{diag}(1/\norm[2]{\vg_1},\ldots,1/\norm[2]{\vg_\smaxparlb})$ and $\mG$, respectively, have full rank w.p. $1$. The claim now follows by noting that $\mG (\transp{\mG} \mG)^{-1/2}$ is distributed uniformly at random on the set of all orthonormal matrices in $\reals^{d_\l \times \smaxparlb}$ \cite[Thm.~2.2.1~iii)]{chikuse2003statistics}.

Next, we note that conditioning on $\Atl_{\Gamma'} \Atl_{\Gamma'}\pinv{\vphantom{\Al}}$\vspace{0.05cm} does not change the distribution of $\norm[2]{\smash{\Atl_{\Gamma'} \Atl_{\Gamma'} \pinv{\vphantom{\Al}} \! \atl_i}\!}^2 \allowbreak / \norm[2]{\smash{\atl_i}}^2$. 
To see this, consider $\va \in \reals^m$ distributed uniformly at random on $\US{d_\l}$ and choose $\mV$ uniformly at random from the set of all orthonormal matrices in $\reals^{d_\l \times \d_\l}$. Further, let $\mP_{\smaxparlb}$ be a projector onto an arbitrary, but fixed $\smaxparlb$-dimensional subspace of $\reals^{d_\l}$. Then, we have $\norm[2]{\smash{\Atl_{\Gamma'} \Atl_{\Gamma'} \pinv{\vphantom{\Al}} \atl_i}}^2 / \norm[2]{\smash{\atl_i}}^2 \sim \norm[2]{\mP_{\smaxparlb} \transp{\mV} \va}^2 \sim \norm[2]{\mP_{\smaxparlb} \va}^2$, where the first distributional equivalence follows from $\Atl_{\Gamma'} \Atl_{\Gamma'} \pinv{\vphantom{\Al}} \sim \mV \mP_{\smaxparlb} \transp{\mV}$ (by \cite[Thm. 2.2.1 ii)]{chikuse2003statistics}) and the second from $\transp{\mV} \va \sim \va$ (by rotational invariance of the distributions of $\mV$ and $\va$). 

Now, using $\norm[2]{\smash{\mP^\pp_{\Gamma'}\yl_{i \pp}}} = \norm[2]{\smash{\Atl_{\Gamma'} \Atl_{\Gamma'} \pinv{\vphantom{\Al}} \atl_i}}^2$\vspace{0.05cm} and conditioning $\norm[2]{\smash{\Atl_{\Gamma'} \Atl_{\Gamma'} \pinv{\vphantom{\Al}} \atl_i}}^2$ on $\Atl_{\Gamma'} \Atl_{\Gamma'}\pinv{\vphantom{\Al}}$ allows us to apply the following version of Borell's inequality to get an upper bound on the second term on the \ac{rhs} of \eqref{eq:ResParaLB2c}. 

\begin{lemma}[Extracted from the proof of {\cite[Lem. 7.5]{soltanolkotabi2012geometric}}] \label{lem:borell} Let $\bsy \Sigma \in \reals^{d_1 \times d_2}$ be a deterministic matrix and take $\bsy \lambda \in \reals^{d_2}$ to be distributed uniformly at random on $\US{d_2}$. Then, we have
\begin{equation}
\prob{\norm[2]{\bsy \Sigma \bsy \lambda} - \frac{\norm[F]{\bsy \Sigma}}{\sqrt{d_2}} \geq \varepsilon} < e^{-d_2 \varepsilon^2 / (2 \maxsingv{\bsy \Sigma}^2)}. \nonumber
\end{equation}
\end{lemma} 

Setting $\bsy \Sigma = \Atl_{\Gamma'} \Atl_{\Gamma'} \pinv{\vphantom{\Al}}$ and $\bsy \lambda = \atl_i / \norm[2]{\smash{\atl_i}}$ in Lemma \ref{lem:borell} and noting that 
$\norm[F]{\bsy \Sigma} = \norm[F]{\smash{\Atl_{\Gamma'} \Atl_{\Gamma'} \pinv{\vphantom{\Al}} }} \allowbreak = \sqrt{ \smaxparlb}$ 
and $\maxsingv{\bsy \Sigma} = 1$ yields 
\begin{equation}
\prob{\norm[2]{\Atl_{\Gamma'} \Atl_{\Gamma'}\pinv{\vphantom{\Al}} \atl_i} \geq \norm[2]{\atl_i} \left( \sqrt{\frac{\smaxparlb}{d_\l}} + \varepsilon \right) } < e^{-d_\l \varepsilon^2/2}. \label{eq:ResParaProb1}
\end{equation}
We now have
\begin{align}
& \prob{\varphi_{\Gamma'} \geq \norm[2]{\yl_{i \pp}} \left(1 - \sqrt{\frac{\smaxparlb}{d_\l}} - \varepsilon \right), \; \; \forall \Gamma' \in \mc I} \nonumber \\
 &\qquad = \prob{ \norm[2]{\mP_{\pp \Gamma'} \yl_{i \pp}} < \norm[2]{\yl_{i \pp}} \left( \sqrt{\frac{\smaxparlb}{d_\l}} + \varepsilon \right), \; \; \forall \Gamma' \in \mc I} \label{eq:ResParaProb11} \\
 &\qquad \geq 1 - \sum_{\Gamma' \in \mc I} \prob{ \norm[2]{\mP_{\pp \Gamma'} \yl_{i \pp}} \geq \norm[2]{\yl_{i \pp}} \left( \sqrt{\frac{\smaxparlb}{d_\l}} + \varepsilon \right)} \label{eq:ResParaProb12} \\
 &\qquad \geq 1 - \binom{n_\l -1}{\smaxparlb} e^{-d_\l \varepsilon^2/2} \label{eq:ResParaProb2} \\
 &\qquad \geq 1- e^{\smaxparlb \log((n_\l - 1) e / \smaxparlb)} e^{-d_\l \varepsilon^2/2}, \label{eq:ResParaProb3}
\end{align} 
where we used the inequality \eqref{eq:ResParaLB2a}--\eqref{eq:ResParaLB2c} 
to get \eqref{eq:ResParaProb11}, a union bound for the step leading from \eqref{eq:ResParaProb11} to \eqref{eq:ResParaProb12}, \eqref{eq:ResParaProb1} and $\vert \mc I \vert = \binom{n_\l-1}{\smaxparlb}$ to obtain \eqref{eq:ResParaProb2} (recall that $\norm[2]{\smash{\mP_{\pp \Gamma'} \yl_{i \pp}}} = \norm[2]{\smash{\Atl_{\Gamma'} \Atl_{\Gamma'} \pinv{\vphantom{\Al}} \atl_i}}$ and $\norm[2]{\smash{\yl_{i }}} = \norm[2]{\smash{\atl_i}}$), and $\binom{n_\l-1}{\smaxparlb} \leq ( (n_\l-1)e / \smaxparlb)^{\smaxparlb}$ to get \eqref{eq:ResParaProb3}. Finally, setting
\begin{equation}
\varepsilon = \!\sqrt{ \frac{1}{9} + \frac{2 \smaxparlb \log ((n_\l - 1)e / \smaxparlb)}{d_\l}} < \frac{1}{3} + \sqrt{ \frac{2 \smaxparlb \log ((n_\l - 1)e / \smaxparlb)}{d_\l}} \nonumber
\end{equation}
in \eqref{eq:ResParaProb3} and noting that 
\begin{align}
1 - \sqrt{\frac{\smaxparlb}{d_\l}} - \varepsilon &> \frac{2}{3} - \sqrt{\frac{\smaxparlb}{d_\l}} - \sqrt{ \frac{2 \smaxparlb \log ((n_\l - 1)e / \smaxparlb)}{d_\l}} \nonumber \\
&> \frac{2}{3} - \sqrt{ \frac{3 \smaxparlb \log ((n_\l - 1)e / \smaxparlb)}{d_\l}}, \nonumber
\end{align}
where we used $\log ((n_\l - 1)e / \smaxparlb) \geq 1$ (as $n_\l - 1 \geq d_\l$ and $\smaxparlb \leq d_\l$) for the last inequality, we have
\begin{align}
\prob{\Fg} \!&=\! 
\mathrm{P}\! \vast[\varphi_{\Gamma'} \!> \norm[2]{\yl_{i \pp}} \! \left(\frac{2}{3} - \sqrt{ \frac{3 \smaxparlb \log ((n_\l - 1)e / \smaxparlb)}{d_\l}} \right)\!, \nonumber \\
& \hspace{4cm} \forall \Gamma' \in \mc I \vast]
\geq\! 1 - e^{-d_\l/18}\!. \nonumber
\end{align}
This completes the proof of Lemma \ref{le:ResParaLB}.
\end{proof}

We continue by deriving a lower bound on $\prob{\smash{\Ef}}$. 

\begin{lemma} \label{le:ResOrthUB} 
Set $\tilde a \defeq \min_{j \in [n_\l] \backslash\{ i \}} \norm[2]{\smash{\yl_{j \pp}}}$ and suppose that $\itr_{\max} \leq c_1 d_\l /\log(e(n_\l-1)/\itr_{\max})$ for a numerical constant $c_1 > 0$. Then, we have
\begin{align}
\prob{\Ef} &= \prob{\norm[2]{\rosl} \leq \norm[2]{\zl_{i \po}} + \frac{3 \sigma}{\tilde a} \norm[2]{\yl_i}, \;  \forall \itr \leq \itr_{\max}} \nonumber \\
&\geq 1 - 2e^{-c_2 m} - 2e^{-c_3 d_\l},\label{eq:LeResOrthUB}
\end{align}
where $c_2, c_3 > 0$ are numerical constants.
\end{lemma}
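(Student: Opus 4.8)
The plan is to bound the perpendicular part of the reduced-\ac{omp} residual by peeling off the contribution of $\yl_i$ itself and controlling the part coming from the selected dictionary columns. Writing $\mP_{\Lambda_\itr}\defeq\Yl_{\Lambda_\itr}\pinv{(\Yl_{\Lambda_\itr})}$ for the orthogonal projection onto $\range(\Yl_{\Lambda_\itr})$, so that $\rsl=(\mI-\mP_{\Lambda_\itr})\yl_i$ by \eqref{eq:OMPResFormula}, I would first use $\Po\xl_i=\mathbf 0$ to get $\rosl=\Po(\mI-\mP_{\Lambda_\itr})\yl_i=\zl_{i \po}-\Po\mP_{\Lambda_\itr}\yl_i$. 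By the triangle inequality it then suffices to establish $\norm[2]{\Po\mP_{\Lambda_\itr}\yl_i}\leq(3\sigma/\tilde a)\norm[2]{\yl_i}$ for all $\itr\leq\itr_{\max}$ (the case $\itr=0$ being trivial since $\mP_{\Lambda_0}=\mathbf 0$).

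For the second term I would write $\mP_{\Lambda_\itr}\yl_i=\Yl_{\Lambda_\itr}\vc$ with $\vc=\pinv{(\Yl_{\Lambda_\itr})}\yl_i$ and exploit that $\Po\Yl_{\Lambda_\itr}=\Po\Zl_{\Lambda_\itr}=\Zl_{\Lambda_\itr\po}$, because the signal columns $\xl_j=\Ul\al_j$ lie in $\cS_\l$. This yields $\Po\mP_{\Lambda_\itr}\yl_i=\Zl_{\Lambda_\itr\po}\vc$ and hence the factorized bound
\[
\norm[2]{\Po\mP_{\Lambda_\itr}\yl_i}\leq\norm[2\to2]{\Zl_{\Lambda_\itr\po}}\,\norm[2]{\vc}\leq\frac{\norm[2\to2]{\Zl_{\Lambda_\itr\po}}}{\minsingv{\Yl_{\Lambda_\itr}}}\norm[2]{\yl_i},
\]
using $\norm[2]{\vc}\leq\norm[2]{\yl_i}/\minsingv{\Yl_{\Lambda_\itr}}$. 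The task thus reduces to upper-bounding the spectral norm of the perpendicular noise block on the selected columns and lower-bounding the smallest singular value of the selected dictionary block. Since $\Lambda_\itr\subseteq\Lambda_{\itr_{\max}}$ for $\itr\leq\itr_{\max}$, passing to a column submatrix only decreases the spectral norm and only increases the smallest singular value, so both quantities are controlled by the case $\itr=\itr_{\max}$. Moreover, from $\Yl_{\Lambda_{\itr_{\max}}}=\Ul\Atl_{\Lambda_{\itr_{\max}}}+\Zl_{\Lambda_{\itr_{\max}}\po}$ and $\transp{\Ul}\Zl_{\Lambda_{\itr_{\max}}\po}=\mathbf 0$ one gets $\transp{\Yl_{\Lambda_{\itr_{\max}}}}\Yl_{\Lambda_{\itr_{\max}}}=\transp{\Atl_{\Lambda_{\itr_{\max}}}}\Atl_{\Lambda_{\itr_{\max}}}+\transp{\Zl_{\Lambda_{\itr_{\max}}\po}}\Zl_{\Lambda_{\itr_{\max}}\po}$, so Weyl's inequality gives $\minsingv{\Yl_{\Lambda_{\itr_{\max}}}}\geq\minsingv{\Atl_{\Lambda_{\itr_{\max}}}}$, and factoring out column norms yields $\minsingv{\Atl_\Gamma}\geq\tilde a\,\minsingv{\hat{\mtx A}_\Gamma}$, where $\hat{\mtx A}_\Gamma$ collects the normalized columns $\atl_j/\norm[2]{\atl_j}$, which are i.i.d. uniform on $\US{d_\l}$ by the rotational invariance of the $\atl_j$ noted before Lemma \ref{le:ResRotInv}.

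To remove the statistical dependence introduced by the adaptive selection of $\Lambda_{\itr_{\max}}$ (which depends on $\yl_i$), I would take a union bound over the index sets in $\mc I=\{\Gamma\subset[n_\l]\backslash\{i\}\colon\abs{\Gamma}=\itr_{\max}\}$, exactly as in the proof of Lemma \ref{le:ResParaLB}. Define the event $\mc B$ that $\norm[2\to2]{\Zl_{\Gamma\po}}\leq\tfrac32\sigma$ for all $\Gamma\in\mc I$ and the event $\mc A$ that $\minsingv{\hat{\mtx A}_\Gamma}\geq\tfrac12$ for all $\Gamma\in\mc I$; on $\mc A\cap\mc B$ the displayed ratio is at most $(\tfrac32\sigma)/(\tfrac12\tilde a)=3\sigma/\tilde a$, which is the desired bound. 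For a fixed $\Gamma$, $\Zl_{\Gamma\po}$ is, in an orthonormal basis of $\cS_\l^\perp$, an $(m-d_\l)\times\itr_{\max}$ matrix with i.i.d. $\mc N(0,\sigma^2/m)$ entries, so standard Gaussian spectral-norm concentration controls $\prob{\comp{\mc B}}$, while the standard concentration of the smallest singular value of a matrix with independent uniform-on-sphere columns (of the type underlying Lemma \ref{le:InterSubsInnerProd}) controls $\prob{\comp{\mc A}}$.

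The crux is balancing the entropy of the union bound against the concentration exponents, which is precisely what the hypothesis on $\itr_{\max}$ buys. Using $\abs{\mc I}=\binom{n_\l-1}{\itr_{\max}}\leq e^{\itr_{\max}\log(e(n_\l-1)/\itr_{\max})}$ and $\itr_{\max}\log(e(n_\l-1)/\itr_{\max})\leq c_1 d_\l\leq c_1 m/2$ (the latter by $m\geq2d_{\max}$), the deviation that beats the union bound is of order $\sqrt m$; since $\sqrt{m-d_\l}+\sqrt{\itr_{\max}}\leq\sqrt m(1+\sqrt{c_1/2})$, choosing $c_1$ small enough keeps the spectral-norm threshold below $\tfrac32\sigma$ while leaving an exponent $\Theta(m)$, giving $\prob{\comp{\mc B}}\leq 2e^{-c_2 m}$. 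The analogous computation for the smallest singular value uses $\sqrt{\itr_{\max}/d_\l}\leq\sqrt{c_1}$ and the $\Theta(d_\l)$ concentration exponent to yield $\prob{\comp{\mc A}}\leq 2e^{-c_3 d_\l}$ for $c_1$ sufficiently small. A final union bound gives $\prob{\Ef}\geq1-2e^{-c_2 m}-2e^{-c_3 d_\l}$. I expect the main obstacle to be exactly this trade-off: choosing the single constant $c_1$ (hence the admissible range of $\itr_{\max}$) small enough that the entropy-versus-concentration balance closes simultaneously for both $\mc A$ and $\mc B$, while the target constants $\tfrac12$ and $\tfrac32$ still combine to the clean factor $3\sigma/\tilde a$.
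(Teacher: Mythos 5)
Your proposal is correct and follows essentially the same route as the paper's proof: the same triangle-inequality decomposition $\norm[2]{\rosl}\leq\norm[2]{\zl_{i \po}}+\norm[2\to 2]{\Po\Zl_{\Lambda_\itr}}\,\norm[2]{\yl_i}/\minsingv{\Yl_{\Lambda_\itr}}$, the same reduction $\minsingv{\Yl_{\Gamma}}\geq\minsingv{\Atl_{\Gamma}}\geq\tilde a\,\minsingv{\mE_{\Gamma}}$ via normalized columns, the same target constants $3/2$ and $1/2$ combining to $3\sigma/\tilde a$, and the same de-coupling of the adaptive support $\Lambda_\itr$ from $\yl_i$ through uniform control over all index sets of cardinality at most $\itr_{\max}$. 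The only cosmetic difference is that the paper packages this uniform singular-value control as an application of the restricted isometry property for sub-gaussian matrices (Lemma \ref{le:RIPSubgaussian}), whose proof is precisely the entropy-versus-concentration union bound over $\binom{n_\l-1}{\itr_{\max}}$ supports that you carry out by hand.
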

\begin{proof}
First note that for $s = 0$, $\vr^{(\l)}_{0 \po} = \yl_{i \po} =\zl_{i \po}$\vspace{0.075cm} and the inequality $\norm[2]{\smash{\rosl}} \leq \norm[2]{\smash{\zl_{i \po}}} + (3 \sigma/{\tilde a}) \norm[2]{\smash{\yl_i}}$ holds trivially. For $1 \leq s \leq \itr_{\max}$, as in the proof of Lemma \ref{le:ResParaLB}, we consider fixed index sets $\Gamma \in \mc J$, with
\begin{equation}
\mc J \defeq \{ \Gamma' \subset [n_\l] \backslash \{ i \} \colon \vert \Gamma' \vert = s \in [\itr_{\max}] \}, \label{eq:SIdxSets}
\end{equation} to resolve the issue of statistical  dependencies (between the columns of $\Yl_{\Lambda_\itr}$ and $\yl_i$) in \vspace{0.075cm} $\norm[2]{\smash{\rosl}} = \norm[2]{\smash{\Po (\mI - \Yl_{\Lambda_\itr} \pinv{\Yl_{\Lambda_\itr}}) \yl_i}}$. Specifically, this is accomplished 
by upper-bounding\vspace{0.05cm} $\| \Po (\mI - \allowbreak \Yl_\Gamma \pinv{\Yl_\Gamma}) \yl_i \|_2$ according to \eqref{eq:ResOrthUB-1} and establishing that the submatrix $\Yl_\Gamma$ of $\Yl$ is well-conditioned with high probability for all $\Gamma \in \mc J$, in particular for the sets $\Lambda_\itr \in \mc J$, $\itr \in [\itr_{\max}]$, which determine $\rosl$. This will be accomplished by employing the \ac{rip} \cite[Sec. 5.6]{vershynin2012nonasym} and standard concentration of measure results from random matrix theory.

By the triangle inequality and the submultiplicativity of the operator norm, we have, for every $\Gamma \in \mc J$, that
\begin{align}
&\norm[2]{\Po (\mI - \Yl_\Gamma \pinv{\Yl_\Gamma}) \yl_i} \nonumber\\
&\qquad\leq \norm[2]{\yl_{i \po}} + \norm[2 \to 2]{\Po \Yl_{\Gamma}} \norm[2 \to 2]{\pinv{\Yl_\Gamma}} \norm[2]{\yl_i} \label{eq:ResOrthUB-1} \\
&\qquad= \norm[2]{\zl_{i \po}} + \frac{1}{\minsingv{\Yl_\Gamma}} \norm[2 \to 2]{\Po \Zl_{\Gamma}} \norm[2]{\yl_i} \label{eq:ResOrthUB0} \\
&\qquad\leq \norm[2]{\zl_{i \po}} +  \frac{1}{\minsingv{\Atl_\Gamma}} \norm[2 \to 2]{\Zl_{\Gamma}} \norm[2]{\yl_i}, \label{eq:ResOrthUB}
\end{align}
where we used $\minsingv{\Yl_\Gamma} = 1/ \norm[2 \to 2]{\smash{\pinv{\Yl_\Gamma}}}$ \cite[Sec. 5.2.1]{vershynin2012nonasym}\vspace{0.05cm} to get \eqref{eq:ResOrthUB0} and $\minsingv{\Yl_\Gamma} \geq \minsingv{\Atl_\Gamma} > 0$ w.p. $1$ (where the first inequality is a consequence\vspace{0.05cm} of $\norm[2]{\smash{\Yl_\Gamma} \vv} \geq \norm[2]{\smash{\Pp \Yl_\Gamma} \vv} = \norm[2]{\smash{\Atl_\Gamma \vv}}$, for all $\vv \in \reals^s$, 
and the second stems from the fact that $\Atl_\Gamma$ has full column rank w.p. $1$) 
to get \eqref{eq:ResOrthUB}. Denote the elements of $\Gamma$ by $\gamma_j$, $j \in [s]$. We continue by decomposing $\Atl_\Gamma$ according to $\Atl_\Gamma = \mE_\Gamma \mD_\Gamma$, where 
$\mE \defeq [ \atl_{1}/\norm[2]{\smash{\atl_{1}}} \; \, \atl_{2}/\norm[2]{\smash{\atl_{2}}} \; \, \dots \; \, \atl_{n_\l}/\norm[2]{\smash{\atl_{n_\l}}}]$\vspace{0.075cm},
 and 
 $\mD_\Gamma \defeq \mathrm{diag}(\norm[2]{\smash{\atl_{\gamma_1}}}, \norm[2]{\smash{\atl_{\gamma_2}}}, \dots, \norm[2]{\smash{\atl_{\gamma_s}}})$. Note that the columns of $\mE$ are distributed i.i.d. uniformly at random on $\US{d_\l}$ and $\minsingv{\mD_\Gamma} \geq \tilde a$. We next establish that $\Atl$ and $\Zl$ satisfy the \ac{rip} with high probability, which will then allow us to bound $\minsingv{\Atl_\Gamma}$ and $\maxsingv{\Zl_\Gamma}$, respectively, in \eqref{eq:ResOrthUB}, for all $\Gamma \in \mc J$. 

We start by recalling the definition of the \ac{rip}.

\begin{definition} 
A matrix $\mA \in \reals^{m \times n}$ satisfies the \ac{rip} of order $p \geq 1$ if there exists $\delta_p > 0$  such that
\begin{equation}
(1-\delta_p) \norm[2]{\vv}^2 \leq \norm[2]{\mA \vv}^2 \leq (1+\delta_p) \norm[2]{\vv}^2 \label{eq:ripdef}
\end{equation}
holds for all $\vv \in \reals^n$ with $\norm[0]{\vv} \leq p$. The smallest number $\delta_p = \delta_p(\mA)$ satisfying \eqref{eq:ripdef} is called the restricted isometry constant of $\mA$.
\end{definition}
If $\mA \in \reals^{m \times n}$ satisfies the \ac{rip} of order $p$, it follows from \cite[Lem. 5.36]{vershynin2012nonasym} that for $\delta \in [\delta_p,1]$, 
\begin{align}
&1 - \delta \leq \minsingv{\mA_{\mc T}} \leq \maxsingv{\mA_{\mc T}} \leq 1+ \delta, \nonumber \\
&\qquad \qquad \qquad \qquad \qquad \text{for all $\mc T \subseteq [n]$ 	with $\vert \mc T \vert \leq p$}. \label{eq:RIPSingVals}
\end{align}

By \cite[Ex. 5.25]{vershynin2012nonasym} the rows of $(\sqrt{m} / \sigma) \Zl_{-i} \in \reals^{m \times (n_\l-1)}$ are independent sub-gaussian isotropic random vectors \cite[Def. 5.19, Def. 5.22]{vershynin2012nonasym}, and by \cite[Ex. 5.25]{vershynin2012nonasym} the columns of $\sqrt{d_\l} \mE_{-i} \in \reals^{d_\l \times (n_\l-1)}$ are independent sub-gaussian isotropic random vectors with $\ell_2$-norm $\sqrt{d_\l}$ a.s. 
We can therefore apply the next lemma to show that $(\sqrt{m} / \sigma) \Zl_{-i}$ and $\sqrt{d_\l} \mE_{-i}$ satisfy the \ac{rip} for suitable $p$ and $\delta$ with high probability. This will then allow us to bound $\minsingv{\Atl_\Gamma}$ and $\maxsingv{\Zl_\Gamma}$ for all $\Gamma \in \mc J$.

\begin{lemma}[{\cite[Thm. 5.65]{vershynin2012nonasym}}] \label{le:RIPSubgaussian}
Let $\mA \in \reals^{m \times n}$ be a random matrix with independent sub-gaussian isotropic random vectors 
as rows or independent sub-gaussian isotropic random vectors with $\ell_2$-norm $\sqrt{m}$ a.s. as columns. 
Select $p$ with $1 \leq p \leq n$ and let $\delta \in (0,1)$. If
\begin{equation}
m \geq C \delta ^{-2} p \log( e n / p), \nonumber
\end{equation}
then, w.p. at least $1-2e^{-c \delta^2 m}$, the normalized matrix $\bar \mA \defeq (1/\sqrt{m}) \mA$ satisfies the \ac{rip} of order $p$ with $\delta_p(\bar \mA) \leq \delta$. Here, the constants $c, C > 0$ depend only on the sub-gaussian norm\footnote{\label{fn:subgaussian}
The sub-gaussian norm of  a random variable $X$ is defined as
$\norm[\Psi_2]{X} \defeq \sup_{p \geq 1} p^{-1/2}(\mb E [\abs{X}^p])^{1/p}$ \cite[Def. 5.7]{vershynin2012nonasym}.} 
of the rows or columns of $\mA$.
\end{lemma} 
By Lemma \ref{le:RIPSubgaussian} with $\mA = (\sqrt{m} / \sigma) \Zl_{-i}$ and $\delta = 1/2$, and \eqref{eq:RIPSingVals}, if $\itr_{\max} \leq C_2 m / \log(e(n_\l-1)/\itr_{\max})$, there exist constants $c_2, C_2 > 0$ such that
\begin{align}
&\prob{\frac{1}{2} \sigma \leq \minsingv{\Zl_\Gamma} \leq \maxsingv{\Zl_\Gamma} \leq \frac{3}{2} \sigma, \; \; \forall \Gamma \in \mc J} \nonumber \\
&\hspace{5cm} \geq 1 - 2e^{-c_2 m}. \label{eq:ZGammaSingVals}
\end{align} 
Setting $\mA = \sqrt{d_\l} \mE_{-i}$ and $\delta = 1/2$ in Lemma \ref{le:RIPSubgaussian}, we have similarly
\begin{equation}
\prob{\frac{1}{2} \leq \minsingv{\mE_\Gamma} \leq \maxsingv{\mE_\Gamma} \leq \frac{3}{2}, \; \forall \Gamma \in \mc J} \!\geq\! 1 - 2e^{-c_3 d_\l}, \label{eq:BSingVals}
\end{equation}
if $\itr_{\max} \leq C_3 d_\l / \log(e(n_\l-1)/\itr_{\max})$, for constants $c_3, C_3 > 0$. Putting \eqref{eq:ZGammaSingVals} and \eqref{eq:BSingVals} together, 
we get \eqref{eq:LeResOrthUB} as follows
\begin{align}
\prob{\Ef} &= \mathrm{P}\! \Bigg[\norm[2]{\Po (\mI-\Yl_{\Lambda_\itr} \pinv{\Yl_{\Lambda_\itr}}) \yl_i} \nonumber \\ 
& \qquad \qquad \qquad \qquad \leq \norm[2]{\zl_{i \po}} + \frac{3 \sigma}{\tilde a} \norm[2]{\yl_i} \Bigg] \nonumber \\
&\geq \prob{  \frac{\maxsingv{\Zl_{\Lambda_\itr}}}{\minsingv{\Atl_{\Lambda_\itr}}} \leq \frac{3 \sigma}{\tilde a }} \label{eq:OrthProbBndLB1} \\
&\geq \prob{ \frac{\maxsingv{\Zl_{\Lambda_\itr}}}{\minsingv{\mE_{\Lambda_\itr}}} \leq 3 \sigma} \label{eq:OrthProbBndLB2} \\
&\geq \prob{ \left\{ \minsingv{\mE_{\Lambda_\itr}} \geq \frac{1}{2} \right\} \cap \left\{ \maxsingv{\Zl_{\Lambda_\itr}} \leq \frac{3}{2} \sigma\right\} } \nonumber \\
&\geq 1 - \prob{ \minsingv{\mE_{\Lambda_\itr}} < \frac{1}{2}} - \prob{ \maxsingv{\Zl_{\Lambda_\itr}} > \frac{3}{2} \sigma} \label{eq:OrthProbBndLB3}\\
&\geq 1 - 2e^{-c_2 m} - 2e^{-c_3 d_\l}, \label{eq:OrthProbBndLB4}
\end{align}
for all $\itr \leq \itr_{\max}$. Here \eqref{eq:OrthProbBndLB1} follows from \eqref{eq:ResOrthUB} with $\Gamma = \Lambda_\itr$, \eqref{eq:OrthProbBndLB2} is by 
$$\frac{1}{\minsingv{\Atl_{\Lambda_\itr}}}
\leq \frac{1}{\tilde a \minsingv{\mE_{\Lambda_\itr}}},$$
where we used
\begin{align*}
\minsingv{\Atl_{\Lambda_\itr}} 
&= \min_{\vv \in \reals^\itr}\norm[2]{\mE_{\Lambda_\itr}\mD_{\Lambda_\itr} \vv} \\
&= \min_{\vv \in \reals^\itr} \norm[2]{\mE_{\Lambda_\itr} \frac{\mD_{\Lambda_\itr} \vv}{\norm[2]{\mD_{\Lambda_\itr} \vv}}}\norm[2]{\mD_{\Lambda_\itr} \vv} \\
&\geq \min_{\vv \in \reals^\itr} \norm[2]{\mE_{\Lambda_\itr} \frac{\mD_{\Lambda_\itr} \vv}{\norm[2]{\mD_{\Lambda_\itr} \vv}}} \tilde a \\
&= \tilde a \minsingv{\mE_{\Lambda_\itr}},
\end{align*}
\eqref{eq:OrthProbBndLB3} is by a union bound, and \eqref{eq:OrthProbBndLB4} follows from \eqref{eq:ZGammaSingVals} and \eqref{eq:BSingVals} with $\Lambda_\itr \in \mc J$ for all $\itr \leq \itr_{\max}$. Finally, letting $c_1 \defeq \min \{ 2 C_2 , C_3 \} \leq \min \{ (m/d_\l) C_2 , C_3 \}$ (using the assumption $m \geq 2 d_{\max}$) ensures that $\itr_{\max}$, and thereby $| \Gamma |$, is small enough for both \eqref{eq:ZGammaSingVals} and \eqref{eq:BSingVals} to hold. This concludes the proof of Lemma \ref{le:ResOrthUB}.
\end{proof}

We proceed by establishing a lower bound on $\prob{\smash{\Ee}}$.
\begin{lemma} \label{le:RHSLB} 
For numerical constants $c_4,c_5 > 0$ it holds that
\begin{align}
\prob{\Ee} &= \mathrm{P} \Bigg[ \max_{j \in [n_\l] \backslash (\Lambda_s \cup \{ i \}) } \abs{\innerprod{\yl_j}{\rsl}} \nonumber \\
& \qquad \qquad \geq \left(1-\frac{c_4+1}{\sqrt{\rho_\l}}\right)\frac{\norm[2]{\rpsl}}{\sqrt{d_\l}}\nonumber \\ 
& \qquad \qquad \qquad - \sigma \left(\frac{1}{\sqrt{m}} + \frac{2}{\sqrt{n_\l - 1}} \right) \norm[2]{\rsl} \Bigg] \nonumber \\
&\geq 1 - 2e^{-c_5 d_\l} - 2e^{-m/2}.  \label{eq:LemRHSCorSelCondLB}
\end{align}
\end{lemma}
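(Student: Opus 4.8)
The plan is to lower-bound the RHS of \eqref{eq:CorrSelCond} by producing a \emph{single} index $j^\star \in [n_\l]\setminus(\Lambda_s\cup\{i\})$ whose signal component aligns well with the residual and whose noise interference is small. Writing $\innerprod{\yl_j}{\rsl} = \innerprod{\xl_j}{\rpsl} + \innerprod{\zl_j}{\rsl}$ (using $\xl_j\in\cS_\l$ so that $\innerprod{\xl_j}{\rosl}=0$), a reverse-triangle-inequality step gives $\max_{j}\abs{\innerprod{\yl_j}{\rsl}}\geq \abs{\innerprod{\xl_{j^\star}}{\rpsl}} - \abs{\innerprod{\zl_{j^\star}}{\rsl}}$ once $j^\star$ is chosen to maximize $\abs{\innerprod{\xl_j}{\rpsl}}$. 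As in the proofs of Lemmas \ref{le:ResParaLB} and \ref{le:ResOrthUB}, the obstacle is that $\rsl$ depends, through the selection rule, on \emph{all} points of $\cY_\l$, so $\rsl$ and the candidate $\yl_j$ are statistically dependent. I would resolve this with the same decoupling device: fix an index set $\Gamma\in\mc J$ (see \eqref{eq:SIdxSets}), work with the auxiliary residual $\vr_\Gamma \defeq (\mI-\Yl_\Gamma\pinv{\Yl_\Gamma})\yl_i$, which is a function of $\yl_i$ and $\{\yl_j:j\in\Gamma\}$ only, establish the bound conditionally for this $\Gamma$, and finally take a union bound over all $\Gamma\in\mc J$ to cover the realized (random) support $\Lambda_s$, for which $\vr_\Gamma=\rsl$.

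For the signal term I would condition on $\vr_\Gamma$, which fixes a unit vector $\vu \defeq \transp{\Ul}\vr_{\Gamma\pp}/\norm[2]{\vr_{\Gamma\pp}}\in\US{d_\l}$, and use $\innerprod{\xl_j}{\vr_{\Gamma\pp}} = \norm[2]{\vr_{\Gamma\pp}}\innerprod{\al_j}{\vu}$. For $j$ in the set $\mc K\defeq[n_\l]\setminus(\Gamma\cup\{i\})$ the $\al_j$ are i.i.d.\ uniform on $\US{d_\l}$ and independent of $\vu$. The quantitative heart is the elementary averaging bound $\max_{j\in\mc K}\innerprod{\al_j}{\vu}^2 \geq \frac{1}{\abs{\mc K}}\sum_{j\in\mc K}\innerprod{\al_j}{\vu}^2 \geq \frac{1}{\abs{\mc K}}\minsingv{\mA_{\mc K}}^2$, where $\mA_{\mc K}$ collects the $\al_j$, $j\in\mc K$, as columns. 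Since $\sqrt{d_\l}\,\transp{\mA_{\mc K}}$ is a tall matrix with i.i.d.\ sub-gaussian isotropic rows, a standard smallest-singular-value bound for such matrices \cite[Thm.~5.39]{vershynin2012nonasym} gives $\minsingv{\mA_{\mc K}}^2 \geq \frac{\abs{\mc K}}{d_\l}\bigl(1-C\sqrt{d_\l/\abs{\mc K}}\bigr)^2$ with probability at least $1-2e^{-c_5 d_\l}$; using $\abs{\mc K}\geq n_\l-1-\itr_{\max}$ and $\rho_\l=(n_\l-1)/d_\l$, and absorbing constants into $c_4$, this yields $\max_{j\in\mc K}\abs{\innerprod{\al_j}{\vu}}\geq (1-(c_4+1)/\sqrt{\rho_\l})/\sqrt{d_\l}$ and hence the claimed lower bound on $\abs{\innerprod{\xl_{j^\star}}{\rpsl}}$.

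For the noise term I would exploit the independence of signal and noise. Because $j^\star$ is a function of $\{\al_j\}_{j\in\mc K}$ and $\vr_\Gamma$ only, while $\zl_{j^\star}$ (with $j^\star\in\mc K$, hence $j^\star\notin\Gamma\cup\{i\}$) is independent of both, conditioning on the $\sigma$-algebra generated by $\{\al_j\}_{j\in\mc K}$ and $\vr_\Gamma$ makes $\innerprod{\zl_{j^\star}}{\vr_\Gamma}$ a single centred Gaussian of variance $(\sigma^2/m)\norm[2]{\vr_\Gamma}^2$, crucially with \emph{no} maximization over $j$. Splitting this into its $\cS_\l$- and $\cS_\l^\perp$-parts and applying a Gaussian tail bound (Lemma \ref{thm:hoeffsphere}) at the appropriate levels, together with the union over $\Gamma\in\mc J$ (whose log-cardinality is $\log\binom{n_\l-1}{s}\leq\itr_{\max}\log((n_\l-1)e/\itr_{\max})\lesssim d_\l$ by the hypothesis on $\itr_{\max}$), produces the bound $\sigma\bigl(1/\sqrt m + 2/\sqrt{n_\l-1}\bigr)\norm[2]{\rsl}$ at failure probability at most $2e^{-m/2}$. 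Intersecting the two high-probability events, combining the two estimates at $j^\star$, and summing the failure probabilities then gives \eqref{eq:LemRHSCorSelCondLB}. I expect the principal difficulty to lie exactly in the decoupling: the realized residual $\rsl$ depends on $\zl_{j^\star}$ through the selection history, so the clean single-Gaussian statement is valid only after fixing $\Gamma$, and one must verify that the ensuing union over all $\binom{n_\l-1}{s}$ subsets remains affordable, which is precisely what the assumption $\itr_{\max}\leq c_1 d_\l/\log(e(n_\l-1)/\itr_{\max})$ guarantees.
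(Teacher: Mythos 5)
Your signal-term argument is sound, but it is essentially the paper's own argument in disguise: the chain $\max_j \abs{\innerprod{\al_j}{\vu}}^2 \geq \abs{\mc K}^{-1}\norm[2]{\transp{\mA_{\mc K}}\vu}^2 \geq \abs{\mc K}^{-1}\minsingv{\mA_{\mc K}}^2$ is exactly the paper's $\norm[\infty]{\cdot} \geq \norm[2]{\cdot}/\sqrt{n_\l-1}$ followed by $\minsingvb{\transp{\Al_{-i}}}$, and---importantly---it holds \emph{deterministically for every unit vector} $\vu$ once the singular-value event holds. So the decoupling machinery you invoke (fixing $\Gamma\in\mc J$, conditioning on $\vr_\Gamma$, union-bounding over $\binom{n_\l-1}{s}$ supports) is unnecessary there; the paper applies the smallest-singular-value bound to the full matrix $\Al_{-i}$ and never needs to worry about the dependence between $\rsl$ and the columns. (Your version also has the cosmetic issue that the union cost $e^{\itr_{\max}\log((n_\l-1)e/\itr_{\max})}$ must be beaten by $e^{-c_5 d_\l}$ with $c_5$ an uncontrolled Vershynin constant, which forces you to enlarge $t$ and hence $c_4$; fixable, but avoidable.)

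The genuine gap is in the noise term. After conditioning, $\innerprod{\zl_{j^\star}}{\vr_\Gamma}/\norm[2]{\vr_\Gamma}$ is indeed a single $\mc N(0,\sigma^2/m)$ variable, but the claimed threshold $\sigma(1/\sqrt{m}+2/\sqrt{n_\l-1})\norm[2]{\rsl}$ sits at only $1+2\sqrt{m/(n_\l-1)}$ standard deviations of that Gaussian---an $O(1)$ level whenever $n_\l-1\gtrsim m$, so the per-event failure probability is a constant, nowhere near $2e^{-m/2}$. Worse, your own decoupling forces a union bound over all $\binom{n_\l-1}{s}\leq e^{\itr_{\max}\log((n_\l-1)e/\itr_{\max})}$ candidate supports $\Gamma$ (the realized $\Lambda_\itr$ is random and depends on the noise), so to get any acceptable failure probability you must push the deviation level $\beta$ up to order $\sqrt{d_\l}$ or $\sqrt{m}$, which turns the noise term into $\Theta(\sigma)\norm[2]{\rsl}$ rather than $\sigma(1/\sqrt{m}+2/\sqrt{n_\l-1})\norm[2]{\rsl}$. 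A noise term of order $\sigma$ is too large by a factor of roughly $\sqrt{\min\{m,n_\l\}}$ and would wreck the downstream clustering condition \eqref{eq:ClusCondThm}, which relies on this term vanishing as $m,n_\l\to\infty$. The $2/\sqrt{n_\l-1}$ factor in the lemma is the fingerprint of an \emph{operator-norm} bound: the paper controls $\maxsingvb{\transp{\Zl_{-i}}}\leq\sigma(\sqrt{n_\l-1}+2\sqrt{m})/\sqrt{m}$ for the whole $(n_\l-1)\times m$ Gaussian matrix via \cite[Cor.~5.35]{vershynin2012nonasym}, a bound that is uniform over all residual directions and therefore needs neither conditioning nor a union over supports. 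A single-index tail bound cannot reproduce it.
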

\begin{proof} We first lower-bound the maximum in \vspace{0.08cm}\eqref{eq:LemRHSCorSelCondLB} 
by a term proportional to $\norm[2]{\smash{\transp{\Yl_{-i}} \rsl}}$ and then establish \eqref{eq:LemRHSCorSelCondLB} by leveraging standard bounds on the singular values of random matrices. We have
\begin{align}
&\max_{j \in [n_\l] \backslash (\Lambda_s \cup \{ i \})} \abs{\innerprod{\yl_j}{\rsl}} = \max_{j \in [n_\l] \backslash \{ i \}} \abs{\innerprod{\yl_j}{\rsl}} \nonumber \\
&\qquad= \norm[\infty]{\transp{\Yl_{-i}} \rsl} \nonumber \\
&\qquad\geq \frac{\norm[2]{\transp{\Yl_{-i}} \rsl}}{\sqrt{n_\l - 1}} \nonumber\\
&\qquad= \frac{\norm[2]{\transp{\Al_{-i}} \transp{\Ul} \rpsl + \transp{\Zl_{-i}} \rsl}}{\sqrt{n_\l - 1}} \nonumber \\
&\qquad\geq \frac{\norm[2]{\transp{\Al_{-i}} \transp{\Ul} \rpsl}}{\sqrt{n_\l - 1}} - \frac{\norm[2]{\transp{\Zl_{-i}} \rsl}}{\sqrt{n_\l - 1}} \nonumber \\
&\qquad\geq \frac{\minsingvb{\transp{\Al_{-i}}}}{\sqrt{n_\l - 1}} \norm[2]{\rpsl} - \frac{\maxsingvb{\transp{\Zl_{-i}}}}{\sqrt{n_\l - 1}} \norm[2]{\rsl}, \label{eq:RHSLBlinalg}
\end{align}
where the first equality is thanks to orthogonality of $\smash{\rsl}$ and $\yl_j$, 
for all $j \in \Lambda_s$, the first inequality follows from $\norm[2]{\vv} \leq \sqrt{n_\l-1} \norm[\infty]{\vv}$, for all $\vv \in \reals^{n_\l -1}$, and the second inequality is by the reverse triangle inequality.

Noting that $\sqrt{d_\l} \transp{\Al_{-i}}$ is a $(n_\l - 1) \times d_\l$ matrix whose rows are independent isotropic subgaussian random vectors (as defined in \cite[Def. 5.19, Def. 5.22]{vershynin2012nonasym}), it follows from \cite[Thm. 5.39]{vershynin2012nonasym} (see Theorem \ref{thm:SupSingval} in Appendix \ref{sec:suppmat}) that 
\begin{equation} \label{eq:SubgaussRowMinSing}
\PR{\sqrt{d_\l} \minsingvb{\transp{\Al_{-i}}} < \sqrt{n_\l - 1} - c_4 \sqrt{d_\l} - t} < 2 e^{-c_5t^2}\!,
\end{equation}
where the constants $c_4, c_5 > 0$ depend only on the sub-gaussian norm of the rows of $\transp{\Al_{-i}}$. Setting $t = \sqrt{d_\l}$ in \eqref{eq:SubgaussRowMinSing}, we get
\begin{align} \label{eq:RHSLBfirst}
&\PR{\frac{\minsingvb{\transp{\Al_{-i}}}}{\sqrt{n_\l - 1}} \norm[2]{\rpsl} < \left(1-\frac{c_4+1}{\sqrt{\rho_\l}}\right)\frac{\norm[2]{\rpsl}}{\sqrt{d_\l}}} \nonumber \\ 
& \hspace{6cm}< 2 e^{-c_5 d_\l}.
\end{align}

Since $(\sqrt{m}/\sigma) \transp{\Zl_{-i}}$ is a $(n_\l - 1) \times m$ matrix with i.i.d. standard normal entries, it follows from \cite[Cor. 5.35]{vershynin2012nonasym} (see Corollary \ref{thm:SupGaussSingval} in Appendix \ref{sec:suppmat}) that
\begin{equation} \label{eq:GaussMaxSing}
\PR{\frac{ \sqrt{m}}{\sigma} \maxsingvb{\transp{\Zl_{-i}}} > \sqrt{n_\l - 1} + \sqrt{m} + t} < 2e^{-t^2/2}.
\end{equation}
Setting $t = \sqrt{m}$ in \eqref{eq:GaussMaxSing}, we obtain
\begin{align} \label{eq:RHSLBsecond}
&\!\!\PR{\frac{\maxsingvb{\transp{\Zl_{-i}}}}{\sqrt{n_\l - 1}} \norm[2]{\rsl} \!>\! \sigma \left(\frac{1}{\sqrt{m}} + \frac{2}{\sqrt{n_\l - 1}} \right) \norm[2]{\rsl} } \nonumber \\
& \hspace{6cm}< 2e^{-m/2}.
\end{align}
The claim in Lemma \ref{le:RHSLB} now follows by lower-bounding the first term in \eqref{eq:RHSLBlinalg} using \eqref{eq:RHSLBfirst}, by upper-bounding the second term in \eqref{eq:RHSLBlinalg} using \eqref{eq:RHSLBsecond}, and by application of a union bound.
\end{proof}

Finally, we derive a lower bound on $\prob{\smash{\Ed}}$.
\begin{lemma}\label{le:Ed} 
We have 
\begin{equation}
\prob{\Ed} \geq 1 - \sum_{\l \in [L]} n_\l ( e^{- d_\l/8} + e^{-m/8} ). \label{eq:EdLB}
\end{equation}
\end{lemma}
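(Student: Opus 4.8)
The plan is to reduce the event $\Ed$ to two chi-squared concentration statements---one controlling the full noise norm $\norm[2]{\zl_j}$ and one controlling the norm of its projection $\norm[2]{\zl_{j\pp}}$ onto $\cS_\l$---and then to conclude via a union bound over $\l \in [L]$ and $j \in [n_\l]$.

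First I would record the distributional facts implied by the noise model $\zl_j \sim \mc N(\mathbf{0}, (\sigma^2/m)\mI_m)$. Since the entries of $\zl_j$ are i.i.d. $\mc N(0,\sigma^2/m)$, the scaled squared norm $(m/\sigma^2)\norm[2]{\zl_j}^2$ is chi-squared with $m$ degrees of freedom. For the projection, I would use $\zl_{j\pp} = \Pp \zl_j = \Ul \transp{\Ul}\zl_j$, so that $\norm[2]{\zl_{j\pp}}^2 = \norm[2]{\transp{\Ul}\zl_j}^2$; because $\Ul$ has $d_\l$ orthonormal columns and the distribution of $\zl_j$ is isotropic, $\transp{\Ul}\zl_j \sim \mc N(\mathbf{0},(\sigma^2/m)\mI_{d_\l})$, whence $(m/\sigma^2)\norm[2]{\zl_{j\pp}}^2$ is chi-squared with $d_\l$ degrees of freedom. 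In both cases the thresholds in the definition of $\Ed$ are exactly ``$9/4$ times the mean'': indeed $\norm[2]{\zl_j}\le\frac32\sigma \Leftrightarrow (m/\sigma^2)\norm[2]{\zl_j}^2\le\frac94 m$, and $\norm[2]{\zl_{j\pp}}\le\frac32\frac{\sqrt{d_\l}}{\sqrt m}\sigma \Leftrightarrow (m/\sigma^2)\norm[2]{\zl_{j\pp}}^2\le\frac94 d_\l$.

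Next I would apply a standard upper-tail Chernoff bound for a $\chi^2_k$ variable $X$, namely $\prob{X\ge a}\le e^{-\lambda a}(1-2\lambda)^{-k/2}$ for $0\le\lambda<1/2$, at $a=\frac94 k$. The optimal choice $\lambda=5/18$ gives $\prob{X\ge\frac94 k}\le \bigl(e^{-5/8}(9/4)^{1/2}\bigr)^{k}=e^{-k(5/8-\frac12\log(9/4))}$, and since $5/8-\frac12\log(9/4)>1/8$ this yields the clean bound $\prob{X\ge\frac94 k}\le e^{-k/8}$. Specializing to $k=m$ and $k=d_\l$ produces $\prob{\norm[2]{\zl_j}>\frac32\sigma}\le e^{-m/8}$ and $\prob{\norm[2]{\zl_{j\pp}}>\frac32\frac{\sqrt{d_\l}}{\sqrt m}\sigma}\le e^{-d_\l/8}$ for every fixed $\l,j$.

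Finally, since $\comp{\Ed}$ is the event that at least one of the two inequalities fails for some $\l\in[L]$, $j\in[n_\l]$, a union bound over these events gives $\prob{\comp{\Ed}}\le\sum_{\l\in[L]} n_\l\bigl(e^{-d_\l/8}+e^{-m/8}\bigr)$, which is precisely \eqref{eq:EdLB}. There is no serious obstacle here; the only points requiring a little care are confirming that the projected noise $\transp{\Ul}\zl_j$ is again an isotropic (scaled) Gaussian of dimension $d_\l$---which rests on the rotational invariance of the isotropic Gaussian---and checking that the numerical slack in the threshold factor $9/4$ is enough to absorb the exponent down to $1/8$, which the computation above verifies.
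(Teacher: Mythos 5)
Your proof is correct, but it reaches the bound by a genuinely different route than the paper. You reduce both events to upper-tail bounds for chi-squared variables: you observe that $(m/\sigma^2)\norm[2]{\smash{\zl_j}}^2 \sim \chi^2_m$ and, via $\norm[2]{\smash{\zl_{j\pp}}} = \norm[2]{\smash{\transp{\Ul}\zl_j}}$ with $\transp{\Ul}\zl_j \sim \mc N(\mathbf 0,(\sigma^2/m)\mI_{d_\l})$, that $(m/\sigma^2)\norm[2]{\smash{\zl_{j\pp}}}^2 \sim \chi^2_{d_\l}$; the thresholds become $\tfrac94$ times the respective means, and an optimized Chernoff bound with $\lambda = 5/18$ gives exponent $5/8-\log(3/2)\approx 0.22 > 1/8$, so $e^{-k/8}$ follows for $k\in\{m,d_\l\}$. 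The paper instead invokes Gaussian concentration for Lipschitz functions (its Theorem on concentration of measure): both $\vz\mapsto\norm[2]{\vz}$ and $\vz\mapsto\norm[2]{\smash{\Pp\vz}}$ are $1$-Lipschitz, Jensen gives $\mb E[\norm[2]{\vz}]\le\sqrt m$ and $\mb E[\norm[2]{\smash{\Pp\vz}}]\le\sqrt{d_\l}$, and setting $t=\sqrt m/2$ and $t=\sqrt{d_\l}/2$ in $\prob{f(\vz)-\mb E[f(\vz)]>t}\le e^{-t^2/2}$ yields the same two tail bounds; the union bound step is identical. Your argument is more elementary (it needs only the $\chi^2$ moment generating function rather than Borell-type concentration) and in fact delivers a strictly better exponent before rounding down to $1/8$; the paper's argument avoids any MGF optimization and handles the norm directly rather than its square, at the cost of citing a heavier concentration result. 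Both are complete and both land exactly on \eqref{eq:EdLB}.
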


\begin{proof}
The proof is effected by applying the following well-known concentration result.
\begin{theorem}[{\cite{ledoux2005concentration}}] \label{th:ConMeasure} Let $f \colon \reals^m \to \reals$ be a Lipschitz function with Lipschitz constant $K$, i.e., $\abs{f(\va) - f(\vb)} \leq K \norm[2]{\va- \vb}$, for all $\va, \vb \in \reals^m$. Let $\vz \in \reals^m$ be a $\mc N(\mathbf 0, \mI_m)$ vector. Then, for $t \geq 0$, we have
\begin{equation}
\prob{f(\vz) - \mb E [f(\vz)] > t} \leq e^{-t^2/(2 K^2)}. \label{eq:conclipschitz}
\end{equation}
\end{theorem}
The functions $f(\vz) = \norm[2]{\vz}$ and $f_\pp (\vz) = \norm[2]{\smash \Pp \vz}$ both have Lipschitz constant $K = 1$ ($\abs{\smash{f_\pp(\va) - f_\pp(\vb)}} = \abs{\norm[2]{\smash{\Pp \va}}-\norm[2]{\smash{\Pp \vb}}} \leq \norm[2]{\smash{\Pp(\va-\vb)}} \leq \norm[2]{\va-\vb}$, for all $\va,\vb \in \reals^m$, by the reverse triangle inequality and $\norm[2 \to 2]{\smash{\Pp}} = 1$). For $\vz \sim \mc N(\mathbf 0, \mI_m)$, we get by Jensen's inequality $\mb E[\norm[2]{\smash{ \vz}}] \leq \sqrt{ \mb E [ \norm[2]{\smash{ \vz} }^2 ] } = \sqrt{m}$ and $\mb E[\norm[2]{\smash{ \Pp \vz}}] \leq \sqrt{ \mb E [\norm[2]{\smash{ \Pp \vz}}^2 ] } =\sqrt{d_\l}$ (where the equality follows from $\Pp \vz = \Ul \transp{\Ul} \vz$ and the fact that $\transp{\Ul} \vz$ is $\mc N (0, \mI_{d_\l})$-distributed). Noting that $\zl_i \sim (\sigma / \sqrt{m}) \vz$, application of \eqref{eq:conclipschitz} to $\vz$ and $\Pp \vz$ 
yields
\begin{align}
&\prob{\norm[2]{\zl_i} > \frac{3}{2} \sigma} \leq e^{-m/8} \qquad \text{and} \nonumber \\
&\prob{\norm[2]{\Pp \zl_i} > \frac{3 \sqrt{d_\l}}{2 \sqrt{m}} \sigma} \leq e^{-d_\l/8},\nonumber
\end{align}
where we set  $t = \sqrt{m}/2$ and $t = \sqrt{d_\l}/2$, respectively. A union bound over $\l \in[L]$, $i \in [n_\l]$, yields the desired lower bound \eqref{eq:EdLB}.
\end{proof}

\section{Proof of Theorem \ref{th:SSCMP}}  \label{sec:pfsscmp}

Most steps of the proof of Theorem~\ref{th:SSCMP} are almost identical to those in the proof of Theorem~\ref{th:SSCOMP}. We therefore elaborate only on the arguments the proofs differ in significantly. 

Analogously to the proof for \ac{ssc}-\ac{omp}, we will henceforth work with the ``reduced \ac{mp}'' algorithm which, for the representation of $\yl_i$ selects elements from the reduced dictionary $\cY_\l \backslash \{\yl_i\}$ only, instead of the full dictionary $\cY \backslash \{\yl_i\}$. The justification for relying on reduced \ac{mp} to establish the desired result is identical to that for \ac{omp}. Throughout the proof the residual of the reduced \ac{mp} algorithm will be denoted by $\qsl$ and the number of iterations actually performed when a stopping condition has been met by $\send$. As in the \ac{omp} case, for expositional convenience, the quantities $\qsl$ and $\send$ do not reflect the dependence on the index $i$ of the data point $\yl_i$. 

Note that the selection rules for \ac{omp} and \ac{mp} are equivalent in the following sense. As the \ac{omp} residual $\smash{\rsl}$ is orthogonal to $\yl_j$,\vspace{0.05cm} for all $j \in \Lambda_s$, we can replace $\max_{j \in [N] \backslash (\Lambda_s \cup \{ i \})} \abs{\innerprod{\smash{\yl_j}}{\smash{\rs}}}$\vspace{0.08cm} on the \ac{rhs} of \eqref{eq:CorrSelCond} by $\max_{j \in [N] \backslash \{ i \}} \abs{\innerprod{\smash{\yl_j}}{\smash{\rs}}}$,\vspace{0.05cm} i.e., we can take the maximization over $j \in [N] \backslash \{ i \}$ as in \ac{mp}. We therefore need to show that \eqref{eq:CorrSelCond} with $\rs$ replaced by $\qs$ and $\max_{j \in [N] \backslash \! (\Lambda_s \cup \{ i \}\!)}$ replaced by $\max_{j \in [N] \backslash \{ i \}}$ holds for all \ac{mp} iterations, and for every $\yl_i \in \cY_\l$, $\l \in [L]$, w.p. at least $P^\star$. 

Next, we systematically revisit the events $\Ea$-- $\Eg$ and adapt the corresponding bounds for \ac{mp} where needed. Recall that the bounds on $\PR{\smash{\Ea}}$ and $\PR{\smash{\Eb}}$ in Lemma \ref{le:EaEbEc} rely on the rotational invariance---as expressed in Lemma \ref{le:ResRotInv}---of the distributions of $\rpsl$ and $\rosl$, respectively. As the residual update rule \eqref{eq:MPresorth} for \ac{mp} differs from that for \ac{omp} in \eqref{eq:OMPResFormula}, we need to establish rotational invariance for $\qpsl$ and $\qosl$, which will be done in Lemma \ref{le:ResRotInvMP} below. The bounds on $\PR{\smash{\Ec}}$, $\PR{\smash{\Ed}}$, and $\PR{\smash{\Ee}}$ in Lemmata \ref{le:EaEbEc}, \ref{le:Ed}, and \ref{le:RHSLB}, respectively, do not depend on a particular property of $\rsl$ apart from $\norm[2]{\smash{\rsl}} \leq \norm[2]{\smash{\yl_i}}$ in the case of $\PR{\smash{\Ec}}$\vspace{0.05cm} (we have $\norm[2]{\smash{\qsl}} \leq \norm[2]{\smash{\yl_i}}$ as a consequence of \cite[Eq. 13]{mallat1993matching}).\vspace{0.05cm} Thus, the bounds on $\PR{\smash{\Ea}}$--$\PR{\smash{\Ee}}$\vspace{0.05cm} continue to hold for $\rsl$, $\rpsl$, and $\rosl$ in $\Ea$--$\Ee$\vspace{0.05cm} replaced by $\qsl$, $\qpsl$, and $\qosl$, respectively, and we readily get the upper bound \eqref{eq:CorrSelCondLHSUB} on the \ac{lhs} of \eqref{eq:CorrSelCond} and the lower bound \eqref{eq:CorrSelCondRHSLB} on the \ac{rhs} of \eqref{eq:CorrSelCond}. The bounds on $\norm[2]{\smash{\rpsl}}$ and $\norm[2]{\smash{\rosl}}$ in $\Ef$ and $\Eg$, respectively, in the proof of Theorem~\ref{th:SSCOMP} require more work. 
In particular, as the stopping behavior of \ac{mp} is different from that of \ac{omp}, we need the corresponding bounds on $\norm[2]{\smash{\qosl}}$ and $\norm[2]{\smash{\qpsl}}$ to hold for all \ac{mp} iterations $\itr \in[\itr_\mathrm{a}]$ and for a maximum sparsity level of $\mpspar$. 
This will be accomplished by deriving an upper bound on $\norm[2]{\smash{\qosl}}$\vspace{0.05cm} and a lower bound on $\norm[2]{\smash{\qpsl}}$ leading to suitably modified events $\Eft$ and $\Egt$, respectively, as defined below. Specifically, the resulting upper bound on $\norm[2]{\smash{\qosl}}$ is slightly weaker than that on $\norm[2]{\smash{\rosl}}$ in $\Ef$, but exhibits the same scaling behavior in $\sigma$, whereas the resulting lower bound on $\norm[2]{\smash{\qpsl}}$ is identical to the one on $\norm[2]{\smash{\rpsl}}$ in $\Eg$.

We proceed by introducing the modified events 
\begin{align}
&\Eft \defeq \left\{ \norm[2]{\qosl} \leq \norm[2]{\zl_{i \po}} + \frac{6 \sigma}{\tilde a} \norm[2]{\yl_i}, \;\; \forall \itr \leq \send \right\}, \quad \text{and} \nonumber \\
&\Egt \defeq \vast\{ \norm[2]{\qpsl} \nonumber \\
&> \norm[2]{\yl_{i \pp}} \!\left(\frac{2}{3} - \!\sqrt{ \frac{3 \mpspar \log ((n_\l - 1)e / \mpspar)}{d_\l}} \right)\!, \forall \itr \leq \send \vast\}\!, \label{eq:qplbdef}
\end{align}
where $\tilde a \defeq \min_{j \in [n_\l] \backslash\{ i \}} \norm[2]{\smash{\yl_{j \pp}}}$,\vspace{0.1cm} and deriving lower bounds on $\PR{\smash{\Eft}}$ and $\PR{\smash{\Egt}}$ in Lemmata \ref{le:ResParaMPLB} and \ref{le:ResOrthMPUB}, respectively. These lower bounds will turn out to be identical to those for \ac{ssc}-\ac{omp}. Although the corresponding proofs rely on arguments similar in spirit to those used for \ac{omp}, the technical details are dissimilar enough to warrant detailed presentation.

We continue by establishing the bounds on $\norm[2]{\smash{\qpsl}}$ and $\norm[2]{\smash{\qosl}}$, as announced. Using the assumptions $m \geq 2 d_{\max}$, $\sigma \leq 1/2$, and $\mpspar \leq \min_{\l \in [L]} \{ c_s d_\l / \log( (n_\l - 1) e /\mpspar) \}$, we have on $\Ed \cap \Eft$ that
\begin{equation}
\norm[2]{\qosl} \leq \frac{3}{2} \sigma + 6 \sigma \frac{1 + \frac{3}{2} \sigma}{1 - \frac{3}{2} \frac{\sqrt{d_\l}}{\sqrt{m}} \sigma} \leq \sigma (15 + 20 \sigma) \label{eq:qoslSimpleUB}
\end{equation}
and, by repeating the steps in \eqref{eq:respSimpleLB}, we get $\norm[2]{\smash{\qpsl}} > 1/20$ on $\Egt$. 
It therefore follows that \eqref{eq:CorrSelCond} for \ac{mp} is implied by \eqref{eq:ClusCondThm} with $c(\sigma) = 17 + 23 \sigma$ on $\tilde \Estar \defeq \bigcap_{\l,i,s} (\Ea \cap \Eb \cap \Ec \cap \Ed \cap \Ee \cap \Eft \cap \Egt)$. The proof is completed by lower-bounding $\PR{\smash{\tilde \Estar}}$ via a union bound. 

We proceed by establishing the rotational invariance properties of $\qpsl$ and $\qosl$\vspace{0.05cm} needed to establish the lower bounds on $\PR{\smash{\Eft}}$ and $\PR{\smash{\Egt}}$ in Lemmata \ref{le:ResParaMPLB} and \ref{le:ResOrthMPUB}, respectively.

\begin{lemma} \label{le:ResRotInvMP} 
The distributions of $\qpsl$ and $\qosl$ are rotationally invariant on $\cS_\l$ and $\cS_\l^\perp$, respectively, i.e., for unitary transformations $\mV^\pp, \mV^\po \in \reals^{m \times m}$ 
of the form specified in Lemma \ref{le:ResRotInv}, we have $\mV^\pp \qpsl \sim \qpsl$ and $\mV^\po \qosl \sim \qosl$.
\end{lemma}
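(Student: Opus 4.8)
The plan is to follow the blueprint of the proof of Lemma~\ref{le:ResRotInv} for \ac{omp}, adapting only the single step that hinges on the residual update rule. Concretely, I would first establish that the reduced \ac{mp} residual is \emph{covariant} under unitary transformations of the points in $\cY_\l$, i.e., that $\qs(\mV\yl_i, \mV\Yl_{-i}) = \mV\qs(\yl_i, \Yl_{-i})$ holds pathwise for every unitary $\mV \in \reals^{m\times m}$, all iterations $\itr$, and all $i \in [n_\l]$, $\l \in [L]$, where $\qs(\vx,\mD)$ denotes the reduced \ac{mp} residual after iteration $\itr$ when \ac{mp} is applied to $\vx$ with the columns of $\mD$ as dictionary. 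Once this covariance is in hand, the distributional conclusion $\mV^\pp\qpsl \sim \qpsl$ and $\mV^\po\qosl \sim \qosl$ follows \emph{verbatim} from the argument in \eqref{eq:ResDistInv1}--\eqref{eq:ResDistInv2}, with $\qs$ in place of $\rs$: the only two ingredients used there are the covariance property and the facts $\mV^\pp\yl_j \sim \yl_j$, $\mV^\po\yl_j \sim \yl_j$, both of which are inherited unchanged from the rotational invariance on $\cS_\l$ and $\cS_\l^\po$ of the distributions of $\yl_{j\pp} = \Ul\atl_j$ and $\yl_{j\po} = \Po\zl_j$.

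The covariance itself I would prove by induction on $\itr$. The base case $\itr = 0$ is immediate since $\vq_0(\mV\yl_i, \mV\Yl_{-i}) = \mV\yl_i = \mV\vq_0(\yl_i,\Yl_{-i})$. For the inductive step, assuming $\vq_{\itr-1}$ transforms covariantly, I would first argue that the selection rule \eqref{eq:MPSelRule} picks the same atom for the transformed and the original data: because $\innerprod{\mV\yl_j}{\mV\vq_{\itr-1}} = \innerprod{\yl_j}{\vq_{\itr-1}}$ by unitarity of $\mV$, the two $\arg\max$ problems coincide, so $\omega_\itr(\mV\yl_i,\mV\Yl_{-i}) = \omega_\itr(\yl_i,\Yl_{-i})$, which I abbreviate as $\omega$ (ties occur with probability zero under our data model and are immaterial for the distributional claim). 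I would then substitute into the residual update \eqref{eq:MPresorth}, using $\norm[2]{\mV\yl_\omega}^2 = \norm[2]{\yl_\omega}^2$ and $\transp{\mV}\mV = \mI$ to factor $\mV$ out of the rank-one projector, obtaining $\qs(\mV\yl_i,\mV\Yl_{-i}) = \mV(\mI - \yl_\omega\transp{\yl_\omega}/\norm[2]{\yl_\omega}^2)\vq_{\itr-1}(\yl_i,\Yl_{-i}) = \mV\qs(\yl_i,\Yl_{-i})$, which closes the induction.

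I expect this lemma to be essentially routine given the \ac{omp} template, and in fact the covariance step is \emph{simpler} for \ac{mp} than for \ac{omp}: the \ac{mp} residual in iteration $\itr$ depends only on the previous residual and the single atom $\yl_{\omega_\itr}$ selected in that iteration, so there is no accumulated index set $\Lambda_\itr$ to track and no least-squares projection $\Yl_{\Lambda_\itr}\pinv{(\Yl_{\Lambda_\itr})}$ to manipulate, as was needed in \eqref{eq:ResRotInv1}. The only point requiring a moment of care is that \ac{mp} may re-select a previously chosen atom; this, however, does not interfere with the induction, since the argument makes no use of whether $\omega_\itr$ is new or repeated. Accordingly, the main ``obstacle'' is a purely mechanical verification---that the \ac{mp}-specific update \eqref{eq:MPresorth} commutes with $\mV$ in the same way the \ac{omp} update did---rather than any genuine technical difficulty.
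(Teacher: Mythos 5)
Your proposal is correct and follows essentially the same route as the paper's proof: an induction establishing pathwise covariance $\qs(\mV\yl_i,\mV\Yl_{-i}) = \mV\qs(\yl_i,\Yl_{-i})$ (same selected index by unitarity of inner products, then the rank-one update commutes with $\mV$), followed by the same distributional argument as in \eqref{eq:ResDistInv1}--\eqref{eq:ResDistInv2}. Your observation that the \ac{mp} case is mechanically simpler than \ac{omp}---no accumulated index set or pseudoinverse to track---is also consistent with how the paper presents it.
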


\begin{proof}
The arguments employed in this proof are similar to those in the proof of the corresponding result for \ac{omp}, Lemma \ref{le:ResRotInv}, but the structure of the proof differs as the \ac{mp} residual $\qsl$ can only be expressed recursively, i.e., as a function of previous residuals. In contrast, the reduced \ac{omp} residual $\rsl$ can be written as the projection of $\yl_i$ onto the orthogonal complement of the span of the data points indexed by $\Lambda_s$. Throughout the proof, $\omega_s(\vx, \mD)$ denotes the index obtained by the \ac{mp} algorithm in iteration $\itr$ when applied to $\vx$ with the columns of $\mD$ as dictionary elements, and $\qs(\vx, \mD)$ is the corresponding residual.

We first establish results analogous to \eqref{eq:ResRotInv1} and \eqref{eq:ResRotInv2}. 
Again, the proof is effected through induction. We start with the inductive step and assume that 
\begin{equation}
\qspr(\mV \yl_i, \mV \Yl_{-i}) = \mV \qspr (\yl_i, \Yl_{-i}) \label{eq:RotInvAssqsl}
\end{equation}
 for fixed $\itr' < \send$, for all unitary matrices $\mV \in \reals^{m \times m}$. For iteration $s'+1$ we then have
\begin{align}
&\omega_{\itr'+1}(\mV \yl_i, \mV \Yl_{-i}) \nonumber \\
&\qquad\quad= \underset{j \in [n_\l] \backslash \{ i \}}{\arg \max} \abs{\innerprod{\mV \yl_j}{\q{\itr'}(\mV \yl_i, \mV \Yl_{-i}) }} \nonumber \\
&\qquad\quad= \underset{j \in [n_\l] \backslash \{ i \}}{\arg \max} \abs{\innerprod{ \yl_j}{\q{\itr'}(\yl_i, \Yl_{-i})}} \nonumber \\
&\qquad\quad= \omega_{\itr'+1}(\yl_i, \Yl_{-i}). \label{eq:MPselidxinv}
\end{align}
Now, using the shorthands $\omega_{\itr'+1} (\mV)$ and $\omega_{\itr'+1}$ for $\omega_{\itr'+1} (\mV \yl_i, \mV \Yl_{-i})$ and $\omega_{\itr'+1} (\yl_i, \Yl_{-i})$, respectively, we get
\begin{align}
&\q{\itr'+1}(\mV \yl_i, \mV \Yl_{-i}) = \q{\itr'}(\mV \yl_i, \mV \Yl_{-i}) \nonumber \\
&\quad-\innerprod{\! \q{\itr'}(\mV \yl_i, \mV \Yl_{-i})}{\frac{\mV \yl_{\omega_{\itr'+1}(\mV)}}{\norm[2]{\mV \yl_{\omega_{\itr'+1}(\mV)}}}} \! \frac{\mV \yl_{\omega_{\itr'+1}(\mV)}}{\norm[2]{\mV \yl_{\omega_{\itr'+1}(\mV)}}} \nonumber \\
&= \q{\itr'}(\mV \yl_i, \mV \Yl_{-i}) \nonumber \\
& \qquad\quad- \innerprod{\q{\itr'}(\mV \yl_i, \mV \Yl_{-i})}{\frac{\mV \yl_{\omega_{\itr'+1}}}{\norm[2]{\yl_{\omega_{\itr'+1}}}}} \frac{\mV \yl_{\omega_{\itr'+1}}}{\norm[2]{\yl_{\omega_{\itr'+1}}}} \nonumber \\
&= \mV \vast( \q{\itr'}(\yl_i, \Yl_{-i}) \nonumber \\
&\qquad\quad- \innerprod{\q{\itr'}( \yl_i, \Yl_{-i})}{\frac{ \yl_{\omega_{\itr'+1}}}{\norm[2]{\yl_{\omega_{\itr'+1}}}}} \frac{\yl_{\omega_{\itr'+1}}}{\norm[2]{\yl_{\omega_{\itr'+1}}}} \vast) \nonumber \\
&= \mV \q{\itr' + 1}(\yl_i, \Yl_{-i}), \label{eq:MPresrot}
\end{align}
where the second and third equality follow from \eqref{eq:MPselidxinv} and \eqref{eq:RotInvAssqsl}, respectively. 
Now, the base case is $\q{0}(\mV \yl_i, \mV \Yl_{-i}) = \mV \yl_i = \mV \q{0}(\yl_i, \Yl_{-i})$, and we therefore established that 
\begin{equation}
\q{\itr}(\mV \yl_i, \mV \Yl_{-i}) = \mV \q{\itr}(\yl_i, \Yl_{-i}),
\end{equation}
for all $\itr \leq \send$ and all unitary $\mV \in \reals^{m \times m}$.

Finally, repeating the steps leading from \eqref{eq:ResDistInv1} to \eqref{eq:ResDistInv2} for $\qs$ and $\qps$ instead of $\rs$ and $\rps$, respectively, yields the desired result. 
\end{proof}

We continue with the lower bound on $\PR{\smash{\Egt}}$. 
\begin{lemma}
\label{le:ResParaMPLB}
Let $n_\l > 1$. We have
\begin{align}
	&\PR{\Egt} =\mathrm{P}\! \vast[ \norm[2]{\qpsl} \nonumber \\
	& > \norm[2]{\yl_{i \pp}} \!\left(\frac{2}{3} - \sqrt{ \frac{3 \mpspar \log ((n_\l - 1)e / \mpspar)}{d_\l}} \right)\!, \, \forall \itr \leq \send \vast] \nonumber \\
	 & \hspace{5.7cm}\geq 1 - e^{-d_\l/18}. \label{eq:LeMPResParaLB}
\end{align}
\end{lemma}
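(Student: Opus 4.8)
The plan is to follow the proof of Lemma \ref{le:ResParaLB} essentially verbatim, the one genuinely new ingredient being the way the lower bound on $\norm[2]{\qpsl}$ is produced. For \ac{omp} that bound rested on the fact that $\rsl$ is itself the orthogonal projection of $\yl_i$ onto the orthogonal complement of the selected atoms (this is what powers the decomposition \eqref{eq:ResParaLB0}--\eqref{eq:ResParaLB1}), and this structure is lost for \ac{mp}, which orthogonalizes only against the atom chosen in the current iteration. The hard part will therefore be to replace that projection-decomposition argument by one that exploits merely the \emph{suboptimality} of the \ac{mp} coefficient vector, and to verify that the relevant cardinality is governed by the target sparsity level $\mpspar$ rather than by the (possibly larger) iteration count $\send$.

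First I would record the standard \ac{mp} invariant: a short induction on the updates \eqref{eq:MPcoeffcomp} and \eqref{eq:MPresorth} shows that after iteration $\itr$ the residual satisfies $\qsl = \yl_i - \Yl_{\Omega_\itr} \vb_{\Omega_\itr}$, where $\Omega_\itr \subseteq [n_\l] \backslash \{i\}$ is the set of \emph{distinct} indices selected so far and $\vb$ the associated coefficient vector. Under the hybrid stopping rule the number of distinct atoms never exceeds the target sparsity level, i.e., $\abs{\Omega_\itr} = \norm[0]{\vb} \leq \mpspar$ for all $\itr \leq \send$. Projecting onto $\cS_\l$ and using $\Pp \Yl_{\Omega_\itr} = \Yl_{\Omega_\itr \pp}$ gives $\qpsl = \yl_{i \pp} - \Yl_{\Omega_\itr \pp} \vb_{\Omega_\itr}$, which is $\yl_{i \pp}$ minus an element of $\range(\Yl_{\Omega_\itr \pp}) \subset \cS_\l$. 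The least-squares optimality of the orthogonal projection $\mP^\pp_{\Omega_\itr} \defeq \Yl_{\Omega_\itr \pp} \pinv{\Yl_{\Omega_\itr \pp}}$ onto this range then delivers the key inequality $\norm[2]{\qpsl} = \norm[2]{\yl_{i \pp} - \Yl_{\Omega_\itr \pp} \vb_{\Omega_\itr}} \geq \norm[2]{(\mI - \mP^\pp_{\Omega_\itr}) \yl_{i \pp}}$; in words, the \ac{mp} in-subspace residual is at least as large as the best orthogonal-projection residual attainable from the same atoms. Since this is exactly the quantity lower-bounded in the \ac{omp} proof, from this point on the two arguments coincide.

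It then remains to transcribe the tail of the proof of Lemma \ref{le:ResParaLB}. Because $\abs{\Omega_\itr} \leq \mpspar$, every $\Omega_\itr$ is contained in some $\Gamma'$ in the collection $\mc I \defeq \{\Gamma' \subset [n_\l] \backslash \{i\} \colon \abs{\Gamma'} = \mpspar\}$ (defined as in \eqref{eq:SmaxIdxSets} but with $\mpspar$ in place of $\smaxparlb$); the range inclusion $\range(\mP^\pp_{\Omega_\itr}) \subseteq \range(\mP^\pp_{\Gamma'})$ and the reverse triangle inequality give $\norm[2]{(\mI - \mP^\pp_{\Omega_\itr}) \yl_{i \pp}} \geq \norm[2]{\yl_{i \pp}} - \norm[2]{\mP^\pp_{\Gamma'} \yl_{i \pp}} = \varphi_{\Gamma'}$, whence $\norm[2]{\qpsl} \geq \min_{\Gamma' \in \mc I} \varphi_{\Gamma'}$ and therefore $\Egt \supseteq \tilde{\mc F}_7^{(\l,i)}$, with $\tilde{\mc F}_7^{(\l,i)}$ the event obtained from $\Fg$ by replacing $\smaxparlb$ with $\mpspar$. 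I would then bound $\PR{\tilde{\mc F}_7^{(\l,i)}}$ exactly as in \eqref{eq:ResParaProb11}--\eqref{eq:ResParaProb3}: the range of $\Atl_{\Gamma'}$ is uniformly distributed on the set of $\mpspar$-dimensional subspaces of $\reals^{d_\l}$, so conditioning on $\mP^\pp_{\Gamma'}$ and applying Borell's inequality (Lemma \ref{lem:borell}) controls $\norm[2]{\mP^\pp_{\Gamma'} \yl_{i \pp}} / \norm[2]{\yl_{i \pp}}$, and a union bound over the $\binom{n_\l-1}{\mpspar}$ sets of $\mc I$ together with the same choice of $\varepsilon$ yields $\PR{\Egt} \geq \PR{\tilde{\mc F}_7^{(\l,i)}} \geq 1 - e^{-d_\l/18}$. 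The mild side conditions needed here, namely $\mpspar \leq d_\l$ (so that $\log((n_\l-1)e/\mpspar) \geq 1$) and $\mpspar \leq n_\l - 1$ (so that $\mc I \neq \emptyset$), follow from the hypotheses on $\mpspar$ and $\rho_{\min}$ in Theorem \ref{th:SSCMP}.
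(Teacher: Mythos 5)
Your proposal is correct and follows essentially the same route as the paper's proof: you identify $\qpsl$ as $\yl_{i\pp}$ minus an element of $\range(\smash{\Yl_{\Omega_\itr\pp}})$ and lower-bound its norm by the orthogonal-projection residual $\norm[2]{(\mI - \mP^\pp_{\Omega_\itr})\yl_{i\pp}}$ (the paper obtains the identical inequality by applying the contraction $\mI - \mP^\pp_{\Omega_\itr}$ to $\Pp\qsl$ and noting it annihilates the correction term), after which both arguments enlarge $\Omega_\itr$ to a fixed $\Gamma' \in \mc I$ of cardinality $\mpspar$ and reuse the Borell-inequality/union-bound tail of Lemma~\ref{le:ResParaLB} verbatim. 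Your explicit remarks that $\abs{\Omega_\itr} \leq \mpspar$ is what the hybrid stopping rule buys, and that $\mpspar \leq \min\{d_\l, n_\l-1\}$ is inherited from the hypotheses of Theorem~\ref{th:SSCMP}, match the (more implicit) treatment in the paper.
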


\begin{proof}
We start by recalling that the reduced MP algorithm decomposes $\yl_i$ according to (see, e.g., \cite{mallat1993matching})
\begin{align}
\yl_i &= \sum_{s'=1}^{s} \innerprod{\qsprlm}{\frac{\yl_{\omega_{\itr'}}}{\norm[2]{\yl_{\omega_{\itr'}}}}} \frac{\yl_{\omega_{\itr'}}}{\norm[2]{\yl_{\omega_{\itr'}}}} + \qsl. \label{eq:MPresdecomp}
\end{align}
Denote by $\Omega_\itr$ the set containing the indices of the points in $\cY_\l \backslash \{ \yl_i \}$ selected during the first $s$ iterations, i.e.,
\begin{equation}
\Omega_{\itr} \defeq \{ \omega_{s'} \colon s' \in [s] \}. \label{eq:OmegaDef}
\end{equation}
Note that $\Omega_{s}$ may contain fewer than $s$ indices as reduced MP may select one or more points (from $\cY_\l \backslash \{ \yl_i \}$)\vspace{0.05cm} repeatedly. With $\mP^\pp_{\Omega_\itr} \defeq \Yl_{\Omega_\itr \pp} \pinv{\Yl_{\Omega_\itr \pp}}$, using \eqref{eq:MPresdecomp}, we get
\begin{align}
&\norm[2]{\Pp \qsl} \geq \norm[2]{(\mI - \mP^\pp_{\Omega_\itr}) \Pp \qsl} \nonumber \\
&\quad= \vast\|(\mI - \mP^\pp_{\Omega_\itr}) \Pp \vast( \yl_i \nonumber \\
& \qquad\qquad- \sum_{s'=1}^{s} \innerprod{\qsprlm}{\frac{\yl_{\omega_{\itr'}}}{\norm[2]{\yl_{\omega_{\itr'}}}}} \frac{\yl_{\omega_{\itr'}}}{\norm[2]{\yl_{\omega_{\itr'}}}} \vast) \vast\|_2 \nonumber \\
&\quad= \vast\| 
(\mI - \mP^\pp_{\Omega_\itr}) \yl_{i \pp} \nonumber \\
&\qquad\quad- (\mI - \mP^\pp_{\Omega_\itr}) \underbrace{\left( \sum_{s'=1}^{s} \innerprod{\qsprlm}{\frac{\yl_{\omega_{\itr'}}}{\norm[2]{\yl_{\omega_{\itr'}}}}} \frac{\yl_{\omega_{\itr'} \pp}}{\norm[2]{\yl_{\omega_{\itr'}}}} \right)}_{\in \range(\mP^\pp_{\Omega_\itr})}  \vast\|_2 \nonumber \\ 
&\quad= \norm[2]{(\mI - \mP^\pp_{\Omega_\itr}) \yl_{i \pp}} \nonumber \\
&\quad \geq \norm[2]{\smash{(\mI - \mP^\pp_{\Omega_\send})\yl_{i \pp}}}, \label{eq:qpslLB2} 
\end{align}
where the last inequality is by $\range(\mI - \mP^\pp_{\Omega_\send}) \subseteq \range(\mI - \mP^\pp_{\Omega_\itr})$, for $\itr \leq \send$. 
The proof is now completed by replacing $\Omega_\send$ in \eqref{eq:qpslLB2} by a fixed $\Gamma \in \mc I$ (recall the definition of $\mc I$ from \eqref{eq:SmaxIdxSets}, with $\smaxparlb$ in \eqref{eq:SmaxIdxSets} replaced by $\mpspar$) and by lower-bounding $\norm[2]{\smash{(\mI - \mP^\pp_{\Gamma})\yl_{i \pp}}}$ for all $\Gamma \in \mc I$ as in the proof of Lemma \ref{le:ResParaLB} (following the steps starting from \eqref{eq:ResParaLB2a}). 
\end{proof}

Next, we lower-bound $\PR{\smash{\Eft}}$.

\begin{lemma} \label{le:ResOrthMPUB} 
Set $\tilde a \defeq \min_{j \in [n_\l] \backslash\{ i \}} \norm[2]{\smash{\yl_{j \pp}}}$ and assume that $\mpspar \leq c_1 d_\l /\log(e(n_\l-1)/\mpspar)$ for a numerical constant $c_1 > 0$. Then, we have
\begin{align}
\PR{\Eft} &= \prob{\norm[2]{\qosl} \leq \norm[2]{\zl_{i \po}} + \frac{6 \sigma}{\tilde a} \norm[2]{\yl_i}, \forall \itr \leq \send} \nonumber \\
&\geq 1 - 2e^{-c_2 m} - 2e^{-c_3 d_\l}, \label{eq:LeResOrthMPUB} 
\end{align}
where $c_2, c_3 > 0$ are numerical constants.
\end{lemma}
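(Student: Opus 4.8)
The plan is to mirror the structure of the proof of Lemma \ref{le:ResOrthUB} for \ac{omp}, the essential difference being that the \ac{mp} residual $\qsl$ does not admit a closed-form orthogonal-projection expression and must instead be handled through the decomposition \eqref{eq:MPresdecomp}. The base case $\itr = 0$ is trivial, since $\qosl = \Po \yl_i = \zl_{i \po}$. For $1 \le \itr \le \send$, the idea is to read off from \eqref{eq:MPresdecomp} the compact form $\qsl = \yl_i - \Yl_{\Omega_\itr} \vc_\itr$, where $\Omega_\itr$ is the set of distinct indices selected up to iteration $\itr$ (see \eqref{eq:OmegaDef}) and $\vc_\itr$ is the coefficient vector obtained by aggregating the per-iteration updates. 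Projecting onto $\cS_\l^\po$ and using $\Po \Yl = \Zl_\po$ gives $\qosl = \zl_{i \po} - \Zl_{\Omega_\itr \po} \vc_\itr$, whence, by the triangle inequality and $\norm[2 \to 2]{\Po} \le 1$,
\[
\norm[2]{\qosl} \le \norm[2]{\zl_{i \po}} + \maxsingv{\Zl_{\Omega_\itr}} \norm[2]{\vc_\itr}.
\]

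The key new step, which replaces the control of $\norm[2 \to 2]{\pinv{\Yl_\Gamma}}$ used in the \ac{omp} argument, is to bound the coefficient norm $\norm[2]{\vc_\itr}$. Since $\Yl_{\Omega_\itr} \vc_\itr = \yl_i - \qsl$, the residual-energy monotonicity $\norm[2]{\qsl} \le \norm[2]{\yl_i}$ (a consequence of \cite[Eq.~13]{mallat1993matching}) yields $\norm[2]{\Yl_{\Omega_\itr} \vc_\itr} \le 2 \norm[2]{\yl_i}$, and hence $\norm[2]{\vc_\itr} \le 2 \norm[2]{\yl_i} / \minsingv{\Yl_{\Omega_\itr}} \le 2 \norm[2]{\yl_i} / \minsingv{\Atl_{\Omega_\itr}}$, where the last step reuses $\minsingv{\Yl_\Gamma} \ge \minsingv{\Atl_\Gamma}$ established in the \ac{omp} proof. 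This single additional factor of $2$ is exactly what turns the $3\sigma/\tilde a$ of Lemma \ref{le:ResOrthUB} into the $6\sigma/\tilde a$ claimed here.

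The remainder is then essentially identical to the \ac{omp} case. Decomposing $\Atl_{\Omega_\itr} = \mE_{\Omega_\itr} \mD_{\Omega_\itr}$ with $\minsingv{\mD_{\Omega_\itr}} \ge \tilde a$, and invoking the \ac{rip} via Lemma \ref{le:RIPSubgaussian}—with the hypothesis $\mpspar \le c_1 d_\l / \log(e(n_\l-1)/\mpspar)$ playing the role of the bound on $\itr_{\max}$—I would establish that $\maxsingv{\Zl_\Gamma} \le \tfrac{3}{2}\sigma$ and $\minsingv{\mE_\Gamma} \ge \tfrac{1}{2}$ hold simultaneously for all index sets $\Gamma$ of cardinality at most $\mpspar$, i.e.\ the events \eqref{eq:ZGammaSingVals} and \eqref{eq:BSingVals} with $\mc J$ enlarged to all such $\Gamma$. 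Because the stopping criterion $\norm[0]{\vb_j} \le \mpspar$ forces $\Omega_\itr$ into this family for every $\itr \le \send$, on the intersection of these two \ac{rip} events the bound
\[
\norm[2]{\qosl} \le \norm[2]{\zl_{i \po}} + \frac{2 \cdot \tfrac{3}{2}\sigma}{\tfrac{1}{2}\tilde a} \norm[2]{\yl_i} = \norm[2]{\zl_{i \po}} + \frac{6\sigma}{\tilde a} \norm[2]{\yl_i}
\]
holds uniformly in $\itr$. A union bound over the two \ac{rip} events then gives $\PR{\Eft} \ge 1 - 2e^{-c_2 m} - 2e^{-c_3 d_\l}$.

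The main obstacle is precisely the coefficient-norm bound: unlike \ac{omp}, where the residual is an orthogonal projection of $\yl_i$ so that $\minsingv{\Yl_\Gamma}$ and $\maxsingv{\Zl_\Gamma}$ can be inserted directly, for \ac{mp} there is no orthogonality between successively selected atoms and $\qsl$ is available only recursively. The clean resolution I would use is to bypass the recursion entirely, reading off $\Yl_{\Omega_\itr} \vc_\itr = \yl_i - \qsl$ from \eqref{eq:MPresdecomp}, so that the monotone decay of the residual norm supplies exactly the control needed; everything else carries over verbatim from the proof of Lemma \ref{le:ResOrthUB}.
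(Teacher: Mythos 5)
Your proposal is correct and follows essentially the same route as the paper's own proof: the same rewriting of \eqref{eq:MPresdecomp} as $\yl_i = \Yl_{\Omega_\itr}\vb_{\Omega_\itr} + \qsl$, the same coefficient-norm bound $\norm[2]{\vb_{\Omega_\itr}} \leq 2\norm[2]{\yl_i}/\minsingv{\smash{\Atl_{\Omega_\itr}}}$ obtained from the residual-energy monotonicity of \ac{mp} together with $\minsingv{\smash{\Yl_\Gamma}} \geq \minsingv{\smash{\Atl_\Gamma}}$, and the same reduction to the \ac{rip} events \eqref{eq:ZGammaSingVals} and \eqref{eq:BSingVals} over all index sets of cardinality at most $\mpspar$, with the extra factor of $2$ accounting for $6\sigma/\tilde a$ in place of $3\sigma/\tilde a$. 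No gaps.
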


\begin{proof}
We start by rewriting \eqref{eq:MPresdecomp} as
\begin{align}
\yl_i &= \Yl_{\Omega_{s}} \vb_{\Omega_{s}} + \qsl, \label{eq:MPresdecomp1}
\end{align}
where $\Omega_\itr$ was defined in \eqref{eq:OmegaDef} and $\vb_{\Omega_{s}}$ contains the coefficients of the representation of $\yl_i$ computed by \ac{mp}\vspace{0.05cm} according to \eqref{eq:MPcoeffcomp}, i.e., $[\vb]_\omega = \sum_{\itr' \colon \omega_{\itr'} = \omega} \innerprod{\smash{\yl_{\omega_{\itr'}}}}{\smash{\qsprlm}} / \norm[2]{\smash{\yl_{\omega_{\itr'}}}}^2$, $\omega \in \Omega_\itr$ (this is a direct consequence of \eqref{eq:MPcoeffcomp}; we do not reflect dependence of $\vb$ on $i$, $\l$ for expositional ease). 
Next, note that
\begin{align}
\minsingv{\Atl_{\Omega_s}} \norm[2]{\vb_{\Omega_s}} &\leq \minsingv{\Yl_{\Omega_s}} \norm[2]{\vb_{\Omega_s}} \nonumber \\
&\leq \norm[2]{\Yl_{\Omega_{s}} \vb_{\Omega_{s}}} =\norm[2]{\yl_i - \qsl} \nonumber \\
& \leq \norm[2]{\yl_i} + \norm[2]{\qsl} \leq 2 \norm[2]{\yl_i}, \label{eq:bOmegaUB}
\end{align}
where the first inequality is a consequence of $\norm[2]{\smash{\Yl_{\Omega_s}} \vv} \geq \norm[2]{\smash{\Pp \Yl_{\Omega_s}} \vv} = \norm[2]{\smash{\Pp \Ul \Atl_{\Omega_s} } \vv}  = \norm[2]{\smash{\Atl_{\Omega_s} \vv}}$,\vspace{0.08cm} for all $\vv \in \reals^{| \Omega_s |}$, and the last inequality follows from $\norm[2]{\smash{\qsl}} \leq \norm[2]{\smash{\yl_i}}$ \cite[Eq. 13]{mallat1993matching}. 
For $\minsingv{\Atl_{\Omega_s}} > 0$ (we will justify below that $\minsingv{\Atl_{\Omega_s}}$ is, indeed, bounded away from $0$ with high probability) 
it hence follows that $\norm[2]{\vb_{\Omega_s}} \leq (2/\minsingv{\Atl_{\Omega_s}}) \norm[2]{\smash{\yl_i}}$ and therefore, together with \eqref{eq:MPresdecomp1}, we get 
\begin{align}
\norm[2]{\qosl} &= \norm[2]{\yl_{i \po} - \Yl_{\Omega_s \po} \vb_{\Omega_s}} \nonumber \\
&\leq \norm[2]{\yl_{i \po}} + \norm[2]{\Yl_{\Omega_s \po} \vb_{\Omega_s}} \nonumber \\
&\leq \norm[2]{\yl_{i \po}} + \norm[2 \to 2]{\Yl_{\Omega_s \po}} \norm[2]{\vb_{\Omega_s}} \nonumber \\
&= \norm[2]{\zl_{i \po}} + \norm[2 \to 2]{\Zl_{\Omega_s \po}} \norm[2]{\vb_{\Omega_s}} \nonumber \\
&\leq \norm[2]{\zl_{i \po}} + \frac{2}{\minsingv{\Atl_{\Omega_s}}} \norm[2 \to 2]{\Zl_{\Omega_s \po}} \norm[2]{\yl_i}. \label{eq:QSorthUB} 
\end{align} 
Replacing $\Omega_\itr$ in \eqref{eq:QSorthUB} by a fixed $\Gamma \in \mc J$ (recall the definition of $\mc J$ from \eqref{eq:SIdxSets}, with $\itr_{\max}$ in \eqref{eq:SIdxSets} replaced by $\mpspar$),  \eqref{eq:QSorthUB} and \eqref{eq:ResOrthUB} are equal up to the factor $2$ in the second term of \eqref{eq:QSorthUB}. This factor-of-two difference stems from the update rule for $\qsl$ differing from that for $\rsl$ and leads to the difference in $c(\sigma)$ between Theorems \ref{th:SSCMP} and \ref{th:SSCOMP}. We proceed as in the proof of Lemma \ref{le:ResOrthMPUB} (starting from \eqref{eq:ResOrthUB-1}--\eqref{eq:ResOrthUB}) by establishing bounds on the tail probabilities of  $\norm[2 \to 2]{\smash{\Zl_{\Gamma \po}}}$ 
and $\minsingv{\Atl_{\Gamma}}$, for all $\Gamma \in \mc J$, via \eqref{eq:ZGammaSingVals} and \eqref{eq:BSingVals}, respectively, to obtain the result in Lemma \ref{le:ResOrthMPUB}. 
\end{proof}

\section{Proof of Theorem \ref{co:trueconnections}}  \label{sec:pfcor}

We prove the result for \ac{omp}. The proof for \ac{mp} follows simply by replacing the lower bound on $\norm[2]{\smash{\rpsl}}$ in $\Eg$ 
by the lower bound on \vspace{0.05cm}$\norm[2]{\smash{\qpsl}}$ in $\Egt$ (defined in \eqref{eq:qplbdef}), 
and by noting that $\PR{\smash{\Eg}} = \PR{\smash{\Egt}}$.

We only need to address the case
\begin{equation}
\frac{d_\l}{\log((n_\l-1)e)} \min \left\{\frac{1}{3} \left( \frac{2}{3} - \frac{\tau}{1 - \frac{3}{2}\frac{\sqrt{d_\l}}{\sqrt{m}} \sigma}\right)^2,c_s \right\} \geq 1,
\end{equation}
as otherwise there is nothing to prove. We start by noting that under the conditions of Theorem \ref{co:trueconnections} (which are identical to the conditions of Theorem \ref{th:SSCOMP} minus the condition $\itr_{\max} \leq \min_{\l \in [L]} \{ c_s d_\l / \log( (n_\l - 1) e / \itr_{\max}) \}$), it follows from the proof of Theorem \ref{th:SSCOMP} that, conditionally on $\Estar$ as defined in \eqref{eq:estar}, reduced \ac{omp} and \ac{omp} are guaranteed to select the same points to represent $\yl_i$ during the first $\lfloor c_s d_\l / \log( (n_\l - 1) e) \rfloor \geq \eqref{eq:tdlbcor}$ iterations, for all $i \in [n_\l]$, $\l \in [L]$. 
Therefore, conditionally on $\Estar$, for $\tau$ small enough reduced \ac{omp} will perform a number of iterations lower-bounded by \eqref{eq:tdlbcor}, for all $\yl_i$, $i \in [n_\l]$, $\l \in [L]$, which implies that the number of points from $\cY_\l \backslash \{ \yl_i \}$ selected by \ac{omp} is lower-bounded by \eqref{eq:tdlbcor} as well, for all $\yl_i$, $i \in [n_\l]$, $\l \in [L]$. Specifically, we will show that for $\tau \in [0, 2/3 - (\sqrt{d_{\max}}/\sqrt{m})\sigma]$ the number of reduced \ac{omp} iterations is lower-bounded by \eqref{eq:tdlbcor}.
This will be accomplished by establishing that for $\tau \in [0, 2/3 - (\sqrt{d_{\max}}/\sqrt{m})\sigma]$, on $\Estar$, we have $\norm[2]{\smash{\subsind{\vr}{\l}_{\itr_\tau}}} > \tau$ for all $\yl_i$, $i \in [n_\l]$, $\l \in [L]$, where 
\begin{equation}
\itr_\tau \defeq \left\lfloor \frac{d_\l}{\log((n_\l- 1)e)} \frac{1}{3}\left( \frac{2}{3} - \frac{\tau}{1 - \frac{3}{2}\frac{\sqrt{d_\l}}{\sqrt{m}} \sigma} \right)^2 \right\rfloor.
\end{equation}
Indeed, as the stopping criterion $\max_{\yl_j \in \cY_\l \backslash (\Lambda_{\itr}\cup \{\yl_i\})}   \left| \innerprod{\smash{\yl_j}}{\smash{\rsl}} \right| = 0$ is activated only after $\min\{m,n_\l-1\} > d_\l > \itr_\tau$ iterations 
(as the points in $\cY_\l$ are in general position w.p. $1$), 
$\norm[2]{\smash{\subsind{\vr}{\l}_{\itr_\tau}}} > \tau$ implies that reduced \ac{omp} performs at least $\itr_\tau$ iterations. On the event $\Ed \cap \Eg \supset \Estar$ ($\Ed$ and $\Eg$ are defined in \eqref{eq:DefEd} and \eqref{eq:resplbdef}, respectively), setting $\smaxparlb = \itr_\tau$ in $\Eg$, we have
\begin{align}
\norm[2]{\subsind{\vr}{\l}_{\smaxparlb}} &\geq \norm[2]{\subsind{\vr}{\l}_{\smaxparlb \pp}} \nonumber \\
&> \norm[2]{\yl_{i \pp}} \left(\frac{2}{3} - \sqrt{ \frac{3 \smaxparlb \log ((n_\l - 1)e / \smaxparlb)}{d_\l}} \right) \nonumber \\
&\geq \left(1 - \frac{3}{2}\frac{\sqrt{d_\l}}{\sqrt{m}} \sigma \right) \left(\frac{2}{3} - \sqrt{ \frac{3 \smaxparlb \log ((n_\l - 1)e / \smaxparlb)}{d_\l}} \right) \nonumber \\
&\geq 
 \left(1 - \frac{3}{2}\frac{\sqrt{d_\l}}{\sqrt{m}} \sigma \right) \Vast(\frac{2}{3} \nonumber \\
 &\quad -
 \Vast(
\left\lfloor \frac{d_\l}{\log((n_\l- 1)e)} \frac{1}{3}\left( \frac{2}{3} - \frac{\tau}{1 - \frac{3}{2}\frac{\sqrt{d_\l}}{\sqrt{m}} \sigma} \right)^2 \right\rfloor \nonumber \\
&\hspace{4cm} \cdot \frac{3 \log ((n_\l - 1)e)}{d_\l}\Vast)^{\frac12} \Vast) \nonumber \\
&\geq \left(1 - \frac{3}{2}\frac{\sqrt{d_\l}}{\sqrt{m}} \sigma \right) \left(\frac{2}{3} - \left( \frac{2}{3} - \frac{\tau}{1 - \frac{3}{2}\frac{\sqrt{d_\l}}{\sqrt{m}} \sigma} \right) \right) \nonumber \\
&= \tau,  \label{eq:sendlb}
\end{align}
where for the second inequality we used that $\norm[2]{\smash{\yl_{i \pp}}} \geq \norm[2]{\smash{\xl_i}} - \norm[2]{\smash{\zl_{i \pp}}} \geq 1 - 3\sqrt{d_\l}/(2\sqrt{m})$ on $\Ed$,\vspace{0.075cm} $\log ((n_\l - 1)e / \smaxparlb) \leq \log ((n_\l - 1)e)$, for $\smaxparlb \geq 1$, for the third inequality, and $\tau \leq 2/3 - (\sqrt{d_{\max}}/\sqrt{m})\sigma$ for the last inequality. This completes the proof.

\section{Supplementary notes} \label{sec:suppmat}

\begin{lemma}[{\cite[Lem. 5.24]{vershynin2012nonasym}}] Let $X_1,\ldots,X_n$ be independent centered sub-gaussian random variables (see footnote \ref{fn:subgaussian}). Then, $X = (X_1,\ldots,X_n)$ is a centered sub-gaussian random vector in $\reals^n$, and
\begin{equation}
\norm[\Psi_2]{X} \leq C \max_{i \in[n]} \norm[\Psi_2]{X_i},
\end{equation}
where $C$ is a numerical constant.
\end{lemma}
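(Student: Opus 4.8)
The plan is to reduce the vector statement to a uniform one-dimensional bound via the definition of the sub-gaussian norm of a random vector, $\norm[\Psi_2]{X} \defeq \sup_{\vx \in \US{n}} \norm[\Psi_2]{\innerprod{X}{\vx}}$ \cite[Def. 5.22]{vershynin2012nonasym}, where $\innerprod{X}{\vx} = \sum_{i=1}^n [\vx]_i X_i$. Centeredness of $X$ is immediate: for every $\vx$ we have $\mb E[\innerprod{X}{\vx}] = \sum_{i=1}^n [\vx]_i \, \mb E[X_i] = 0$ by linearity and the assumption that each $X_i$ is centered. It thus remains to bound $\norm[\Psi_2]{\innerprod{X}{\vx}}$ uniformly over $\vx \in \US{n}$.

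First I would fix $\vx \in \US{n}$ and note that $\innerprod{X}{\vx}$ is a sum of the independent, centered random variables $[\vx]_i X_i$, each sub-gaussian with $\norm[\Psi_2]{[\vx]_i X_i} = \abs{[\vx]_i} \, \norm[\Psi_2]{X_i}$ by homogeneity of the $\Psi_2$-norm. I would then invoke additivity of the squared sub-gaussian norm under independent centered summands \cite[Lem. 5.9]{vershynin2012nonasym}: there is a numerical constant $C_0$ with $\norm[\Psi_2]{\sum_i [\vx]_i X_i}^2 \leq C_0 \sum_i \norm[\Psi_2]{[\vx]_i X_i}^2$. Combining this with the homogeneity identity and bounding each term by the maximum gives
\begin{align*}
\norm[\Psi_2]{\innerprod{X}{\vx}}^2 &\leq C_0 \sum_{i=1}^n [\vx]_i^2 \, \norm[\Psi_2]{X_i}^2 \\
&\leq C_0 \Big( \max_{i \in [n]} \norm[\Psi_2]{X_i}^2 \Big) \sum_{i=1}^n [\vx]_i^2 = C_0 \max_{i \in [n]} \norm[\Psi_2]{X_i}^2,
\end{align*}
where the last equality uses $\sum_i [\vx]_i^2 = \norm[2]{\vx}^2 = 1$. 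Since the right-hand side does not depend on $\vx$, taking the supremum over $\vx \in \US{n}$, taking square roots, and setting $C \defeq \sqrt{C_0}$ completes the argument.

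The only substantive step is the additivity bound, so the main obstacle is really the proof of \cite[Lem. 5.9]{vershynin2012nonasym}, which I would include for self-containedness. I would argue through the moment-generating-function characterization of sub-gaussianity: writing $Y_i \defeq [\vx]_i X_i$, each centered $Y_i$ satisfies $\mb E[e^{t Y_i}] \leq e^{C_1 t^2 \norm[\Psi_2]{Y_i}^2}$ for all $t \in \reals$ (one direction of the equivalence of sub-gaussian properties, \cite[Lem. 5.5]{vershynin2012nonasym}), so by independence $\mb E[e^{t \sum_i Y_i}] = \prod_i \mb E[e^{t Y_i}] \leq e^{C_1 t^2 \sum_i \norm[\Psi_2]{Y_i}^2}$. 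This exhibits $\sum_i Y_i$ as sub-gaussian with proxy variance $\sum_i \norm[\Psi_2]{Y_i}^2$; converting the MGF estimate back into a $\Psi_2$-norm bound through the reverse direction of the same equivalence yields the inequality with an absolute constant $C_0$. The one point requiring care is that all constants appearing in the two directions of the equivalence are numerical---independent of $n$ and of $\vx$---so that the final $C$ is indeed a numerical constant.
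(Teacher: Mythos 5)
Your proof is correct; note that the paper does not prove this lemma at all---it is quoted directly from \cite[Lem.~5.24]{vershynin2012nonasym} in the supplementary notes---and your argument essentially reproduces the reference's own: reduce to the one-dimensional marginals $\innerprod{X}{\vx}$ via the definition of the vector sub-gaussian norm, then invoke the additivity of squared sub-gaussian norms for independent centered summands, itself established through moment-generating-function estimates. The constants in both directions of the equivalence of sub-gaussian properties are absolute, so the final $C$ is indeed a numerical constant as claimed.
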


\begin{theorem}[{\cite[Lem. 5.39]{vershynin2012nonasym}}] \label{thm:SupSingval}
Let $\mA$ be an $m \times n$ matrix whose rows 
are independent sub-gaussian isotropic random vectors \cite[Def. 5.19, Def. 5.22]{vershynin2012nonasym} in $\reals^n$. Then, for $t \geq 0$, w.p. at least $1-2 e^{-ct^2}$, we have
\begin{equation}
\sqrt{m}-C\sqrt{n}-t \leq \minsingv{\mA} \leq \maxsingv{\mA} \leq \sqrt{m} + C\sqrt{n} + t.
\end{equation}
Here $C=C_K, c=c_K > 0$ depend only on the sub-gaussian norm $K=\max_i \norm[\Psi_2]{\mA_{:,i}}$ of
the rows of $\mA$.
\end{theorem}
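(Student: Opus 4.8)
This is a restatement of \cite[Lem.~5.39]{vershynin2012nonasym}, so the plan is to reproduce the standard $\varepsilon$-net plus concentration argument. The object to control is the symmetric matrix $\mB \defeq \frac{1}{m}\transp{\mA}\mA - \mI \in \reals^{n\times n}$, whose operator norm is exactly $\norm[2\to 2]{\mB} = \max_{\vv \in \US{n}} \big|\frac{1}{m}\norm[2]{\mA\vv}^2 - 1\big|$, since $\innerprod{\mB\vv}{\vv} = \frac{1}{m}\norm[2]{\mA\vv}^2 - 1$ for $\vv \in \US{n}$. First I would fix a $1/4$-net $\mc N$ of $\US{n}$ with $\lvert \mc N \rvert \leq 9^n$ (standard volumetric bound) and invoke the net-approximation lemma for symmetric matrices \cite[Lem.~5.4]{vershynin2012nonasym}, giving $\norm[2\to 2]{\mB} \leq 2 \max_{\vv \in \mc N} \lvert \innerprod{\mB\vv}{\vv} \rvert = 2\max_{\vv\in\mc N}\big|\frac{1}{m}\norm[2]{\mA\vv}^2 - 1\big|$. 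This replaces the supremum over the sphere by a maximum over finitely many points.

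Next I would establish a pointwise concentration bound. For each fixed $\vv \in \US{n}$, write $\norm[2]{\mA\vv}^2 = \sum_{i=1}^m \innerprod{\mA_{i,:}}{\vv}^2$ in terms of the rows $\mA_{i,:}$. Because the rows are independent, isotropic, and sub-gaussian, each $\innerprod{\mA_{i,:}}{\vv}$ is a centered sub-gaussian scalar with $\norm[\Psi_2]{\innerprod{\mA_{i,:}}{\vv}}\leq K$ and, by isotropy, $\EX{\innerprod{\mA_{i,:}}{\vv}^2}=1$. Hence the summands $\innerprod{\mA_{i,:}}{\vv}^2 - 1$ are independent, centered, and sub-exponential with $\norm[\Psi_1]{\cdot}$ controlled by $K^2$. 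Bernstein's inequality for sub-exponential sums then yields $\PR{\big|\frac{1}{m}\norm[2]{\mA\vv}^2 - 1\big| \geq \eta} \leq 2\exp\big(-c_1 \min\{\eta^2,\eta\}\, m\big)$ for a constant $c_1 = c_1(K)>0$.

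I would then union-bound this estimate over $\mc N$ and absorb the cardinality. Choosing $\eta = \max\{\delta,\delta^2\}$ with $\delta \defeq C\sqrt{n}/\sqrt m + t/\sqrt m$ and $C$ large enough relative to $\log 9$ and $c_1$ makes $c_1\min\{\eta^2,\eta\}\,m$ at least $n\log 9 + c\, t^2$, so that $9^n\cdot 2\exp(-c_1\min\{\eta^2,\eta\}\,m) \leq 2 e^{-c t^2}$. On the complementary event $\norm[2\to 2]{\mB} \leq 2\eta$, and after rescaling constants this reads $\big|\frac{1}{m}\norm[2]{\mA\vv}^2 - 1\big| \leq \max\{\delta,\delta^2\}$ for all $\vv\in\US{n}$. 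Taking the minimizing and maximizing $\vv$ gives $\big|\minsingv{\mA}^2/m - 1\big| \leq \max\{\delta,\delta^2\}$ and $\big|\maxsingv{\mA}^2/m - 1\big| \leq \max\{\delta,\delta^2\}$. Finally I would convert these quadratic-form bounds into the claimed singular-value bounds exactly as in the RIP-to-singular-values step already used in this paper (see \eqref{eq:RIPSingVals}): the elementary inequality $\lvert z-1\rvert \leq \lvert z^2-1\rvert$ for $z\geq 0$ together with \cite[Lem.~5.36]{vershynin2012nonasym} turns the above into $1-\delta \leq \minsingv{\mA}/\sqrt m \leq \maxsingv{\mA}/\sqrt m \leq 1+\delta$, i.e.\ $\sqrt m - C\sqrt n - t \leq \minsingv{\mA} \leq \maxsingv{\mA} \leq \sqrt m + C\sqrt n + t$.

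The main obstacle is the bookkeeping of constants in the union-bound step, so that the $9^n$ net cardinality is dominated by the Bernstein tail while retaining the clean $2e^{-c t^2}$ form; this is precisely where the two-regime $\min\{\eta^2,\eta\}$ behaviour of the sub-exponential bound and the quadratic choice $\eta=\max\{\delta,\delta^2\}$ must be dovetailed. The one conceptual (as opposed to purely computational) ingredient is verifying that the squared projections $\innerprod{\mA_{i,:}}{\vv}^2$ are genuinely sub-exponential with norm of order $K^2$, which rests on the isotropy of the rows (for the unit mean) and the sub-gaussianity of the rows (for the tail), together with the composition rule that the square of a sub-gaussian variable is sub-exponential.
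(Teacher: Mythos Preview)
The paper does not supply a proof for this statement: it is listed in the ``Supplementary notes'' appendix purely as a citation of \cite[Lem.~5.39]{vershynin2012nonasym}, with no accompanying argument. Your proposal correctly reproduces the standard $\varepsilon$-net plus Bernstein argument from Vershynin's notes, which is exactly the proof the paper is deferring to, so there is nothing to compare.
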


\begin{theorem}[{\cite[Cor. 5.35]{vershynin2012nonasym}}] \label{thm:SupGaussSingval}
Let $\mA$ be an $m \times n$ matrix with i.i.d. standard normal entries. Then, for $t \geq 0$, w.p. at least $1-2 e^{-t^2/2}$, we have
\begin{equation}
\sqrt{m}- \sqrt{n}-t \leq \minsingv{\mA} \leq \maxsingv{\mA} \leq \sqrt{m} +  \sqrt{n} + t.
\end{equation}
\end{theorem}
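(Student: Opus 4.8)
The plan is to establish the sharp two-sided bound by combining a Gaussian comparison inequality, which pins down the expectations of the extreme singular values, with Gaussian concentration of measure, which controls their fluctuations. First I would record the variational descriptions
\[
\maxsingv{\mA} = \max_{\vv \in \US{n}} \norm[2]{\mA \vv} = \max_{\vu \in \US{m},\, \vv \in \US{n}} \innerprod{\mA \vv}{\vu}, \qquad \minsingv{\mA} = \min_{\vv \in \US{n}} \max_{\vu \in \US{m}} \innerprod{\mA \vv}{\vu},
\]
so that both quantities are extremal values of the centered Gaussian process $X_{\vu,\vv} \defeq \innerprod{\mA \vv}{\vu} = \sum_{i,j} \mA_{ij} [\vu]_i [\vv]_j$, which has unit variance, $\EX{X_{\vu,\vv}^2} = \norm[2]{\vu}^2 \norm[2]{\vv}^2 = 1$, on the product of the two unit spheres.

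Next I would bound the expectations via a Gaussian min--max comparison. Introducing the auxiliary process $Y_{\vu,\vv} \defeq \innerprod{\vh}{\vu} + \innerprod{\vg}{\vv}$ with independent $\vh \sim \normdist{\mathbf 0}{\mI_m}$ and $\vg \sim \normdist{\mathbf 0}{\mI_n}$, one computes the increments $\EX{(X_{\vu,\vv} - X_{\vu',\vv'})^2}$ and $\EX{(Y_{\vu,\vv} - Y_{\vu',\vv'})^2}$ and checks the comparison hypotheses, separating the cases $\vv = \vv'$ and $\vv \neq \vv'$. Gordon's inequality then delivers
\[
\EX{\minsingv{\mA}} = \EX{\min_{\vv} \max_{\vu} X_{\vu,\vv}} \geq \EX{\min_{\vv} \max_{\vu} Y_{\vu,\vv}} = \EX{\norm[2]{\vh}} - \EX{\norm[2]{\vg}},
\]
while the elementary Sudakov--Fernique comparison for the plain maximum gives
\[
\EX{\maxsingv{\mA}} = \EX{\max_{\vu,\vv} X_{\vu,\vv}} \leq \EX{\max_{\vu,\vv} Y_{\vu,\vv}} = \EX{\norm[2]{\vh}} + \EX{\norm[2]{\vg}}.
\]
Writing $\gamma_k \defeq \EX{\norm[2]{\vg}}$ for $\vg \sim \normdist{\mathbf 0}{\mI_k}$, Jensen's inequality yields $\gamma_k \leq \sqrt{k}$, hence $\EX{\maxsingv{\mA}} \leq \sqrt{m} + \sqrt{n}$; and since $k \mapsto \sqrt{k} - \gamma_k$ is nonincreasing, one has $\gamma_m - \gamma_n \geq \sqrt{m} - \sqrt{n}$ for $m \geq n$, whence $\EX{\minsingv{\mA}} \geq \sqrt{m} - \sqrt{n}$.

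The final step passes from expectations to tails via concentration. Both maps $\mA \mapsto \minsingv{\mA}$ and $\mA \mapsto \maxsingv{\mA}$ are $1$-Lipschitz with respect to the Frobenius norm on $\reals^{m \times n} \cong \reals^{mn}$, since Weyl's perturbation inequality gives $\abs{\sigma_k(\mA) - \sigma_k(\mB)} \leq \norm[2\to 2]{\mA - \mB} \leq \norm[F]{\mA - \mB}$ for every $k$. Identifying $\mA$ with a standard Gaussian vector in $\reals^{mn}$ and applying the Gaussian concentration bound (Theorem~\ref{th:ConMeasure}) to each of these Lipschitz functions yields
\[
\PR{\maxsingv{\mA} > \EX{\maxsingv{\mA}} + t} \leq e^{-t^2/2}, \qquad \PR{\minsingv{\mA} < \EX{\minsingv{\mA}} - t} \leq e^{-t^2/2}.
\]
Substituting the expectation bounds $\EX{\maxsingv{\mA}} \leq \sqrt{m} + \sqrt{n}$ and $\EX{\minsingv{\mA}} \geq \sqrt{m} - \sqrt{n}$ and taking a union bound over the two tail events then gives the claim with probability at least $1 - 2e^{-t^2/2}$. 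The main obstacle is the Gaussian comparison step: it is precisely Gordon's inequality that produces the sharp constant $\sqrt{n}$, rather than the $C\sqrt{n}$ obtained from the cruder $\varepsilon$-net argument underlying Theorem~\ref{thm:SupSingval}, and verifying its hypotheses for the min--max functional $\minsingv{\mA}$ requires care, since the direct increment comparison that suffices for $\maxsingv{\mA}$ is not available in the same direction for the minimum. By contrast, the Lipschitz bound and the concentration/union-bound steps are routine.
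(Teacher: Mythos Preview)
The paper does not prove this statement: it appears in the ``Supplementary notes'' appendix purely as a citation to \cite[Cor.~5.35]{vershynin2012nonasym}, with no argument given. Your proposal reproduces precisely the standard proof from that reference---Gordon's Gaussian min--max comparison to control the expectations of $\minsingv{\mA}$ and $\maxsingv{\mA}$, followed by Gaussian concentration (Theorem~\ref{th:ConMeasure}) for the $1$-Lipschitz maps $\mA \mapsto \sigma_{\min/\max}(\mA)$ and a union bound---so there is nothing to contrast: your route \emph{is} the cited one, and it is correct. The only step worth flagging is the passage from $\gamma_m - \gamma_n$ to $\sqrt{m} - \sqrt{n}$: your monotonicity claim for $k \mapsto \sqrt{k} - \gamma_k$ is true (it follows, e.g., from the recurrence $\gamma_k \gamma_{k+1} = k$ together with $\gamma_k \leq \sqrt{k}$), but it deserves a line of justification since it is the one place where the sharp constant could be lost.
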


\bibliography{subspaceclustering.bib}

\end{document}